\newcommand{\nbb}{\mathbb{N}}
\newcommand{\hcal}{\mathcal{H}}
\newcommand{\diag}{\text{diag}}
\newcommand{\fcal}{\mathcal{F}}
\newcommand{\xcal}{\mathcal{X}}
\newcommand{\zcal}{\mathcal{Z}}
\newcommand{\ycal}{\mathcal{Y}}
\newcommand{\ebb}{\mathbb{E}}
\newcommand{\rbb}{\mathbb{R}}
\newcommand{\ncal}{\mathcal{N}}
\newcommand{\bv}{\mathbf{v}}
\newcommand{\bw}{\mathbf{w}}
\newcommand{\bt}{\mathbf{t}}
\newcommand{\bx}{\mathbf{x}}
\newcommand{\bu}{\mathbf{u}}
\newcommand{\bz}{\mathbf{z}}
\newcommand{\be}{\mathbf{e}}
\newcommand{\tr}{\mathrm{tr}}
\newcommand{\fat}{\mathrm{fat}}
\newcommand{\bp}{\tilde{\bm{\phi}}}
\newcommand{\inn}[1]{\langle#1\rangle}
\newtheorem{theorem}{Theorem}
\newtheorem{lemma}[theorem]{Lemma}
\newtheorem{proposition}[theorem]{Proposition}
\newtheorem{corollary}[theorem]{Corollary}
\theoremstyle{definition}
\newtheorem{definition}{Definition}
\newtheorem{example}{Example}
\theoremstyle{definition}
\newtheorem{remark}{Remark}
\title{Data-dependent Generalization Bounds for Multi-class Classification}
\author{Yunwen Lei\thanks{Y. Lei was with Department of Mathematics, City University of Hong Kong, Kowloon, Hong Kong, China. He is now with Department of Computer Science and Engineering, Southern University of Science and Technology, Shenzhen, China (e-mail: leiyw@sustc.edu.cn).},\;
        \"Ur\"un Dogan\thanks{U. Dogan is with Microsoft Research, Cambridge CB1 2FB, UK (e-mail: udogan@microsoft.com).},\; 
        Ding-Xuan Zhou\thanks{D.-X. Zhou is with Department of Mathematics, City University of Hong Kong, Kowloon, Hong Kong, China (e-mail: mazhou@cityu.edu.hk).}\;   
        and~Marius~Kloft\thanks{M. Kloft is with Department of Computer Science,  University of Kaiserslautern, Kaiserslautern, Germany (e-mail: kloft@cs.uni-kl.de).}}
\date{}
\begin{document}
\maketitle
\begin{abstract}

In this paper, we study \emph{data-dependent} generalization error bounds exhibiting a mild dependency on the number of classes,
making them suitable for multi-class learning with a large number of label classes.  The bounds generally hold for empirical
multi-class risk minimization algorithms using an arbitrary norm as regularizer.
 Key to our analysis are new structural results for multi-class Gaussian complexities and empirical $\ell_\infty$-norm
covering numbers, which exploit  the Lipschitz continuity of the loss function with respect to the $\ell_2$-
and $\ell_\infty$-norm, respectively. We establish data-dependent error bounds
in terms of complexities of a linear function class defined on a finite set induced by training examples, for which we show tight
lower and upper bounds.  We apply the  results to several prominent
multi-class learning machines, exhibiting a tighter dependency on the number of classes than the state of the art.
 For instance, for the multi-class SVM by Crammer and Singer (2002), we obtain a data-dependent bound with a logarithmic dependency
which significantly improves the previous square-root dependency.
Experimental results are reported to verify the effectiveness of our theoretical findings.

\medskip
\textbf{Keywords}: Multi-class classification, Generalization error bounds, Covering numbers, Rademacher complexities, Gaussian complexities.
\end{abstract}

\section{Introduction}

Multi-class learning is a classical problem in machine learning \citep{vapnik1998statistical}.
The outputs here stem from a finite set of categories (\emph{classes}),
and the aim is to classify each input into one of several possible target classes \citep{har2002constraint,hsu2002comparison,dogan2016unified}.
Classical applications of multi-class classification include handwritten optical character recognition,
where the system learns to automatically interpret handwritten characters \citep{kato1999handwritten},
part-of-speech tagging, where each word in a text is annotated with part-of-speech tags \citep{voutilainen2003part},
and image categorization, where predefined categories are associated with digital images \citep{deng2009imagenet,binder2012taxonomies},
to name only a few.

Providing a theoretical framework of multi-class learning algorithms is a fundamental task in statistical learning theory \citep{vapnik1998statistical}.
Statistical learning theory aims to ensure formal guarantees safeguarding the performance of learning algorithms,
often in the form of generalization error bounds \citep{mohri2012foundations}. Such bounds may lead to improved understanding of commonly used empirical practices and
spur the development of novel learning algorithms (``Nothing is more practical than a good theory'' \cite{vapnik1998statistical}).

Classic generalization bounds for multi-class learning scale rather unfavorably (like quadratic, linear, or square root at best) in the number of classes
\citep{koltchinskii2002empirical,guermeur2002combining,mohri2012foundations}.
This may be because the standard theory has been constructed without the need of having a large number of label classes in mind
as many classic multi-class learning problems consist only of a small number of classes, indeed.
For instance, the historically first multi-class dataset---\texttt{Iris}---\citep{fisher1936use}---contains solely three classes,
the MNIST dataset \citep{lecun1998mnist} consists of 10 classes,
and most of the datasets in the popular UCI corpus \citep{asuncion2007uci} contain up to several dozen classes.

However, with the advent of the big data era, multi-class learning problems---such as text or image classification \citep{deng2009imagenet,partalas2015lshtc}---can involve tens or hundreds of thousands of classes. 
Recently, there is a subarea of machine learning studying classification problems involving an extremely large number of classes
(such as the ones mentioned above) called \emph{eXtreme Classification} (XC)
\citep{extremeWS2013}.
Several algorithms have recently been proposed to speed up the training or improve the prediction accuracy
in classification problems with many classes \citep{varadarajan2015efficient,bhatia2015sparse,bengio2010label,partalas2015lshtc,beygelzimer2009conditional,sedhai2015hspam14,jain2016extreme,babbar2016learning,prabhu2014fastxml,alber2017distributed,babbar2016tersesvm}.

However, there is still a discrepancy between \emph{algorithms} and \emph{theory} in classification with many classes,
as standard statistical learning theory is void in the large number of classes scenario \citep{extremeWS2015}.
With the present paper we want to contribute toward a \emph{better theoretical understanding} of multi-class classification with many classes.
This theoretical understanding can provide theoretical grounds to the commonly used empirical practices in classification
with many classes and lead to insights that may be used to guide the design of new learning algorithms.

Note that the present paper focuses on \emph{multi-class} learning.
Recently, there has been a growing interest in \emph{multi-label} learning.
The difference in the two scenarios is that each instance is associated with exactly one label class (in the multi-class case) or multiple classes (in the multi-label case), respectively.
While the present analysis is tailored to the multi-class learning scenario, it
may serve as a starting point for subsequent analysis of the multi-label learning scenario.

\subsection{Contributions in a Nutshell\label{sec:nutshell}}

We build the present journal article upon our previous conference paper published at NIPS 2015 \citep{lei2015multi},
where we propose a multi-class support vector machine (MC-SVM) using block $\ell_{2,p}$-norm regularization,
for which we prove data-dependent generalization bounds based on Gaussian complexities (GCs).

While the previous analysis employed the margin-based loss, in the present article, we generalize the  GC-based data-dependent analysis to general
loss functions that are Lipschitz continuous with respect to (w.r.t.) a variant of the $\ell_2$-norm .
Furthermore, we develop a new approach to derive data-dependent bounds
based on empirical covering numbers (CNs) to capture the Lipschitz continuity of loss functions w.r.t.
the $\ell_\infty$-norm with a moderate Lipschitz constant, which is \emph{not} studied in the conference version
of this article. For both two approaches, our data-dependent error bounds can be stated in terms
of complexities of a linear function class defined only on a finite set induced by training examples,
for which we give lower and upper bounds matching up to a constant factor in some cases.
We present examples to show that each of these two approaches has its advantages and may
outperform the other by inducing tighter error bounds for some specific MC-SVMs.

As applications of our theory, we show error bounds for several prominent multi-class learning algorithms: multinomial logistic regression \citep{bishop2006pattern}, top-$k$ MC-SVM \citep{lapin2015top}, $\ell_p$-norm MC-SVMs \citep{lei2015multi},
and several classic MC-SVMs \citep{crammer2002algorithmic,weston1998multi,lee2004multicategory}.
For all of these methods, we show error bounds with an improved dependency on the number of classes over the state of the art.
For instance, the best known bounds for multinomial logistic regression and the MC-SVM by \citet{crammer2002algorithmic}
scale square root in the number of classes. We improve this dependency to be \emph{logarithmic}.
This gives strong theoretical grounds for using these methods in classification with many classes.

We develop a novel algorithm to train $\ell_p$-norm MC-SVMs \citep{lei2015multi} and report experimental results
to verify our theoretical findings and their applicability to model selection.

\section{Related Work and Contributions}\label{sec:comparison}

In this section, we discuss related work and outline the main contributions of this paper.

\subsection{Related Work}

In this subsection, we recapitulate the state of the art in multi-class learning theory.

\subsubsection{Related Work on Data-dependent Bounds}\label{sec:related_data}

Existing error bounds for multi-class learning can be classified into two groups:  \emph{data-dependent} and \emph{data-independent} error bounds.
Both types of bounds are often based on the assumption that the data is realized from independent and identically distributed random variables.
However, this assumption can be relaxed to weakly dependent time series, for which \citet{mohri2009rademacher} and \citet{steinwart2009learning}
show data-dependent and -independent generalization bounds, respectively.

\emph{Data-dependent} generalization error bounds refer to bounds that can be evaluated on training samples
and thus can capture properties of the distribution that has generated the data \citep{mohri2012foundations}.
Often these bounds built on the empirical Rademacher complexity (RC) \citep{koltchinskii2001rademacher,bartlett2002rademacher,mendelson2002rademacher},
which can be used in model selection 
and for the construction of new learning algorithms \citep{cortes2013learning}.

The investigation of data-dependent error bounds for multi-class learning is initiated, to our best knowledge, by \citet{koltchinskii2002empirical},
who give the following structural result on RCs:
given a set $H=\{h=(h_1,\ldots,h_c)\}$ of vector-valued functions and training examples $\bx_1,\ldots,\bx_n$, it holds
\begin{equation}\label{Rademacher-maximum-lem-8-1}
  \ebb_{\bm{\epsilon}}\sup_{h\in H}\sum_{i=1}^{n}\epsilon_i\max\big\{h_1(\bx_i),\ldots,h_c(\bx_i)\big\}\leq \sum_{j=1}^{c}\ebb_{\bm{\epsilon}}\sup_{h\in H}\sum_{i=1}^{n}\epsilon_ih_j(\bx_i).
\end{equation}
Here, $\epsilon_1,\ldots,\epsilon_n$ denote independent Rademacher variables (i.e., taking values $+1$ or $-1$, with equal probability),
and $\ebb_{\bm{\epsilon}}$  denotes the conditional expectation operator removing the randomness coming from the variables $\epsilon_1,\ldots,\epsilon_n$.

In much subsequent theoretical work on multi-class learning, the above result is used as a starting point,
by which the maximum operator involved in multi-class hypothesis classes (Eq. \ref{Rademacher-maximum-lem-8-1}, left-hand side) can be removed \citep{crammer2002algorithmic,mohri2012foundations}.
Applying the result leads to a simple sum of $c$ many RCs (Eq. \eqref{Rademacher-maximum-lem-8-1}, right-hand side),
each of which can be bounded using standard theory \citep{bartlett2002rademacher}.
This way, \citet{koltchinskii2002empirical}, \citet{cortes2013multi}, and \citet{mohri2012foundations}
derive multi-class generalization error bounds that exhibit a quadratic dependency on the number of classes,
which \citet{kuznetsov2014multi} improve to a linear one.

\begin{figure}[!h]
\centering
\begin{tikzpicture}
\draw[color=blue!70, fill=gray!30, very thick] (1,1) -- (-1,1) -- (-1,-1)-- (1,-1) -- (1,1);
\draw[color=red!70, fill=white!15, very thick] (0,0) circle(1);
\draw[->,thick] (-1.5,0)--(1.5,0) node[right]{$w_1$};
\draw[->,thick] (0,-1.5)--(0,1.55) node[above]{$w_2$};
\node [below right] at (1,0) {1};
\node [above right] at (0,1) {1};
\node [below right] at (0,-1) {-1};
\node [below left] at (-1,0) {-1};
\draw[ultra  thick,color=black] (1,0) -- (1,0.1);
\draw[ultra  thick,color=black] (-1,0) -- (-1,0.1);
\draw[ultra  thick,color=black] (0,1) -- (0.1,1);
\draw[ultra  thick,color=black] (0,-1) -- (0.1,-1);
\end{tikzpicture}
\caption{Illustration why Eq. \eqref{Rademacher-maximum-lem-8-1} is loose.
Consider a $1$-dimensional binary classification problem with  hypothesis class $H$ consisting of functions
mapping $x\in\mathbb R$ to $\max(h_1(x),h_2(x))$, where $h_j(x)=w_jx$ for $j=1,2$.
Assume the class is regularized through the constraint $\|(w_1,w_2)\|_2\leq1$,
so the left-hand side of the inequality \eqref{Rademacher-maximum-lem-8-1}
involves a supremum over the $\ell_2$-norm constraint $\|(w_1,w_2)\|_2\leq1$.
In contrast, the right-hand side of \eqref{Rademacher-maximum-lem-8-1} has individual suprema for $w_1$ and $w_2$ (no coupling anymore),
resulting in a supremum over the $\ell_\infty$-norm constraint $\|(w_1,w_2)\|_\infty\leq1$.
Thus applying Eq. \eqref{Rademacher-maximum-lem-8-1} enlarges the size of constraint set by the area that is shaded in the figure,
which grows as $O(\sqrt{c})$.
In the present paper, we show a proof technique to elevate this problem,
thus resulting in an improved bound (tighter by a factor of $\sqrt{c}$).
\label{fig:coupling}}
\end{figure}
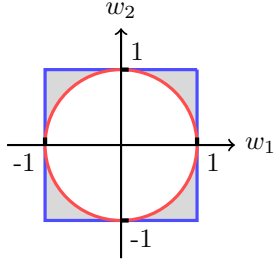

However, the reduction \eqref{Rademacher-maximum-lem-8-1} comes at the expense of at least a linear dependency on the number of classes $c$,
coming from the sum in Eq. \eqref{Rademacher-maximum-lem-8-1} (right-hand side), which consists of $c$ many terms.
We show in this paper that this linear dependency can oftentimes be suboptimal because
\eqref{Rademacher-maximum-lem-8-1} does not take into account the coupling among the classes. 
To understand why, it is illustrative to consider the example of MC-SVM by \citet{crammer2002algorithmic},
which uses an $\ell_2$-norm constraint
\begin{equation}\label{eq:contr1}
 \big\|\big(h_1,\ldots,h_c\big)\big\|_2\leq\Lambda
\end{equation}
to couple the components $h_1,\ldots,h_c$.
The problem with Eq. \eqref{Rademacher-maximum-lem-8-1} is that it decouples the components, resulting in the constraint $\big\|\big(h_1,\ldots,h_c\big)\big\|_\infty\leq\Lambda$,
which---as illustrated in Figure~\ref{fig:coupling}---is a poor approximation of \eqref{eq:contr1}.

In our previous work \citep{lei2015multi}, we give a structural result addressing this shortcoming
and tightly preserving the constraint defining the hypothesis class.
Our result is based on the so-called GC \citep{bartlett2002rademacher},
a notion similar but yet different to the RC.
The difference in the two notions is that RC and GC are the supremum of a Rademacher and Gaussian process, respectively.

The core idea of our analysis is that we exploit a comparison inequality for the suprema of Gaussian processes known as \emph{Slepian's Lemma} \citep{slepian1962one},
by which we can remove, from the GC, the maximum operator that occurs in the definition of the hypothesis class, thus preserving the
above mentioned coupling---we call the supremum of the resulting Gaussian process the \emph{multi-class Gaussian complexity}.

Using our structural result, we obtain in \cite{lei2015multi} a data-dependent error bound for \cite{crammer2002algorithmic} that
exhibits---for the first time---a sublinear (square root) dependency on the number of classes.
When using a block $\ell_{2,p}$-norm constraint (with $p$ close to $1$), rather than an $\ell_2$-norm one, one can reduce this dependency to be \emph{logarithmic}, making the analysis appealing for classification with many classes. 

We note that, addressing the same need, the following structural result \citep{maurer2016vector,cortes2016structured} appear since the publication of our previous work \citep{lei2015multi}:
\begin{equation}\label{structural-rademacher}
  \ebb_{\bm{\epsilon}}\sup_{h\in H}\sum_{i=1}^{n}\epsilon_if_i(h(\bx_i))\leq \sqrt{2}L\ebb_{\bm{\epsilon}}\sup_{h\in H}\sum_{i=1}^{n}\sum_{j=1}^{c}\epsilon_{ij}h_j(\bx_i),
\end{equation}
where $f_1,\ldots,f_n$ are $L$-Lipschitz continuous w.r.t. the $\ell_2$-norm.

For the MC-SVM by \citet{crammer2002algorithmic}, the above result leads to the same favorable square root dependency on the number of classes as our previous result in \cite{lei2015multi}.
We note, however, that the structural result \eqref{structural-rademacher} requires $f_i$ to be Lipschitz continuous w.r.t. the $\ell_2$-norm,
while some multi-class loss functions \citep{weston1998multi,jenssen2012scatter,lapin2015top} are Lipschitz continuous with a moderate Lipschitz constant,
when choosing a more appropriate norm.
In these cases, the analysis given in the present paper improves not only the classical results obtained through \eqref{Rademacher-maximum-lem-8-1},
but also the results obtained through \eqref{structural-rademacher}.

%
%
\subsubsection{Related Work on Data-independent Bounds}

\emph{Data-independent} generalization bounds refer to classical theoretical bounds that hold for
any sample, with a certain probability over the draw of the samples \citep{vapnik1998statistical,steinwart2008support}.
In their seminal contribution \emph{On the Uniform Convergence of Relative Frequencies of Events to Their Probabilities},
\citet{Vapnik71} show historically one of the first bounds of that type---introducing the notion of VC dimension.

Several authors consider data-independent bounds for  multi-class learning.
By controlling entropy numbers of linear operators with Maurey's theorem,
\citet{guermeur2002combining} derives generalization error bounds with a linear dependency on the number of classes.
This is improved to square-root by \citet{zhang2004statistical} using $\ell_\infty$-norm CNs without considering the correlation
among class-wise components.
\citet{pan2008parzen} consider a multi-class Parzen window classifier and derive an error bound with a quadratic dependency on the number of classes.
Several authors show data-independent generalization bounds based on combinatorial dimensions,
including  the graph dimension, the Natarajan dimension $d_{\text{nat}}$, and its scale-sensitive analog $d_{\text{nat},\gamma}$ for margin $\gamma$
\citep{guermeur2010sample,guermeur2007vc,daniely2015multiclass,daniely2012multiclass,natarajan1989learning}.

\citet{guermeur2010sample,guermeur2007vc} shows a generalization bound decaying as $O\big(\log c\sqrt{\frac{d_{\text{nat},\gamma}\log n}{n}}\big)$.
When using an $\ell_\infty$-norm regularizer $d_{\text{nat},\gamma}$ is bounded by $O(c^2\gamma^{-2})$,
and the generalization bound reduces to $O\big(\frac{c\log c}{\gamma}\sqrt{\frac{\log n}{n}}\big)$.
The author does not give a bound for an $\ell_2$-norm regularizer as this is more challenging to deal with,
due to the above mentioned coupling of the hypothesis components.

\citet{daniely2015multiclass} give a bound decaying as $O\big(\sqrt{\frac{d_{\text{nat}}(H)\log c}{n}}\big)$,
which transfers to $O\big(\sqrt{\frac{dc\log c}{n}}\big)$ for multi-class linear classifiers since the associated Natarajan dimension grows as $O(dc)$ \citep{daniely2012multiclass}.

\citet{guermeur2017} recently establish an $\ell_p$-norm Sauer-Shelah lemma for large-margin multi-class classifiers,
based on which error bounds with a square-root dependency on the number of classes are derived.
This setting comprises the MC-SVM by \citet{crammer2002algorithmic}.

What is common in all of the above mentioned data-independent bounds is their super logarithmic dependency (square root at best) on the number of classes.
As notable exception, \citet{kontorovich2014maximum} show a bound exhibiting a logarithmic dependency on the number of classes.
However, their bound holds only for the specific nearest-neighbor-based algorithm that they propose,
so their analysis does not cover the commonly used multi-class learning machines mentioned in the introduction (like multinomial logistic regression and classic MC-SVMs).
Furthermore, their bound is of the order $\min\Big\{O\big(\gamma^{-1}\big(\frac{\log c}{n}\big)^{\frac{1}{1+D}}\big),O\big(\gamma^{-\frac{D}{2}}\big(\frac{\log c}{n}\big)^{\frac{1}{2}}\big)\Big\}$
and thus has an \emph{exponential} dependence on the doubling dimension $D$ of the metric space where the learning takes place.
For instance, for linear learning methods with input space dimension $d$, the doubling dimension $D$ grows linearly in $d$,
so the bound in \citep{kontorovich2014maximum} grows exponentially in $d$.
For kernel-based learning using an infinite doubling dimension (e.g., Gaussian kernels) the bound is void.

\subsection{Contributions of this Paper\label{sec:contribution}}

This paper aims to contribute a solid theoretical foundation for learning with many class labels
by presenting data-dependent generalization error bounds with relaxed dependencies on the number of classes.
 We develop two approaches to establish data-dependent error bounds: one based on multi-class
GCs and one based on empirical $\ell_\infty$-norm CNs. We give specific examples to show that each
of these two approaches has its advantages and may yield error bounds tighter than the other.
We also develop novel algorithms to train $\ell_p$-norm MC-SVMs~\citep{lei2015multi} and report experimental results.
Below we summarize the main results of this paper.

\subsubsection{Tighter Generalization Bounds by Gaussian Complexities}
As an extension of our NIPS 2015 conference paper, our GC-based analysis depends on a novel structural
result on GCs (Lemma \ref{lem:GP-structural-lipschitz} below) that is able to preserve the correlation among class-wise components.
Similar to \citet{maurer2016vector} and \citet{cortes2016structured}, our structural result applies to function classes induced
by operators satisfying a Lipschitz continuity. However, here we measure the Lipschitz continuity with respect to a specially
crafted variant of the $\ell_2$-norm involving a Lipschitz constant pair $(L_1,L_2)$ (cf. Definition \ref{def:lipschitz-variant-2-norm} below),
motivated by the observation that some multi-class loss functions satisfy this Lipschitz continuity with a relatively small $L_1$
in a dominant term and a relatively large $L_2$ in a non-dominant term.
This allows us to improve the error bounds based on the structural result \eqref{structural-rademacher}
for MC-SVMs with a relatively large $L_2$.

Based on this new structural result, we show an error bound for multi-class empirical
risk minimization algorithms using an arbitrary norm as regularizer.
As instantiations of our general bound, we compute specific bounds for $\ell_{2,p}$-norm and Schatten $p$-norm regularizers.
We apply this general GC-based bound to some popular MC-SVMs \citep{weston1998multi,crammer2002algorithmic,lee2004multicategory,bishop2006pattern,jenssen2012scatter}.


Our GC-based analysis yields the first error bound for top-$k$ MC-SVM \citep{lapin2015top}
as a decreasing function in $k$. When setting $k$ proportional to $c$, the bound does not depend at all on the number of classes.
In contrast, error bounds based on the structural result \eqref{structural-rademacher} fail to shed insights on the influence of $k$
on the generalization performance because the involved Lipschitz constant is dominated by a constant.
For the MC-SVM by \citet{weston1998multi},
our analysis yields a bound exhibiting a linear dependency on the number of classes,
while the dependency is $O(c^{3\over2})$ for the error bound based on the structural result \eqref{structural-rademacher}.
For the MC-SVM by \citet{jenssen2012scatter}, our analysis yields a bound with no dependencies on $c$, while the error bound
based on the structural result \eqref{structural-rademacher} enjoys a square root dependency.
This shows the effectiveness of our new structural result in capturing the Lipschitz continuity w.r.t. a variant of the $\ell_2$-norm.

\subsubsection{Tighter Generalization Bounds by Covering Numbers}

While the GC-based analysis uses the Lipschitz continuity measured by the $\ell_2$-norm or a variant thereof,
some multi-class loss functions are Lipschitz continuous w.r.t. the $\ell_\infty$-norm with a moderate Lipschitz constant.
To apply the GC-based error bounds, we need to transform this $\ell_\infty$-norm Lipschitz continuity to the $\ell_2$-norm
Lipschitz continuity, at the cost of a  multiplicative factor of $\sqrt{c}$.
Motivated by this observation, we present another data-dependent analysis based on empirical $\ell_\infty$-norm CNs to fully exploit the Lipschitz
continuity measured by the $\ell_\infty$-norm. We show that this leads to bounds that for some MC-SVMs exhibit a milder dependency on the number of classes.

The core idea is to introduce a linear and scalar-valued function class induced by training examples to extract all components
of hypothesis functions on training examples, which allows us to relate the empirical $\ell_\infty$-norm CNs of loss function classes to
that of this linear function class. Our main result is a data-dependent error bound for general MC-SVMs expressed in terms of the \emph{worst-case}
RC of a linear function class, for which we establish lower and upper bounds matching up to a constant factor.
The analysis in this direction is unrelated to the conference version \citep{lei2015multi} and provides an alternative to GC-based arguments.


As direct applications, we derive other data-dependent generalization error bounds
scaling sublinearly for $\ell_p$-norm MC-SVMs and Schatten-$p$ norm MC-SVMs,
and \emph{logarithmically} for top-$k$ MC-SVM \citep{lapin2015top}, trace-norm regularized MC-SVM \citep{amit2007uncovering},
multinomial logistic regression~\citep{bishop2006pattern} and the MC-SVM by \citet{crammer2002algorithmic}.
Note that the previously best results for the MC-SVM in \cite{crammer2002algorithmic} and  multinomial logistic regression
scale only square root in the number of classes \citep{zhang2004statistical}.

\subsubsection{Novel Algorithms with Empirical Verifications}
We propose a novel algorithm to train $\ell_p$-norm MC-SVMs~\citep{lei2015multi} using the Frank-Wolfe algorithm \citep{frank1956algorithm},
for which we show that the involved linear optimization problem has a closed-form solution,
making the implementation of the Frank-Wolfe algorithm very simple and efficient.
This avoids the introduction of class weights used in our previous optimization algorithm \citep{lei2015multi}, which moreover only
applies to the case $1\leq p\leq2$.
In empirical comparisons, we show on benchmark data that  $\ell_p$-norm MC-SVM can  outperform  $\ell_2$-norm
MC-SVM.
We also perform experiments to show that our error bounds well capture the effect by the number of classes and the parameter
$p$. Furthermore, our error bounds suggest a structural risk able to guide the
selection of model parameters.

\section{Main Results}

\subsection{Problem Setting}

In multi-class classification with $c\label{pg:class-size}$ classes, we are given training examples $S=\{\bz_i=(\bx_i,y_i)\}_{i=1}^n\subset\zcal:=\xcal\times\ycal\label{pg:input-output-space}$,
where $\xcal\subset\rbb^d$ is the input space and $\ycal=\{1,\ldots,c\}$ the output space.
We assume that $\bz_1,\ldots,\bz_n$ are independently drawn from a probability measure $P$ defined on $\zcal$.

Our aim is to learn, from a hypothesis space $H$, a hypothesis $h=(h_1,\ldots,h_c):\xcal\mapsto\rbb^c$
used for prediction via the rule $\bx\to\arg\max_{y\in\ycal}h_y(\bx)$.
We consider prediction functions of the form $h^{\bw}_j(\bx)=\inn{\bw_j,\phi(\bx)}$,
where $\phi\label{pg:feature}$ is a feature map associated to a Mercer kernel $K$ defined over $\xcal\times\xcal$,
$\bw_j$ belongs to the reproducing kernel Hilbert space $H_K~\label{pg:rkhs}$ induced from $K\label{pg:kernel}$
with the inner product $\inn{\cdot,\cdot}$ satisfying $K(\bx,\tilde{\bx})=\langle\bx,\tilde{\bx}\rangle$.

We consider hypothesis spaces of the form
\begin{equation}\label{hypothesis-space}
  H_{\tau}=\Big\{h^{\bw}= \big(\langle \bw_1,\phi(\bx)\rangle,\ldots,\langle \bw_c,\phi(\bx)\rangle\big):\bw=(\bw_1,\ldots,\bw_c)\in H_K^c,\tau(\bw)\leq\Lambda\Big\},
\end{equation}
where $\tau$ is a functional defined on $H_K^c:=\underbrace{H_K\times\cdots\times H_K}_{c \rm ~ times}\label{pg:Cartesian-product}$ and $\Lambda>0$. Here we omit the dependency on $\Lambda$ for brevity.

We consider a general problem setting with $\Psi_y(h_1(\bx),\ldots,h_c(\bx))\label{pg:Psi-y}$
used to measure the prediction quality of the model $h$ at $(\bx,y)$ \citep{zhang2004statistical,tewari2007consistency},
where $\Psi_y:\rbb^c\to\rbb_+$ is a real-valued function taking a $c$-component vector as its argument.
This general loss function $\Psi_y$ is widely used in many existing MC-SVMs,
including the models by \citet{crammer2002algorithmic}, \citet{weston1998multi}, \citet{lee2004multicategory}, \citet{zhang2004statistical}, and \citet{lapin2015top}.

\begin{table}
  \centering
  \caption{Notations used in this paper and page number where it first occurs.\label{tab:notation}}
  \begin{tabular}{|c|c|c|}
    \hline
    notation & meaning & page \\ \hline \hline
    $\xcal,\ycal$ & the input space and output space, respectively & \pageref{pg:input-output-space} \\  \hline
    $S$ & the set of training examples $\{\bz_i=(\bx_i,y_i)\}\in\xcal\times\ycal$ & \pageref{pg:input-output-space} \\ \hline
    $c$ & number of classes & \pageref{pg:class-size} \\ \hline
    $K$ & Mercer kernel & \pageref{pg:kernel} \\ \hline
    $\phi$ & feature map associated to a kernel $K$& \pageref{pg:feature} \\ \hline
    $H_K$ & reproducing kernel Hilbert space induced by a Mercer kernel $K$& \pageref{pg:rkhs} \\ \hline
    $H_K^c$ & $c$-fold Cartesian product of the reproducing kernel Hilbert space $H_K$ & \pageref{pg:Cartesian-product}  \\\hline
    $\bw$ & $(\bw_1,\ldots,\bw_c)\in H_K^c$ & \pageref{hypothesis-space}  \\\hline
    $h^{\bw}$ & prediction function $(\langle \bw_1,\phi(\bx)\rangle,\ldots,\langle \bw_c,\phi(\bx)\rangle)$ & \pageref{hypothesis-space}\\ \hline
    $H_\tau$ & hypothesis space for MC-SVM constrained by a regularizer $\tau$ & \pageref{hypothesis-space} \\ \hline
    $\Psi_y$ & multi-class loss function for class label $y$&  \pageref{pg:Psi-y} \\ \hline
    $\|\cdot\|_p$ & $\ell_p$-norm defined on $\rbb^c$ & \pageref{pg:p-norm} \\ \hline
    $\|\cdot\|_{2,p}$ & $\ell_{2,p}$ norm defined on $H_K^c$  &  \pageref{2-p-norm}\\ \hline
    $\inn{\bw,\bv}$ & inner product on $H_K^c$ as $ \sum_{j=1}^{c}\inn{\bw_j,\bv_j}$ & \pageref{pg:inner-product}\\ \hline
    $\|\cdot\|_*$ & dual norm of $\|\cdot\|$ & \pageref{pg:dual-norm} \\ \hline
    $\nbb_n$ & the set $\{1,\ldots,n\}$ &\pageref{nbb-n} \\ \hline
    $p^*$ & dual exponent of $p$ satisfying $1/p+1/p^*=1$ & \pageref{pg:dual-exponent} \\ \hline
    $\ebb_{\bu}$ & the expectation w.r.t. random $\bu$ & \pageref{pg:ebb-u}\\ \hline
    $B_\Psi$ & the constant $\sup_{(\bx,y)\in\zcal,h\in H_\tau}\Psi_y(h(\bx))$ & \pageref{pg:B}  \\\hline
    $\hat{B}_\Psi$ & the constant $n^{-\frac{1}{2}}\sup_{h\in H_\tau}\big\|\big(\Psi_{y_i}(h(\bx_i))\big)_{i=1}^n\big\|_2$ & \pageref{pg:B}  \\\hline
    $\hat{B}$ & the constant $\max_{i\in\nbb_n}\|\phi(\bx_i)\|_2\sup_{\bw:\tau(\bw)\leq\Lambda}\|\bw\|_{2,\infty}$ & \pageref{pg:B} \\ \hline
    $A_\tau$ & the term defined in \eqref{pg:A-tau} & \pageref{pg:A-tau} \\ \hline
    $I_y$ & indices of examples with class label $y$ & \pageref{pg:I-y} \\ \hline
    $\|\cdot\|_{S_p}$ & Schatten-$p$ norm of a matrix & \pageref{pg:schatten} \\ \hline
    $\mathfrak{R}_S(H)$ & empirical Rademacher complexity of $H$ w.r.t. sample $S$ & \pageref{def:rademacher} \\ \hline
    $\mathfrak{G}_S(H)$ & empirical Gaussian complexity of $H$ w.r.t. sample $S$ & \pageref{def:rademacher} \\ \hline
    $\frak{R}_n(H)$ & worst-case Rademacher complexity of $H$ w.r.t. $n$ examples & \pageref{def:rademacher} \\ \hline
    $\widetilde{H}_{\tau}$ & class of scalar-valued linear functions defined on $H_K^c$ & \pageref{H-tilde-tau} \\ \hline
    $\widetilde{S}$ & an enlarged set of cardinality $nc$ defined in \eqref{tilde-S} & \pageref{tilde-S}\\ \hline
    $\widetilde{S}'$ & a set of cardinality $n$ defined in \eqref{tilde-S-prime} & \pageref{tilde-S-prime}\\ \hline
    $F_{\tau,\Lambda}$ & loss function class for MC-SVM & \pageref{loss-function-class} \\  \hline
    $\rho_h(\bx,y)$ & margin of $h$ at $(\bx,y)$ & \pageref{margin} \\ \hline
    $\ncal_\infty(\epsilon,F,S)$ & empirical covering number of $F$ w.r.t. sample $S$ & \pageref{def:covering-number} \\ \hline
    $\fat_\epsilon(F)$ & fat-shattering dimension of $F$ & \pageref{def:shattering} \\ \hline
    \hline
  \end{tabular}
\end{table}

\subsection{Notations}

We collect some notations used throughout this paper (see also Table~\ref{tab:notation}).
We say that a function $f:\rbb^c\to\rbb$ is $L$-Lipschitz continuous w.r.t. a norm $\|\cdot\|$ in $\rbb^c$ if
$$
  |f(\bt)-f(\bt')|\leq L\|(t_1-t_1',\ldots,t_c-t_c')\|,\quad\forall \bt,\bt'\in\rbb^c.
$$
The $\ell_p$-norm of a vector $\bt=(t_1,\ldots,t_c)$ is defined as $\|\bt\|_p=\big[\sum_{j=1}^{c}|t_j|^p\big]^{1\over p}\label{pg:p-norm}$.
For any $\bv=(\bv_1,\ldots,\bv_c)\in H_K^c$ and $p\geq1$, we define the structure norm
$\|\bv\|_{2,p}=\big[\sum_{j=1}^{c}\|\bv_j\|_2^p\big]^{\frac{1}{p}}\label{2-p-norm}$.
Here, for brevity, we denote by $\|\bv_j\|_2$ the norm of $\bv_j$ in $H_K$.
For any $\bw=(\bw_1,\ldots,\bw_c),\bv=(\bv_1,\ldots,\bv_c)\in H_K^c$, we denote $\inn{\bw,\bv}=\sum_{j=1}^{c}\inn{\bw_j,\bv_j}\label{pg:inner-product}$.
For any $n\in\nbb$, we introduce the notation $\nbb_n:=\{1,\ldots,n\}\label{nbb-n}$.
For any $p\geq1$, we denote by $p^*$ the dual exponent of $p$ satisfying $1/p+1/p^*=1\label{pg:dual-exponent}$.
For any norm $\|\cdot\|$ we use $\|\cdot\|_*\label{pg:dual-norm}$ to mean its dual norm. 
Furthermore, we define $B_\Psi=\sup\limits_{(\bx,y)\in\zcal}\sup\limits_{h^{\bw}\in H_\tau}\Psi_y(h^{\bw}(\bx))$,
$\hat{B}_\Psi=n^{-\frac{1}{2}}\sup\limits_{h^{\bw}\in H_\tau}\big\|\big(\Psi_{y_i}(h^{\bw}(\bx_i))\big)_{i=1}^n\big\|_2$,
and $\hat{B}=\max\limits_{i\in\nbb_n}\|\phi(\bx_i)\|_2\sup\limits_{\bw:\tau(\bw)\leq\Lambda}\|\bw\|_{2,\infty}\label{pg:B}$.
For brevity, for any functional $\tau$ over $H_K^c$, we introduce the following notation to write our bounds compactly
\begin{equation}\label{pg:A-tau}
  A_{\tau}:=\sup_{h^{\bw}\in H_{\tau}}\Big[\ebb_{\bx,y}\Psi_y(h^{\bw}(\bx))-\frac{1}{n}\sum_{i=1}^n\Psi_{y_i}(h^{\bw}(\bx_i))\Big]-3B_\Psi\Big[\frac{\log\frac{2}{\delta}}{2n}\Big]^{\frac{1}{2}},
\end{equation}
where we omit the dependency on $n$ and loss function for brevity.
Note that, for any random $\bu$, the notation $\ebb_{\bu}\label{pg:ebb-u}$ denotes the expectation w.r.t. $\bu$.
For any $y\in\ycal$, we use $I_y=\{i\in\nbb_n:y_i=y\}\label{pg:I-y}$ to mean the indices of examples with label $y$.

If $\phi$ is the identity map, then the hypothesis $h^{\bw}$ can be compactly represented by a matrix $W=(\bw_1,\ldots,\bw_c)\in\rbb^{d\times c}$.
For any $p\geq1$, the Schatten-$p$ norm of a matrix $W\in\mathbb R^{d\times c}$ is defined as the $\ell_p$-norm of the
vector of singular values $\sigma(W):=(\sigma_1(W),\ldots,\sigma_{\min\{c,d\}}(W))^\top$ (singular values assumed to be sorted in a non-increasing order), i.e.,
$\|W\|_{S_p}:=\|\sigma(W)\|_p\label{pg:schatten}$.

\subsection{Data-dependent Bounds by Gaussian Complexities\label{sec:data-dependent-bound}}

We first present data-dependent analysis based on the established methodology of RCs and GCs~\citep{bartlett2002rademacher}.
\begin{definition}[Empirical Rademacher and Gaussian complexities]\label{def:rademacher}
  Let $H$ be a class of real-valued functions defined over a space $\tilde{\zcal}$ and $S'=\{\tilde{\bz}_i\}_{i=1}^n\in\tilde{\zcal}^n$.
  The \emph{empirical Rademacher and Gaussian complexities} of $H$ with respect to $S'$ are respectively defined as
  $$
    \mathfrak{R}_{S'}(H)=\ebb_{\bm{\epsilon}}\big[\sup_{h\in H}\frac{1}{n}\sum_{i=1}^n\epsilon_ih(\tilde{\bz}_i)\big],\quad
    \mathfrak{G}_{S'}(H)=\ebb_{\bm{g}}\big[\sup_{h\in H}\frac{1}{n}\sum_{i=1}^ng_ih(\tilde{\bz}_i)\big],
    $$
  where $\epsilon_1,\ldots,\epsilon_n$ are independent Rademacher variables,
  and $g_1,\ldots,g_n$ are independent $N(0,1)$ random variables.
  We define the \emph{worst-case} Rademacher complexity as
  $\frak{R}_n(H)=\sup_{S'\in\tilde{\zcal}^n}\frak{R}_{S'}(H)$\label{pg:cardinality}.
\end{definition}

Existing data-dependent analyses build on either the structural result \eqref{Rademacher-maximum-lem-8-1} or \eqref{structural-rademacher},
which either ignores the correlation among predictors associated to individual class labels or requires $f_i$ to be Lipschitz continuous w.r.t. the $\ell_2$-norm.
Below we introduce a new structural complexity result based on the following Lipschitz property w.r.t. a variant of the $\ell_2$-norm.
  The motivation of this Lipschitz continuity is that some multi-class loss functions satisfy \eqref{generalized-lipschitz-condition} with a relatively small
$L_1$ and a relatively large $L_2$, the latter of which is not the influential one since it is involved in a single component.

\begin{definition}[Lipschitz continuity w.r.t. a variant of the $\ell_2$-norm]\label{def:lipschitz-variant-2-norm}
  We say a function $f:\rbb^c\to\rbb$ is \emph{Lipschitz continuous w.r.t. a variant of the $\ell_2$-norm} involving a Lipschitz constant pair $(L_1,L_2)$ and index $r\in\{1,\ldots,c\}$ if
  \begin{equation}\label{generalized-lipschitz-condition}
    |f(\bt)-f(\bt')|\leq L_1\|(t_1-t_1',\ldots,t_c-t_c')\|_2+L_2|t_r-t'_r|,\quad\forall \bt,\bt'\in\rbb^c.
  \end{equation}
\end{definition}


We now present our first core result of this paper, the following structural lemma.
Proofs of results in this section are given in Section \ref{sec:proof-dependent}.

\begin{lemma}[Structural Lemma]\label{lem:GP-structural-lipschitz}
  Let $H$ be a class of functions mapping from $\xcal$ to $\rbb^c$. Let $L_1,L_2\geq0$ be two constants and $r:\nbb\to\ycal$. Let $f_1,\ldots,f_n$
  be a sequence of functions from $\rbb^c$ to $\rbb$. Suppose that for any $i\in\nbb_n$, $f_i$ is Lipschitz continuous w.r.t. a variant of the $\ell_2$-norm involving a Lipschitz constant pair $(L_1,L_2)$ and index $r(i)$.
   Let $g_1,\ldots,g_n, g_{11},\ldots,g_{nc}$ be a sequence of independent $N(0,1)$ random variables.
   Then, for any sample $\{\tilde{\bx}_i\}_{i=1}^n\in\xcal^n$ we have
  \begin{equation}\label{gaussian-lipschitz-l2}
    \ebb_{\bm{g}}\sup_{h\in H}\sum_{i=1}^{n}g_if_i(h(\tilde{\bx}_i))\leq\sqrt{2}L_1\ebb_{\bm{g}}\sup_{h\in H}\sum_{i=1}^n\sum_{j=1}^cg_{ij}h_j(\tilde{\bx}_i)+\sqrt{2}L_2\ebb_{\bm{g}}\sup_{h\in H}\sum_{i=1}^{n}g_ih_{r(i)}(\tilde{\bx}_i).
  \end{equation}
\end{lemma}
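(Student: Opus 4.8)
The plan is to follow the standard route of comparing two Gaussian processes, adapting the proof of the known result \eqref{structural-rademacher} so as to accommodate the variant of the $\ell_2$-norm in Definition~\ref{def:lipschitz-variant-2-norm}. First I would fix the sample $\{\tilde{\bx}_i\}_{i=1}^n$ and introduce, for $h\in H$, the two centered Gaussian processes
\[
  X_h := \sum_{i=1}^{n} g_i f_i(h(\tilde{\bx}_i)), \qquad
  Y_h := \sqrt{2}\,L_1 \sum_{i=1}^{n}\sum_{j=1}^{c} g_{ij}\, h_j(\tilde{\bx}_i) + \sqrt{2}\,L_2 \sum_{i=1}^{n} g_i\, h_{r(i)}(\tilde{\bx}_i),
\]
built from the independent $N(0,1)$ variables $g_1,\dots,g_n,g_{11},\dots,g_{nc}$ in the statement. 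Since both processes are centered, it suffices to compare the variances of their increments and then invoke Slepian's comparison inequality \citep{slepian1962one} in its Sudakov--Fernique form, which—unlike the classical Slepian lemma—does not require the variances $\ebb_{\bm{g}} X_h^2$ and $\ebb_{\bm{g}} Y_h^2$ to match.

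Next I would compute the increments. Independence of the $g_i$ gives $\ebb_{\bm{g}}(X_h - X_{h'})^2 = \sum_{i=1}^n \big(f_i(h(\tilde{\bx}_i)) - f_i(h'(\tilde{\bx}_i))\big)^2$. Applying the Lipschitz property \eqref{generalized-lipschitz-condition} (with index $r(i)$) to each summand, followed by the elementary bound $(a+b)^2 \le 2a^2 + 2b^2$ and expansion of $\|\cdot\|_2^2$ componentwise, yields
\[
  \ebb_{\bm{g}}(X_h - X_{h'})^2 \le 2L_1^2 \sum_{i=1}^n\sum_{j=1}^c \big(h_j(\tilde{\bx}_i) - h'_j(\tilde{\bx}_i)\big)^2 + 2L_2^2 \sum_{i=1}^n \big(h_{r(i)}(\tilde{\bx}_i) - h'_{r(i)}(\tilde{\bx}_i)\big)^2.
\]
On the other hand, mutual independence and zero mean of the $g_{ij}$ and $g_i$ make the cross term vanish, so $\ebb_{\bm{g}}(Y_h - Y_{h'})^2$ equals exactly the right-hand side above; hence $\ebb_{\bm{g}}(X_h - X_{h'})^2 \le \ebb_{\bm{g}}(Y_h - Y_{h'})^2$ for all $h, h' \in H$.

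Then Sudakov--Fernique gives $\ebb_{\bm{g}}\sup_{h\in H} X_h \le \ebb_{\bm{g}}\sup_{h\in H} Y_h$, and subadditivity of the supremum over the two groups of Gaussian variables bounds $\ebb_{\bm{g}}\sup_{h\in H} Y_h$ by $\sqrt{2}L_1\,\ebb_{\bm{g}}\sup_{h\in H}\sum_{i,j} g_{ij}h_j(\tilde{\bx}_i) + \sqrt{2}L_2\,\ebb_{\bm{g}}\sup_{h\in H}\sum_{i} g_i h_{r(i)}(\tilde{\bx}_i)$, which is exactly \eqref{gaussian-lipschitz-l2}. I expect the only genuinely delicate step to be the invocation of the comparison inequality: the naive ``Slepian'' statement would also demand $\ebb_{\bm{g}} X_h^2 = \ebb_{\bm{g}} Y_h^2$, which fails here, so the Sudakov--Fernique refinement that needs only the increment-variance inequality is essential. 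Apart from that, the sole technicality is the usual (and, $H$ being a class of linear functions, harmless) reduction to finite sub-suprema so that the Gaussian suprema are well defined; the constant $\sqrt{2}$ is precisely the cost of the split $(a+b)^2 \le 2a^2 + 2b^2$ and recovers the factor appearing in \eqref{structural-rademacher}.
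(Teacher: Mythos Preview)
Your proposal is correct and follows essentially the same route as the paper: define the two Gaussian processes $X_h$ and $Y_h$, verify the increment-variance inequality via the Lipschitz assumption and $(a+b)^2\le 2a^2+2b^2$, apply the Gaussian comparison inequality, and split the supremum. The paper states its comparison lemma (Lemma~\ref{lem:gaussian-comparison}) in exactly the Sudakov--Fernique form you identify---requiring only the increment condition~\eqref{increment-condition}, not matching variances---so your care in distinguishing this from the classical Slepian statement is well placed; one minor inaccuracy is that the lemma is stated for a general class $H$, not necessarily linear, but the separability issue is handled (as in the paper) by observing that both processes are really indexed by the finite-dimensional set $\{(h(\tilde{\bx}_1),\ldots,h(\tilde{\bx}_n)):h\in H\}\subset\rbb^{nc}$.
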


Lemma \ref{lem:GP-structural-lipschitz} controls the GC of the multi-class loss function class by that of the original hypothesis class,
thereby removing the dependency on the potentially cumbersome operator $f_i$ in the definition of the loss function class
(for instance for \citet{crammer2002algorithmic}, $f_i$ would be the component-wise maximum).
The above lemma is based on a comparison result (Slepian's lemma, Lemma \ref{lem:gaussian-comparison} below) among the suprema of Gaussian processes.

Equipped with Lemma \ref{lem:GP-structural-lipschitz}, we are now able to present our main results based on GCs.
Eq. \eqref{risk-data-dependent-gaussian} is a data-dependent bound in terms of the GC of the following linear scalar-valued function class
\begin{equation}\label{H-tilde-tau}
  \widetilde{H}_{\tau}:=\{\bv\to\langle\bw,\bv\rangle:\bw,\bv\in H_K^c,
  \tau(\bw)\leq \Lambda,\bv\in\widetilde{S}\},
\end{equation}
where $\widetilde{S}$ is defined as follows
\begin{equation}\label{tilde-S}
  \widetilde{S}:=\Big\{\underbrace{\bp_1(\bx_1),\bp_2(\bx_1),\ldots,\bp_c(\bx_1)}_{\text{induced by }\bx_1},
  \underbrace{\bp_1(\bx_2),\bp_2(\bx_2),\ldots,\bp_c(\bx_2)}_{\text{induced by }\bx_2},\ldots,
  \underbrace{\bp_1(\bx_n),\ldots,\bp_c(\bx_n)}_{\text{induced by }\bx_n}\Big\}
\end{equation}
and, for any $\bx\in\xcal$, we use the notation
\begin{equation}\label{bp-j}
  \bp_j(\bx):=\big(\underbrace{0,\ldots,0}_{j-1},\phi(\bx),\underbrace{0,\ldots,0}_{c-j}\big)\in H_K^c,\quad j\in\nbb_c.
\end{equation}
Note that $\widetilde{H}_{\tau}$ is a class of functions defined on a finite set $\widetilde{S}$. We also introduce
\begin{equation}\label{tilde-S-prime}
  \widetilde{S}'=\Big\{\bp_{y_1}(\bx_1),\bp_{y_2}(\bx_2),\ldots,\bp_{y_n}(\bx_n)\Big\}.
\end{equation}
The terms $\widetilde{S},\widetilde{S}'$ and $\bp_j(\bx)$ are motivated by the following identity
\begin{equation}\label{S-tilde-identity-inner-product}
  \inn{\bw,\bp_k(\bx)}=\Big\langle(\bw_1,\ldots,\bw_c),\big(\underbrace{0,\ldots,0}_{k-1},\phi(\bx),\underbrace{0,\ldots,0}_{c-k}\big)\Big\rangle=\inn{\bw_k,\phi(\bx)},\quad\forall k\in\nbb_c.
\end{equation}
Hence, the right-hand side of \eqref{gaussian-lipschitz-l2} can be rewritten as Gaussian complexities of $\widetilde{H}_\tau$ when $H=H_\tau$.

\begin{theorem}[Data-dependent bounds for general regularizer and Lipschitz continuous loss w.r.t. Def. \ref{def:lipschitz-variant-2-norm}]\label{thm:risk-data-dependent}
  Consider the hypothesis space $H_\tau$ in \eqref{hypothesis-space} with $\tau(\bw)=\|\bw\|$, where $\|\cdot\|$ is a norm defined on $H_K^c$.
  Suppose there exist $L_1,L_2\in\rbb_+$ such that $\Psi_y$ is Lipschitz continuous w.r.t. a variant of the $\ell_2$-norm involving a Lipschitz constant pair $(L_1,L_2)$ and index $y$ for all $y\in\ycal$.
  Then, for any $0<\delta<1$, with probability at least $1-\delta$, we have
  \begin{equation}\label{risk-data-dependent-gaussian}
    A_\tau\leq 2\sqrt{\pi}\Big[L_1c\mathfrak{G}_{\widetilde{S}}(\widetilde{H}_\tau)+L_2\mathfrak{G}_{\widetilde{S}'}(\widetilde{H}_\tau)\Big]
  \end{equation}
  and
  \begin{equation}\label{risk-data-dependent}
   A_\tau\leq \frac{2\Lambda \sqrt{\pi}}{n}\bigg[L_1\ebb_{\bm{g}}\Big\|\big(\sum_{i=1}^{n}g_{ij}\phi(\bx_i)\big)_{j=1}^c\Big\|_*+
  L_2\ebb_{\bm{g}}\big\|\big(\sum_{i\in I_j}g_i\phi(\bx_i)\big)_{j=1}^c\big\|_*\bigg],
  \end{equation}
  where $g_1,\ldots,g_n, g_{11},\ldots,g_{nc}$ are independent $N(0,1)$ random variables.
\end{theorem}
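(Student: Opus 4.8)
\emph{Proof proposal.} The plan is a three-stage reduction: pass from the uniform generalization gap $A_\tau$ to the empirical Rademacher complexity of the loss class, then to its Gaussian complexity, and finally invoke Lemma \ref{lem:GP-structural-lipschitz} to replace the loss class by the scalar linear class $\widetilde H_\tau$. \textbf{Step 1 (symmetrization and Gaussianization).} Let $F_{\tau,\Lambda}=\{(\bx,y)\mapsto\Psi_y(h^{\bw}(\bx)):h^{\bw}\in H_\tau\}$. Since $\Psi_y$ is non-negative and $\sup_{(\bx,y),h^{\bw}}\Psi_y(h^{\bw}(\bx))=B_\Psi$, every element of $F_{\tau,\Lambda}$ takes values in $[0,B_\Psi]$. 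Applying the textbook uniform deviation bound for bounded classes — McDiarmid's inequality applied to $S\mapsto\sup_{f\in F_{\tau,\Lambda}}\big(\ebb f-\frac1n\sum_{i}f(\bz_i)\big)$ (bounded differences $B_\Psi/n$), the symmetrization inequality $\ebb_S\sup_{f}\big(\ebb f-\frac1n\sum_i f(\bz_i)\big)\le 2\ebb_S\mathfrak{R}_S(F_{\tau,\Lambda})$, and one further McDiarmid step to replace $\ebb_S\mathfrak{R}_S(F_{\tau,\Lambda})$ by $\mathfrak{R}_S(F_{\tau,\Lambda})$ — gives, with probability at least $1-\delta$, $A_\tau\le 2\mathfrak{R}_S(F_{\tau,\Lambda})$, where the term $3B_\Psi[\log(2/\delta)/(2n)]^{1/2}$ subtracted in the definition of $A_\tau$ absorbs the two deviation terms and the union bound. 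I then pass to Gaussian complexities via the elementary comparison $\mathfrak{R}_S(H)\le\sqrt{\pi/2}\,\mathfrak{G}_S(H)$ (write $g_i=\mathrm{sign}(g_i)|g_i|$ and use Jensen together with $\ebb|g_i|=\sqrt{2/\pi}$), so that $A_\tau\le\sqrt{2\pi}\,\mathfrak{G}_S(F_{\tau,\Lambda})$.

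\textbf{Step 2 (structural lemma and identification with $\widetilde H_\tau$).} Set $f_i:=\Psi_{y_i}$ and $r(i):=y_i$; by hypothesis each $f_i$ is Lipschitz continuous w.r.t. a variant of the $\ell_2$-norm with constant pair $(L_1,L_2)$ and index $r(i)$, so Lemma \ref{lem:GP-structural-lipschitz} applied with $H=H_\tau$ and the sample $\{\bx_i\}_{i=1}^n$ bounds $n\,\mathfrak{G}_S(F_{\tau,\Lambda})=\ebb_{\bm g}\sup_{h\in H_\tau}\sum_{i}g_i\Psi_{y_i}(h(\bx_i))$ by $\sqrt2 L_1\ebb_{\bm g}\sup_{h\in H_\tau}\sum_{i,j}g_{ij}h_j(\bx_i)+\sqrt2 L_2\ebb_{\bm g}\sup_{h\in H_\tau}\sum_i g_i h_{y_i}(\bx_i)$. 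Using the identity \eqref{S-tilde-identity-inner-product}, $\langle\bw,\bp_k(\bx)\rangle=h^{\bw}_k(\bx)$, together with the fact that $\widetilde S$ enumerates the $nc$ points $\{\bp_j(\bx_i)\}_{i\in\nbb_n,\,j\in\nbb_c}$ and $\widetilde S'$ the $n$ points $\{\bp_{y_i}(\bx_i)\}_{i\in\nbb_n}$, Definition \ref{def:rademacher} (with sample sizes $nc$ and $n$, respectively) identifies the first sum with $nc\,\mathfrak{G}_{\widetilde S}(\widetilde H_\tau)$ and the second with $n\,\mathfrak{G}_{\widetilde S'}(\widetilde H_\tau)$. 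Dividing by $n$ yields $\mathfrak{G}_S(F_{\tau,\Lambda})\le\sqrt2\big[L_1 c\,\mathfrak{G}_{\widetilde S}(\widetilde H_\tau)+L_2\,\mathfrak{G}_{\widetilde S'}(\widetilde H_\tau)\big]$, and combining with Step 1 and $\sqrt{2\pi}\cdot\sqrt2=2\sqrt\pi$ gives \eqref{risk-data-dependent-gaussian}.

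\textbf{Step 3 (explicit dual-norm form).} For \eqref{risk-data-dependent} I evaluate the two Gaussian complexities of $\widetilde H_\tau$ in closed form using $\tau(\bw)=\|\bw\|$. Unfolding $\sum_{i,j}g_{ij}h^{\bw}_j(\bx_i)=\big\langle\bw,\big(\sum_i g_{ij}\phi(\bx_i)\big)_{j=1}^c\big\rangle$ and taking the supremum over $\|\bw\|\le\Lambda$ produces $\Lambda\big\|\big(\sum_i g_{ij}\phi(\bx_i)\big)_{j=1}^c\big\|_*$ by definition of the dual norm, whence $\mathfrak{G}_{\widetilde S}(\widetilde H_\tau)=\frac{\Lambda}{nc}\ebb_{\bm g}\big\|\big(\sum_i g_{ij}\phi(\bx_i)\big)_{j=1}^c\big\|_*$; grouping the terms of $\sum_i g_i h^{\bw}_{y_i}(\bx_i)$ by label gives $\sum_i g_i h^{\bw}_{y_i}(\bx_i)=\big\langle\bw,\big(\sum_{i\in I_j}g_i\phi(\bx_i)\big)_{j=1}^c\big\rangle$, so $\mathfrak{G}_{\widetilde S'}(\widetilde H_\tau)=\frac{\Lambda}{n}\ebb_{\bm g}\big\|\big(\sum_{i\in I_j}g_i\phi(\bx_i)\big)_{j=1}^c\big\|_*$. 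Substituting these into \eqref{risk-data-dependent-gaussian} and simplifying gives \eqref{risk-data-dependent}.

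\textbf{Expected main obstacle.} With Lemma \ref{lem:GP-structural-lipschitz} in hand, nothing deep remains; the delicate points are purely bookkeeping. One must track the correct sample-size normalizations ($|\widetilde S|=nc$ versus $|\widetilde S'|=n$), which is precisely what turns the accumulated constants $\sqrt{\pi/2}$, $\sqrt2$, the factor $2$ from symmetrization, and the extra $c$ into the clean $2\sqrt\pi$ in front of $L_1 c\,\mathfrak{G}_{\widetilde S}(\widetilde H_\tau)$; and one must verify $F_{\tau,\Lambda}\subseteq[0,B_\Psi]$ so that the bounded-difference argument in Step 1 legitimately yields the constant $B_\Psi$ that appears in the definition of $A_\tau$.
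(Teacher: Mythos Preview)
Your proposal is correct and follows essentially the same approach as the paper's proof: McDiarmid plus symmetrization to reach $A_\tau\le 2\mathfrak{R}_S(F_{\tau,\Lambda})$, the comparison $\mathfrak{R}_S\le\sqrt{\pi/2}\,\mathfrak{G}_S$, Lemma~\ref{lem:GP-structural-lipschitz} with $f_i=\Psi_{y_i}$ and $r(i)=y_i$, identification of the two Gaussian sums with $nc\,\mathfrak{G}_{\widetilde S}(\widetilde H_\tau)$ and $n\,\mathfrak{G}_{\widetilde S'}(\widetilde H_\tau)$ via \eqref{S-tilde-identity-inner-product}, and finally the dual-norm evaluation. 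The only cosmetic difference is that the paper writes the dual-norm step as an inequality rather than the equality you (correctly) note, which of course suffices for the stated bound.
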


\begin{remark}[Motivation of Lipschitz continuity w.r.t. Def. \ref{def:lipschitz-variant-2-norm}\label{rem:lipschitz-variant}]
  The dominant term on the right-hand side of \eqref{risk-data-dependent-gaussian} is $L_1c\mathfrak{G}_{\widetilde{S}}(\widetilde{H}_\tau)$
  if $L_2=O(\sqrt{c}L_1)$. This explains the motivation in introducing the new structural result \eqref{gaussian-lipschitz-l2}
  to exploit the Lipschitz continuity w.r.t. a variant of the $\ell_2$-norm involving a large $L_2$. For comparison, if we apply
  the previous structural result \eqref{structural-rademacher} for loss functions satisfying \eqref{generalized-lipschitz-condition}, then
  the associated $\ell_2$-Lipschitz constant is  $L_1+L_2$,  resulting in the following bound
  $$
    A_\tau\leq 2\sqrt{\pi}(L_1+L_2)c\mathfrak{R}_{\widetilde{S}}(\widetilde{H}_\tau),
  $$
  which is  worse than \eqref{risk-data-dependent-gaussian} when $L_1=O(L_2)$ since the dominant term becomes $L_2c\mathfrak{R}_{\widetilde{S}}(\widetilde{H}_\tau)$.
  Many popular loss functions satisfy \eqref{generalized-lipschitz-condition} with $L_1=O(L_2)$ \citep{weston1998multi,jenssen2012scatter,lapin2015top}.
  For example, the loss function used in the top-$k$ SVM \citep{lapin2015top} satisfies \eqref{generalized-lipschitz-condition} with $(L_1,L_2)=(\frac{1}{\sqrt{k}},1)$,
  which, as we will show, allows us to derive data-dependent bounds with no dependencies on the number of classes by setting $k$ proportional to $c$.
  In comparison, the $(k^{-\frac{1}{2}}+1)$-Lipschitz continuity w.r.t. $\ell_2$-norm does not capture the special structure of the top-$k$ loss function
  since $k^{-\frac{1}{2}}$ is dominated by the constant $1$.
  As further examples, the loss function in \citet{weston1998multi} satisfies \eqref{generalized-lipschitz-condition} with
  $(L_1,L_2)=(\sqrt{c},c)$, while the loss function
  in \citet{jenssen2012scatter} satisfies \eqref{jenssen} with $(L_1,L_2)=(0,1)$.
\end{remark}

%

We now consider two applications of Theorem \ref{thm:risk-data-dependent} by considering $\tau(\bw)=\|\bw\|_{2,p}$ defined on $H_K^c$~\citep{lei2015multi} and $\tau(W)=\|W\|_{S_p}$ defined on $\rbb^{d\times c}$~\citep{amit2007uncovering}, respectively. 

\begin{corollary}[Data-dependent bound for $\ell_p$-norm regularizer and Lipschitz continuous loss w.r.t. Def. \ref{def:lipschitz-variant-2-norm}]\label{cor:risk-dependent-lp}
  Con\-sider the hypothesis space $H_{p,\Lambda}:=H_{\tau,\Lambda}$ in \eqref{hypothesis-space} with $\tau(\bw)=\|\bw\|_{2,p},p\geq1$.
  If there exist $L_1,L_2\in\rbb_+$ such that $\Psi_y$ is Lipschitz continuous w.r.t. a variant of the $\ell_2$-norm involving a Lipschitz constant pair $(L_1,L_2)$ and index $y$ for all $y\in\ycal$,
  then for any $0<\delta<1$, the following inequality holds with probability at least $1-\delta$ (we use the abbreviation $A_p=A_\tau$ with $\tau(\bw)=\|\bw\|_{2,p}$)
  \begin{equation}\label{risk-dependent-lp}
    A_p\leq \frac{2\Lambda \sqrt{\pi}}{n}\Big[\sum_{i=1}^{n}K(\bx_i,\bx_i)\Big]^{\frac{1}{2}}\inf_{q\geq p}\Big[L_1(q^*)^{\frac{1}{2}}c^{\frac{1}{q^*}}
    +L_2(q^*)^{\frac{1}{2}}\max(c^{\frac{1}{q^*}-\frac{1}{2}},1)\Big].
  \end{equation}
\end{corollary}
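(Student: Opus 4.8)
The plan is to derive Corollary \ref{cor:risk-dependent-lp} directly from the bound \eqref{risk-data-dependent} in Theorem \ref{thm:risk-data-dependent}, by computing the dual norm of $\|\cdot\|_{2,p}$ on $H_K^c$ and bounding the two Gaussian expectations that appear there. First I would record that the dual norm of the structure norm $\|\cdot\|_{2,p}$ is $\|\cdot\|_{2,p^*}$, where $1/p+1/p^* = 1$; this is the standard fact that the dual of a block $\ell_p$-norm (with $\ell_2$ inside each block) is the block $\ell_{p^*}$-norm. So \eqref{risk-data-dependent} becomes, up to the factor $2\Lambda\sqrt{\pi}/n$, a sum of the two terms $L_1\,\ebb_{\bm g}\|(\sum_{i=1}^n g_{ij}\phi(\bx_i))_{j=1}^c\|_{2,p^*}$ and $L_2\,\ebb_{\bm g}\|(\sum_{i\in I_j}g_i\phi(\bx_i))_{j=1}^c\|_{2,p^*}$.

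Next I would estimate each Gaussian expectation. For a fixed $j$, the vector $\sum_{i=1}^n g_{ij}\phi(\bx_i)$ is a Gaussian element of $H_K$ with $\ebb_{\bm g}\|\sum_{i=1}^n g_{ij}\phi(\bx_i)\|_2^2 = \sum_{i=1}^n K(\bx_i,\bx_i)$, hence by Jensen's inequality $\ebb_{\bm g}\|\sum_{i=1}^n g_{ij}\phi(\bx_i)\|_2 \leq [\sum_{i=1}^n K(\bx_i,\bx_i)]^{1/2}$; the same bound holds for $\sum_{i\in I_j} g_i\phi(\bx_i)$ since $I_j\subseteq\nbb_n$ and $\ebb_{\bm g}\|\sum_{i\in I_j}g_i\phi(\bx_i)\|_2^2 = \sum_{i\in I_j}K(\bx_i,\bx_i)\le\sum_{i=1}^n K(\bx_i,\bx_i)$. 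For the outer $\ell_{p^*}$-norm over the $c$ blocks I would apply the general principle that, for any $q\ge p$, $\|\cdot\|_{2,p^*}\le\|\cdot\|_{2,q^*}$ (since $q^*\le p^*$), combined with the moment bound $\ebb_{\bm g}\|(X_j)_{j=1}^c\|_{q^*}$ for independent identically-distributed Gaussian-norm coordinates. Concretely, using the standard estimate $\ebb\|(\|Y_j\|_2)_{j=1}^c\|_{q^*}\le (q^*)^{1/2}c^{1/q^*}\max_j(\ebb\|Y_j\|_2^2)^{1/2}$ — which follows from a $\chi$-type concentration / Rosenthal-type inequality for the $q^*$-th moment of Gaussian norms — I get that the first expectation is at most $(q^*)^{1/2}c^{1/q^*}[\sum_i K(\bx_i,\bx_i)]^{1/2}$. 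For the second expectation the blocks are supported on the disjoint index sets $I_j$, which only helps; and since many of the $I_j$ may be empty or small, the effective number of nonzero blocks is at most $c$, giving the factor $\max(c^{1/q^*-1/2},1)$ — the $c^{1/q^*-1/2}$ arises because $\sum_j |I_j| = n$ so by power-mean/Hölder one can trade the $\ell_{q^*}$-norm of the per-block $\ell_2$-norms against $(\sum_i K(\bx_i,\bx_i))^{1/2}$ with an extra $c^{1/q^*-1/2}$ loss when $q^*<2$, while for $q^*\ge2$ no such loss is needed, hence the maximum with $1$. Taking the infimum over $q\ge p$ on the outside then yields exactly \eqref{risk-dependent-lp}.

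I expect the main obstacle to be pinning down the sharp moment constants, in particular justifying the $(q^*)^{1/2}$ factor in the $q^*$-th moment bound for a maximum/$\ell_{q^*}$-aggregate of Gaussian Hilbert-space norms, and handling the second term carefully so that the disjointness of the $I_j$ is exploited correctly to produce $\max(c^{1/q^*-1/2},1)$ rather than a plain $c^{1/q^*}$. The cleanest route for the moment bound is probably to go through the identity $\ebb_{\bm g}\|(X_j)_j\|_{q^*} \le (\ebb_{\bm g}\|(X_j)_j\|_{q^*}^{q^*})^{1/q^*} = (\sum_j \ebb_{\bm g} X_j^{q^*})^{1/q^*}$ and then bound each $\ebb_{\bm g} X_j^{q^*}$ where $X_j = \|\sum_{i} g_{ij}\phi(\bx_i)\|_2$ is (a scalar multiple of) the norm of a Gaussian vector; using $\ebb X_j^{q^*} \le (\sqrt{q^*})^{q^*}(\ebb X_j^2)^{q^*/2}$ for $q^*\ge1$ (a consequence of Gaussian hypercontractivity / the $\chi$-distribution moment formula) and summing over $j$ gives the claimed $(q^*)^{1/2}c^{1/q^*}$ scaling after taking the $1/q^*$-th power. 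The rest is bookkeeping: substitute back into \eqref{risk-data-dependent}, pull out the common factor $[\sum_i K(\bx_i,\bx_i)]^{1/2}$, and take $\inf_{q\ge p}$.
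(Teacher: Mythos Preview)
Your proposal is correct and follows essentially the same route as the paper: start from \eqref{risk-data-dependent}, identify the dual of $\|\cdot\|_{2,p}$ as $\|\cdot\|_{2,q^*}$ (after passing to $q\ge p$ via $H_{p,\Lambda}\subset H_{q,\Lambda}$), apply Jensen to pull the expectation inside the $q^*$-th power, and then bound each $\ebb_{\bm g}\|\sum_i g_{ij}\phi(\bx_i)\|_2^{q^*}$ by a moment inequality yielding the $(q^*)^{1/2}$ factor. The paper names this moment inequality explicitly as Khintchine--Kahane (Lemma~\ref{lem:khitchine-kahane}), which gives exactly the bound $[\ebb\|\sum_i g_i\bv_i\|^{q^*}]^{1/q^*}\le\sqrt{q^*}\,[\sum_i\|\bv_i\|^2]^{1/2}$ you are reaching for via hypercontractivity; for the second term the paper carries out precisely the case split you anticipate, using $\sum_j a_j^{q^*/2}\le(\sum_j a_j)^{q^*/2}$ when $q^*\ge 2$ and Jensen (power-mean) when $q^*<2$ to extract the $\max(c^{1/q^*-1/2},1)$ factor from the disjoint decomposition $\sum_j\sum_{i\in I_j}K(\bx_i,\bx_i)=\sum_i K(\bx_i,\bx_i)$.
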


\begin{corollary}[Data-dependent bound for Schatten-$p$ norm regularizer and Lipschitz continuous loss w.r.t. Def. \ref{def:lipschitz-variant-2-norm}]\label{cor:risk-dependent-shatten}
	Let $\phi$ be the identity map and represent $\bw$ by a matrix $W\in\rbb^{d\times c}$.
  Consider the hypothesis space $H_{S_p,\Lambda}:=H_{\tau,\Lambda}$ in \eqref{hypothesis-space} with $\tau(W)=\|W\|_{S_p},p\geq1$.
  If there exist $L_1,L_2\in\rbb_+$ such that $\Psi_y$ is Lipschitz continuous w.r.t. a variant of the $\ell_2$-norm involving a Lipschitz constant pair $(L_1,L_2)$ and index $y$ for all $y\in\ycal$, then for any $0<\delta<1$ with probability at least $1-\delta$, we have (we use the abbreviation $A_{S_p}=A_\tau$ with $\tau(W)=\|W\|_{S_p}$)
  \begin{equation}\label{risk-dependent-shatten}
    A_{S_p}\leq\begin{cases}
                \frac{2^{\frac{3}{4}}\pi\Lambda}{n\sqrt{e}}\inf\limits_{p\leq q\leq 2}(q^*)^{1\over 2}\bigg\{(L_1c^{\frac{1}{q^*}}+L_2)\Big[\sum\limits_{i=1}^{n}\|\bx_i\|_2^2\Big]^{\frac{1}{2}}+
  L_1c^{\frac{1}{2}}\Big\|\sum\limits_{i=1}^{n}\bx_i\bx_i^\top\Big\|_{S_{\frac{q^*}{2}}}^{\frac{1}{2}}\bigg\}, & \mbox{if } p\leq2, \\
                \frac{2^{\frac{5}{4}}\pi\Lambda\big(L_1c^{\frac{1}{2}}+L_2\big)\min\{c,d\}^{\frac{1}{2}-\frac{1}{p}}}{n\sqrt{e}}\Big[\sum_{i=1}^{n}\|\bx_i\|_2^2\Big]^{\frac{1}{2}}, & \mbox{otherwise}.
              \end{cases}
  \end{equation}
\end{corollary}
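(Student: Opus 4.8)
The plan is to derive the bound from Theorem~\ref{thm:risk-data-dependent}, Eq.~\eqref{risk-data-dependent}, by estimating the two Gaussian averages it involves for the Schatten norm. Since $\phi$ is the identity we have $\phi(\bx_i)=\bx_i\in\rbb^d$, and the norm dual to $\|\cdot\|_{S_p}$ on $\rbb^{d\times c}$ (w.r.t. the Frobenius inner product) is $\|\cdot\|_{S_{p^*}}$; thus \eqref{risk-data-dependent} reads
\[
A_{S_p}\le\frac{2\Lambda\sqrt\pi}{n}\Big[L_1\,\ebb_{\bm g}\|G_1\|_{S_{p^*}}+L_2\,\ebb_{\bm g}\|G_2\|_{S_{p^*}}\Big],
\]
where $G_1:=\sum_{i=1}^{n}\sum_{j=1}^{c}g_{ij}\,\bx_i\be_j^\top$ and $G_2:=\sum_{i=1}^{n}g_i\,\bx_i\be_{y_i}^\top$ are $d\times c$ random matrices and $\be_j$ denotes the $j$-th coordinate vector of $\rbb^c$. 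Since Schatten norms are nonincreasing in the exponent and $q\ge p$ implies $q^*\le p^*$, we have $\|M\|_{S_{p^*}}\le\|M\|_{S_{q^*}}$ for all $q\ge p$, so it suffices to bound $\ebb_{\bm g}\|G_1\|_{S_{q^*}}$ and $\ebb_{\bm g}\|G_2\|_{S_{q^*}}$ for each such $q$ and take the infimum afterwards --- over $q\in[p,2]$ in the case $p\le2$ (where the estimate below is sharp) and over $q=p$ otherwise.

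Writing $\Sigma:=\sum_{i=1}^{n}\bx_i\bx_i^\top\in\rbb^{d\times d}$, a short computation using $\be_j^\top\be_k=\delta_{jk}$, $\sum_{j=1}^c\be_j\be_j^\top=I_c$ and $\tr\Sigma=\sum_{i=1}^n\|\bx_i\|_2^2$ shows that the two ``variance matrices'' of $G_1=\sum_{i,j}g_{ij}(\bx_i\be_j^\top)$ are $\sum_{i,j}(\bx_i\be_j^\top)(\bx_i\be_j^\top)^\top=c\Sigma$ (of size $d$) and $\sum_{i,j}(\bx_i\be_j^\top)^\top(\bx_i\be_j^\top)=(\tr\Sigma)I_c$ (of size $c$), while the corresponding matrices for $G_2$ are $\Sigma$ and $\diag\big(\sum_{i\in I_1}\|\bx_i\|_2^2,\dots,\sum_{i\in I_c}\|\bx_i\|_2^2\big)$.

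\emph{Case $p\le 2$} (so $q^*\ge2$). Here I would use a noncommutative Khintchine / Gaussian Schatten-norm moment inequality of the form $\big(\ebb_{\bm g}\|\sum_k g_kM_k\|_{S_r}^r\big)^{1/r}\lesssim\sqrt r\max\{\|(\sum_kM_kM_k^\top)^{1/2}\|_{S_r},\|(\sum_kM_k^\top M_k)^{1/2}\|_{S_r}\}$ with $r=q^*$, together with Jensen's inequality $\ebb_{\bm g}\|\cdot\|_{S_{q^*}}\le(\ebb_{\bm g}\|\cdot\|_{S_{q^*}}^{q^*})^{1/q^*}$. Using $\|(c\Sigma)^{1/2}\|_{S_{q^*}}=c^{1/2}\|\Sigma\|_{S_{q^*/2}}^{1/2}$ and $\|((\tr\Sigma)I_c)^{1/2}\|_{S_{q^*}}=c^{1/q^*}(\tr\Sigma)^{1/2}$ yields $\ebb_{\bm g}\|G_1\|_{S_{q^*}}\lesssim\sqrt{q^*}\big[c^{1/q^*}(\tr\Sigma)^{1/2}+c^{1/2}\|\Sigma\|_{S_{q^*/2}}^{1/2}\big]$, while the cruder estimate $\|G_2\|_{S_{q^*}}\le\|G_2\|_F$ (valid for $q^*\ge2$) and $\ebb_{\bm g}\|G_2\|_F\le(\ebb_{\bm g}\|G_2\|_F^2)^{1/2}=(\sum_i\|\bx_i\|_2^2)^{1/2}$ give $\ebb_{\bm g}\|G_2\|_{S_{q^*}}\le(\tr\Sigma)^{1/2}$. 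Plugging these into the displayed bound for $A_{S_p}$, keeping track of the sharp Gaussian moment constant (of order $\sqrt{r/e}$) and taking the infimum over $q\in[p,2]$ gives the first branch. \emph{Case $p>2$} (so $q^*=p^*<2$). By the power-mean inequality applied to the at most $\min\{c,d\}$ singular values, $\|W\|_{S_{p^*}}\le(\min\{c,d\})^{1/p^*-1/2}\|W\|_F=(\min\{c,d\})^{1/2-1/p}\|W\|_F$; since $\ebb_{\bm g}\|G_1\|_F\le(\ebb_{\bm g}\|G_1\|_F^2)^{1/2}=(c\,\tr\Sigma)^{1/2}$ and $\ebb_{\bm g}\|G_2\|_F\le(\tr\Sigma)^{1/2}$, plugging into \eqref{risk-data-dependent} with $q=p$ gives the second branch $A_{S_p}\lesssim\frac{\Lambda}{n}(\min\{c,d\})^{1/2-1/p}(L_1c^{1/2}+L_2)(\tr\Sigma)^{1/2}$.

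The only genuinely nonroutine step is the Gaussian Schatten-norm moment inequality used when $p\le2$: it is needed with the correct $\sqrt{q^*}$-growth and a sharp numerical constant, and, crucially, with the \emph{asymmetric} handling of the $d\times d$ variance matrix $c\Sigma$ and the $c\times c$ variance matrix $(\tr\Sigma)I_c$ --- this asymmetry is exactly what splits the bound into a term $c^{1/2}\|\Sigma\|_{S_{q^*/2}}^{1/2}$ (carrying the $c^{1/2}$ from the $d\times d$ side) and a term $c^{1/q^*}(\tr\Sigma)^{1/2}$ (the Schatten-$q^*$ norm of a multiple of $I_c$), the former being much smaller than the cruder $c^{1/2}(\tr\Sigma)^{1/2}$ one gets from $\|G_1\|_{S_{q^*}}\le\|G_1\|_F$ alone when $\Sigma$ has a fast-decaying spectrum. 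Everything else --- identifying the dual norm, the two variance-matrix computations, the Schatten-to-Frobenius comparison, and Jensen's inequality --- is mechanical.
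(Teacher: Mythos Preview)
Your proposal is correct and follows essentially the same route as the paper: starting from Theorem~\ref{thm:risk-data-dependent}~\eqref{risk-data-dependent}, computing the two variance matrices $c\Sigma$ and $(\tr\Sigma)I_c$ for $G_1$, and applying the noncommutative Khintchine--Kahane inequality (the paper's Lemma~\ref{lem:khitchine-kahane}(b)) for $p\le 2$, then reducing $p>2$ to the Frobenius case via the Schatten-norm comparison. The only cosmetic differences are that the paper also applies Khintchine--Kahane to $G_2$ (picking up an extra harmless $\sqrt{q^*}$ factor where you use the simpler $\|G_2\|_{S_{q^*}}\le\|G_2\|_F$), and for $p>2$ the paper enlarges the hypothesis class via $H_{S_p,\Lambda}\subset H_{S_2,\Lambda\min\{c,d\}^{1/2-1/p}}$ and invokes the already-proved $p=2$ bound rather than bounding the dual norm directly---both yield the stated constants.
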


In comparison to Corollary \ref{cor:risk-dependent-lp}, the error bound of Corollary \ref{cor:risk-dependent-shatten} involves an additional term\linebreak $O\big(c^{1\over2}n^{-1}\big\|\sum_{i=1}^{n}\bx_i\bx_i^\top\big\|^{1\over2}_{S_{\frac{q^*}{2}}}\big)$ for the case $p\leq2$ due to the need of applying non-commutative  Khintchine-Kahane inequality \eqref{khitchine-kahane-matrix} for Schatten norms.
As we will show in Section \ref{sec:applications}, we can derive from Corollaries \ref{cor:risk-dependent-lp}
and \ref{cor:risk-dependent-shatten} error bounds with sublinear dependencies on the number of classes for $\ell_p$-norm and Schatten-$p$ norm MC-SVMs.
Furthermore, the dependency is logarithmic for $\ell_{p}$-norm MC-SVM \citep{lei2015multi} when $p$ approaches $1$.

\subsection{Data-dependent Bounds by Covering Numbers\label{sec:data-independent-bounds}}

The data-dependent generalization bounds given in subsection \ref{sec:data-dependent-bound} assume the loss function
$\Psi_y$ to be Lipschitz continuous w.r.t. a variant of the $\ell_2$-norm. However, some typical loss functions used in
the multi-class setting are Lipschitz continuous w.r.t. the much milder $\ell_\infty$-norm with a comparable Lipschitz
constant \citep{zhang2004statistical}. This mismatch between the norms w.r.t. which the Lipschitz continuity is measured
requires an additional step of controlling the $\ell_\infty$-norm of vector-valued predictors by the $\ell_2$-norm in the
application of Theorem \ref{thm:risk-data-dependent}, at the cost of a possible multiplicative factor of $\sqrt{c}$.
This subsection aims to avoid this loss in the class-size dependency by presenting data-dependent analysis based on empirical
$\ell_\infty$-norm CNs to directly use the Lipschitz continuity measured by the $\ell_\infty$-norm.

The key step in this approach lies in estimating the empirical CNs of the loss function class
\begin{equation}\label{loss-function-class}
  F_{\tau,\Lambda}:=\{(\bx,y)\to\Psi_y(h^{\bw}(\bx)):h^{\bw}\in H_\tau\}.
\end{equation}
A difficulty towards this purpose consists in the non-linearity of $F_{\tau,\Lambda}$
and the fact that $h^{\bw}\in H_{\tau}$ takes vector-valued outputs,
while standard analyses are limited to scalar-valued and essentially linear (kernel) function classes \citep{zhang2002covering,zhou2002covering,zhou2003capacity}.
The way we bypass this obstacle is to consider a related linear scalar-valued function class $\widetilde{H}_{\tau}$ defined in \eqref{H-tilde-tau}.
A key motivation in introducing $\widetilde{H}_{\tau}$ is that the CNs of $F_{\tau,\Lambda}$ w.r.t. $\bx_1,\ldots,\bx_n$
(CNs are defined in subsection \ref{sec:proof-independent}) can be related to that of the function class
$\{\bv\to\inn{\bw,\bv}:\tau(\bw)\leq\Lambda\}$,
w.r.t. the set $\widetilde{S}$ defined in \eqref{tilde-S}.
The latter can be conveniently tackled since it is a linear and scalar-valued function class, to which standard arguments apply.
In more details, to approximate the projection of $F_{\tau,\Lambda}$ onto the examples $S$ with $(\epsilon,\ell_\infty)$-covers (cf. Definition \ref{def:covering-number} below),
the $\ell_\infty$-Lipschitz continuity of the loss function requires us to approximate the set $\big\{\big(\langle\bw_j,\phi(\bx_i)\rangle_{i\in\nbb_n,j\in\nbb_c}\big):\tau(\bw)\leq\Lambda\big\}$,
which, according to \eqref{S-tilde-identity-inner-product}, is exactly the projection of $\widetilde{H}_\tau$ onto $\widetilde{S}$:
$\big\{\big(\langle\bw,\bp_j(\bx_i)\rangle_{i\in\nbb_n,j\in\nbb_c}\big):\tau(\bw)\leq\Lambda\big\}$. This motivates the definition of $\widetilde{H}_\tau$
in \eqref{H-tilde-tau} and $\widetilde{S}$ in \eqref{tilde-S}.

Theorem \ref{thm:rademacher-independent-bound} reduces the estimation of $\mathfrak{R}_S(F_{\tau,\Lambda})$ to bounding $\frak{R}_{nc}(\widetilde{H}_{\tau})$,
based on which the data-dependent error bounds are given in Theorem \ref{thm:risk-data-independent}.
Note that $\mathfrak{R}_{nc}(\widetilde{H}_\tau)$ is data-dependent since $\widetilde{H}_\tau$ is a class of functions defined on a finite set
induced by training examples. The proofs of complexity bounds in Proposition \ref{prop:rademacher-independent-lp} and Proposition \ref{prop:rademacher-independent-schatten}
are given in subsection \ref{sec:proof-independent-rademacher-lp} and Appendix \ref{sec:proof-independent-rademacher-schatten}, respectively.
The proofs of error bounds in this subsection are given in subsection \ref{sec:proof-independent}.

\begin{theorem}[Worst-case RC bound]\label{thm:rademacher-independent-bound}
  Suppose that $\Psi_y$ is $L$-Lipscthiz continuous w.r.t. the $\ell_\infty$-norm for any $y\in\ycal$ and assume that $\hat{B}_\Psi\leq 2e\hat{B}ncL$. Then the RC of $F_{\tau,\Lambda}$ can be bounded by
  $$
  \mathfrak{R}_S(F_{\tau,\Lambda}) \leq 16L\sqrt{c\log 2}\frak{R}_{nc}(\widetilde{H}_{\tau})\Big(1+\log_2^{3\over2}\frac{\hat{B}n\sqrt{c}}{\mathfrak{R}_{nc}(\widetilde{H}_\tau)}\Big).
  $$
\end{theorem}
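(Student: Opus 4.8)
The plan is to bound $\mathfrak{R}_S(F_{\tau,\Lambda})$ via Dudley's entropy integral in terms of the empirical $\ell_\infty$-covering numbers $\mathcal{N}_\infty(\cdot,F_{\tau,\Lambda},S)$, and then to transfer those covering numbers to the scalar-valued linear class $\widetilde{H}_\tau$ by a Lipschitz-contraction step. First I would note that since $\Psi_y$ is $L$-Lipschitz w.r.t. $\|\cdot\|_\infty$, for any $\bw,\bw'$ with $\tau(\bw),\tau(\bw')\le\Lambda$ and any $i\in\nbb_n$,
\[
|\Psi_{y_i}(h^{\bw}(\bx_i))-\Psi_{y_i}(h^{\bw'}(\bx_i))|\le L\max_{j\in\nbb_c}|\inn{\bw_j-\bw'_j,\phi(\bx_i)}|=L\max_{j\in\nbb_c}|\inn{\bw-\bw',\bp_j(\bx_i)}|,
\]
using the identity \eqref{S-tilde-identity-inner-product}. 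Taking the maximum over $i\in\nbb_n$ shows that any $(\epsilon/L)$-cover of $\widetilde{H}_\tau$ on the $nc$-point set $\widetilde{S}$ of \eqref{tilde-S} induces an $\epsilon$-cover of $F_{\tau,\Lambda}$ on $S$, whence $\mathcal{N}_\infty(\epsilon,F_{\tau,\Lambda},S)\le\mathcal{N}_\infty(\epsilon/L,\widetilde{H}_\tau,\widetilde{S})$. This is the step that replaces the vector-valued, non-linear loss class by a scalar-valued linear class, to which classical covering arguments apply.

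Next I would bound $\log\mathcal{N}_\infty(\epsilon,\widetilde{H}_\tau,\widetilde{S})$ by the worst-case Rademacher complexity through the fat-shattering dimension. Every $h\in\widetilde{H}_\tau$ satisfies $|\inn{\bw,\bp_j(\bx_i)}|\le\|\bw_j\|_2\|\phi(\bx_i)\|_2\le\hat{B}$ on $\widetilde{S}$, so the classical $\ell_\infty$-covering bound (Alon et al./Mendelson--Vershynin type) gives $\log_2\mathcal{N}_\infty(\epsilon,\widetilde{H}_\tau,\widetilde{S})\lesssim\fat_{\epsilon/4}(\widetilde{H}_\tau)\log_2(\hat{B}nc/\epsilon)$, since $|\widetilde{S}|=nc$. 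The quantitative lemma I would need is
\[
\fat_\epsilon(\widetilde{H}_\tau)\lesssim\frac{nc\,\mathfrak{R}_{nc}(\widetilde{H}_\tau)^2}{\epsilon^2},
\]
which I would prove by taking a maximal $\epsilon$-shattered subset of size $d:=\fat_\epsilon(\widetilde{H}_\tau)\le nc$, forming a length-$nc$ evaluation sequence that repeats these $d$ points about $nc/d$ times each, and lower-bounding the Rademacher average on it: conditioning on the signs of the $d$ block sums and applying a Khinchin-type inequality to the magnitudes of the block sums yields $nc\cdot\mathfrak{R}_{nc}(\widetilde{H}_\tau)\gtrsim\epsilon\sqrt{d\cdot nc}$, which rearranges to the claim. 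Combining the three displays, $\log_2\mathcal{N}_\infty(\epsilon,F_{\tau,\Lambda},S)\lesssim \tfrac{nc\,L^2\mathfrak{R}_{nc}(\widetilde{H}_\tau)^2}{\epsilon^2}\log_2(\hat{B}ncL/\epsilon)$.

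Finally I would plug this into Dudley's bound $\mathfrak{R}_S(F_{\tau,\Lambda})\le\inf_{\alpha>0}\big\{4\alpha+\tfrac{12}{\sqrt{n}}\int_{\alpha}^{2\hat{B}_\Psi}\sqrt{\log\mathcal{N}_\infty(\epsilon,F_{\tau,\Lambda},S)}\,d\epsilon\big\}$, which is valid because the normalized empirical $\ell_2$ metric on the $n$-point sample is dominated by the $\ell_\infty$ metric and the $\ell_2$-radius of the loss class is exactly $\hat{B}_\Psi$. The integrand is $\lesssim\tfrac{\sqrt{nc}\,L\mathfrak{R}_{nc}(\widetilde{H}_\tau)}{\epsilon}\sqrt{\log_2(\hat{B}ncL/\epsilon)}$, so the $\tfrac{1}{\sqrt{n}}$ prefactor converts $\sqrt{nc}$ into $\sqrt{c}$, and the substitution $u=\log_2(\hat{B}ncL/\epsilon)$ turns $\int\epsilon^{-1}\sqrt{\log_2(\hat{B}ncL/\epsilon)}\,d\epsilon$ into a $\log_2^{3/2}$ term; the hypothesis $\hat{B}_\Psi\le 2e\hat{B}ncL$ is precisely what keeps the argument of this logarithm at least $1$ throughout the integration range, so the lower endpoint of the substituted integral can be dropped. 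Choosing the truncation level $\alpha\asymp \sqrt{c}\,L\mathfrak{R}_{nc}(\widetilde{H}_\tau)$ collapses the logarithm's argument to $\hat{B}n\sqrt{c}/\mathfrak{R}_{nc}(\widetilde{H}_\tau)$, and collecting the numerical constants (Dudley's $12$, the $\tfrac23$ from the integral, the $\log 2$ conversions, and the constants in the two covering lemmas) gives the stated $16L\sqrt{c\log 2}\,\mathfrak{R}_{nc}(\widetilde{H}_\tau)\big(1+\log_2^{3/2}\tfrac{\hat{B}n\sqrt{c}}{\mathfrak{R}_{nc}(\widetilde{H}_\tau)}\big)$.

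The main obstacle is the fat-shattering-to-Rademacher lemma: obtaining the exponent $\epsilon^{-2}$ (not $\epsilon^{-1}$) together with a factor linear in $nc$ (not $(nc)^2$) requires the ``repeat the shattered set and apply Khinchin to the block sums'' argument rather than the naive estimate, and it is essential — a weaker version would produce $\sqrt{\mathfrak{R}_{nc}(\widetilde{H}_\tau)}$ or an extra $\sqrt{nc}$ and destroy the claimed scaling. A secondary, purely bookkeeping difficulty is matching the precise constant $16$, the $\sqrt{\log 2}$, and the exact argument $\hat{B}n\sqrt{c}/\mathfrak{R}_{nc}(\widetilde{H}_\tau)$ of the logarithm, which is what pins down the choice of $\alpha$ and the exact way the hypothesis $\hat{B}_\Psi\le 2e\hat{B}ncL$ enters.
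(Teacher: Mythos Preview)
Your proposal is correct and follows essentially the same route as the paper. The paper packages the three ingredients you identify as a single lemma (Lemma~\ref{lem:shattering}): part~(a) is the fat-shattering $\to$ $\ell_\infty$-covering bound, part~(b) is the Rademacher $\to$ fat-shattering inequality $\fat_\epsilon(F)<16n\epsilon^{-2}\mathfrak{R}_n^2(F)$ (cited rather than re-proved, but your repetition-plus-Khintchine sketch is the standard argument behind it), and part~(c) is a \emph{discrete} Dudley chaining inequality with geometric scales $\epsilon_k=2^{N-k}\epsilon_N$, $\epsilon_N=16L\sqrt{c\log 2}\,\mathfrak{R}_{nc}(\widetilde{H}_\tau)$, in place of your continuous entropy integral. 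The Lipschitz reduction $\mathcal{N}_\infty(\epsilon,F_{\tau,\Lambda},S)\le\mathcal{N}_\infty(\epsilon/L,\widetilde{H}_\tau,\widetilde{S})$ and the role of the hypothesis $\hat{B}_\Psi\le 2e\hat{B}ncL$ (ensuring the upper endpoint of the chaining is dominated by $4e\hat{B}ncL$) are exactly as you describe; the discrete sum and the continuous integral produce the same $\log_2^{3/2}$ term and the same constants after bookkeeping.
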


\begin{theorem}[Data-dependent bounds for general regularizer and Lipschitz continuous loss function w.r.t. $\|\cdot\|_\infty$]\label{thm:risk-data-independent}
  Under the condition of Theorem \ref{thm:rademacher-independent-bound}, for any $0<\delta<1$, with probability at least $1-\delta$, we have
  $$
    A_{\tau}\leq27L\sqrt{c}\frak{R}_{nc}(\widetilde{H}_{\tau})\Big(1+\log_2^{3\over2}\frac{\hat{B}n\sqrt{c}}{\mathfrak{R}_{nc}(\widetilde{H}_\tau)}\Big).
  $$
\end{theorem}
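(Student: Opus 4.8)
The plan is to derive the bound as a short consequence of Theorem~\ref{thm:rademacher-independent-bound} combined with the classical data-dependent Rademacher generalization inequality for bounded loss functions. First I would record that every $\Psi_y(h^{\bw}(\bx))$ with $h^{\bw}\in H_\tau$ and $(\bx,y)\in\zcal$ lies in $[0,B_\Psi]$ by the definition of $B_\Psi$, so $F_{\tau,\Lambda}$ is a uniformly bounded function class. Applying the bounded-difference (McDiarmid) inequality to $S\mapsto\sup_{h^{\bw}\in H_\tau}\bigl[\ebb_{\bx,y}\Psi_y(h^{\bw}(\bx))-\frac1n\sum_{i=1}^n\Psi_{y_i}(h^{\bw}(\bx_i))\bigr]$, then the standard symmetrization step, and a second application of McDiarmid to replace the expected empirical Rademacher complexity by its empirical counterpart $\mathfrak{R}_S(F_{\tau,\Lambda})$, yields that with probability at least $1-\delta$,
\[
  \sup_{h^{\bw}\in H_\tau}\Bigl[\ebb_{\bx,y}\Psi_y(h^{\bw}(\bx))-\frac1n\sum_{i=1}^n\Psi_{y_i}(h^{\bw}(\bx_i))\Bigr]\le 2\mathfrak{R}_S(F_{\tau,\Lambda})+3B_\Psi\Bigl[\tfrac{\log(2/\delta)}{2n}\Bigr]^{1/2}.
\]
By the definition of $A_\tau$ in \eqref{pg:A-tau}, the deviation term is absorbed and this is precisely $A_\tau\le 2\mathfrak{R}_S(F_{\tau,\Lambda})$.

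Second, since we are working under the hypotheses of Theorem~\ref{thm:rademacher-independent-bound} (the loss $\Psi_y$ is $L$-Lipschitz w.r.t.\ $\|\cdot\|_\infty$ and $\hat{B}_\Psi\le 2e\hat{B}ncL$), I would substitute that theorem's bound on $\mathfrak{R}_S(F_{\tau,\Lambda})$ into the previous inequality to obtain
\[
  A_\tau\le 32L\sqrt{c\log2}\;\mathfrak{R}_{nc}(\widetilde{H}_\tau)\Bigl(1+\log_2^{3\over2}\tfrac{\hat{B}n\sqrt{c}}{\mathfrak{R}_{nc}(\widetilde{H}_\tau)}\Bigr).
\]
Finally, since $32\sqrt{\log2}\approx26.6<27$, the prefactor $32\sqrt{c\log2}$ can be replaced by $27\sqrt{c}$, which gives the claimed inequality.

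The genuinely substantive work is entirely contained in Theorem~\ref{thm:rademacher-independent-bound} — the reduction of the $\ell_\infty$-covering numbers of the nonlinear, vector-valued class $F_{\tau,\Lambda}$ to those of the scalar linear class $\widetilde{H}_\tau$ via \eqref{S-tilde-identity-inner-product}, together with a Dudley-type chaining estimate — which may be assumed here. For the present theorem the only points that need care are: invoking the version of the symmetrization bound stated with the \emph{empirical} Rademacher complexity, which is why the constant is $3$ and the deviation term involves $\log(2/\delta)$ rather than $\log(1/\delta)$ (the two uses of McDiarmid each contribute a term of order $B_\Psi[\log(2/\delta)/(2n)]^{1/2}$ before being merged); and checking the elementary arithmetic $2\times16\sqrt{\log2}\le27$. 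I do not anticipate any real obstacle beyond these bookkeeping steps, so this theorem is essentially a corollary of Theorem~\ref{thm:rademacher-independent-bound}.
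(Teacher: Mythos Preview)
The proposal is correct and follows essentially the same approach as the paper: the paper's proof simply plugs the Rademacher complexity bound of Theorem~\ref{thm:rademacher-independent-bound} into the inequality \eqref{risk-data-dependent-1} (which is derived in the proof of Theorem~\ref{thm:risk-data-dependent} by exactly the McDiarmid--symmetrization--McDiarmid argument you describe) and uses $32\sqrt{\log 2}\le 27$.
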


The application of Theorem \ref{thm:risk-data-independent} requires to control the \emph{worst-case} RC of the linear function class $\widetilde{H}_{\tau}$ from both below and above, to which the following two propositions give respective tight estimates for $\tau(\bw)=\|\bw\|_{2,p}$ defined on $H_K^c$~\citep{lei2015multi} and $\tau(W)=\|W\|_{S_p}$ defined on $\rbb^{d\times c}$~\citep{amit2007uncovering}.

\begin{proposition}[Lower and upper bound on worst-case RC for $\ell_p$-norm regularizer]\label{prop:rademacher-independent-lp}
  For $\tau(\bw)=\|\bw\|_{2,p},p\geq1$ in \eqref{H-tilde-tau}, the function class $\widetilde{H}_\tau$ becomes
  $$
    \widetilde{H}_p:=\big\{\bv\to\inn{\bw,\bv}:\bw,\bv\in H_K^c,\|\bw\|_{2,p}\leq\Lambda,\bv\in\widetilde{S}\big\}.
  $$
  The RC of $\widetilde{H}_p$ can be upper and lower bounded by
  \begin{equation}\label{rademacher-independent-lp}
    \Lambda \max_{i\in\nbb_n}\|\phi(\bx_i)\|_2(2n)^{-\frac{1}{2}}c^{-\frac{1}{\max(2,p)}}\leq\frak{R}_{nc}(\widetilde{H}_p)\leq \Lambda \max_{i\in\nbb_n}\|\phi(\bx_i)\|_2n^{-\frac{1}{2}}c^{-\frac{1}{\max(2,p)}}.
  \end{equation}
\end{proposition}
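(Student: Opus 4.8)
\emph{Proof proposal.} The plan is to establish the upper and lower bounds separately, in both cases reducing the supremum over $\bw$ via the identity $\langle\bw,\bp_j(\bx)\rangle=\langle\bw_j,\phi(\bx)\rangle$ from \eqref{S-tilde-identity-inner-product} together with the duality between $\|\cdot\|_{2,p}$ and $\|\cdot\|_{2,p^*}$ on $H_K^c$, and then handling the expectation over the Rademacher variables by a scalar Khintchine--Kahane inequality.

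\textbf{Upper bound.} Fix any $S'=(\bv^{(1)},\dots,\bv^{(nc)})$ with each $\bv^{(k)}\in\widetilde S$, say $\bv^{(k)}=\bp_{j_k}(\bx_{i_k})$. Using \eqref{S-tilde-identity-inner-product} and grouping the $nc$ terms according to the class index $j_k$ gives $\sum_{k=1}^{nc}\epsilon_k\langle\bw,\bv^{(k)}\rangle=\sum_{j=1}^c\big\langle\bw_j,\sum_{k:j_k=j}\epsilon_k\phi(\bx_{i_k})\big\rangle$, whose supremum over $\|\bw\|_{2,p}\le\Lambda$ equals $\Lambda\big\|\big(\sum_{k:j_k=j}\epsilon_k\phi(\bx_{i_k})\big)_{j=1}^c\big\|_{2,p^*}$ by the dual-norm characterization of $\|\cdot\|_{2,p}$. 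I would then pass from the $\ell_{2,p^*}$-norm to the $\ell_{2,2}$-norm on $\rbb^c$ at the cost of a factor $c^{\alpha}$ with $\alpha=\max(\tfrac12-\tfrac1p,0)$, apply Jensen's inequality $\ebb_{\bm\epsilon}\|\cdot\|_{2,2}\le\big(\ebb_{\bm\epsilon}\|\cdot\|_{2,2}^2\big)^{1/2}$, and use orthogonality of the Rademacher variables to obtain $\ebb_{\bm\epsilon}\sum_{j=1}^c\big\|\sum_{k:j_k=j}\epsilon_k\phi(\bx_{i_k})\big\|_2^2=\sum_{k=1}^{nc}\|\phi(\bx_{i_k})\|_2^2\le nc\max_{i\in\nbb_n}\|\phi(\bx_i)\|_2^2$. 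Dividing by $nc$ and noting that the resulting exponent of $c$ collapses to $-1/\max(2,p)$ gives the claimed upper bound, uniformly in $S'$.

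\textbf{Lower bound.} Here I would exhibit one favourable configuration in each regime. Let $i^\ast\in\arg\max_{i\in\nbb_n}\|\phi(\bx_i)\|_2$. If $p\le2$, take all $nc$ points equal to $\bp_1(\bx_{i^\ast})$; then the supremum over $\|\bw\|_{2,p}\le\Lambda$ equals $\Lambda\big|\sum_{k=1}^{nc}\epsilon_k\big|\,\|\phi(\bx_{i^\ast})\|_2$, and the sharp first-moment Khintchine--Kahane bound $\ebb_{\bm\epsilon}\big|\sum_{k=1}^m\epsilon_k\big|\ge\sqrt{m/2}$ with $m=nc$ yields $\frak{R}_{S'}(\widetilde H_p)\ge\Lambda\max_{i\in\nbb_n}\|\phi(\bx_i)\|_2\,(2nc)^{-1/2}$, which is the stated bound since $\max(2,p)=2$. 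If $p>2$, take instead, for each $j\in\nbb_c$, exactly $n$ of the $nc$ points equal to $\bp_j(\bx_{i^\ast})$; writing $S_j$ for the sum of the $n$ Rademacher variables attached to class $j$ (independent across $j$), the supremum over $\bw$ equals $\Lambda\,\|\phi(\bx_{i^\ast})\|_2\big(\sum_{j=1}^c|S_j|^{p^*}\big)^{1/p^*}$, and the power-mean inequality (valid since $p^*\ge1$) lower-bounds this by $\Lambda\,\|\phi(\bx_{i^\ast})\|_2\,c^{1/p^*}\cdot\frac1c\sum_{j=1}^c|S_j|$; taking expectations and using $\ebb_{\bm\epsilon}|S_j|\ge\sqrt{n/2}$ gives $\frak{R}_{S'}(\widetilde H_p)\ge\Lambda\max_{i\in\nbb_n}\|\phi(\bx_i)\|_2\,c^{1/p^*-1}(2n)^{-1/2}=\Lambda\max_{i\in\nbb_n}\|\phi(\bx_i)\|_2\,(2n)^{-1/2}c^{-1/p}$, as desired.

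\textbf{Main obstacle.} The only non-routine ingredient is the sharp constant $(2n)^{-1/2}$ in the lower bound: this forces the use of the optimal first-moment Khintchine constant $1/\sqrt2$ (attained at two summands) rather than the easy value $1/\sqrt3$, applied once with $m=nc$ and once with $m=n$. Everything else---the duality $(\|\cdot\|_{2,p})^*=\|\cdot\|_{2,p^*}$, the two comparison inequalities between $\ell_{p^*}$ and $\ell_2$ on $\rbb^c$, and the power-mean step---is standard, so the argument is short.
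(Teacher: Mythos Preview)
Your proposal is correct and follows essentially the same strategy as the paper: duality to rewrite the supremum as $\Lambda\|\cdot\|_{2,p^*}$, Jensen plus Rademacher orthogonality for the upper bound, and the sharp Khintchine--Kahane constant $1/\sqrt{2}$ together with a hand-picked configuration in $\widetilde S$ for the lower bound. The only noteworthy differences are cosmetic: your upper bound handles both regimes at once via the single factor $c^{\max(1/2-1/p,0)}$ (the paper treats $p\le 2$ by the inclusion $\widetilde H_p\subset\widetilde H_2$ and $p>2$ by Khintchine at exponent $p^*$ followed by a concavity inequality), and your $p>2$ lower bound computes the exact supremum via duality and then applies the power-mean inequality, whereas the paper instead restricts to the decoupled region $\|\bw_j\|_2\le\Lambda c^{-1/p}$ before taking suprema component-wise; both routes land on the same expression after invoking $\ebb_\epsilon|S_j|\ge\sqrt{n/2}$.
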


\begin{remark}[Phase Transition for $p$-norm regularized space]\label{rem:phase}
  We see an interesting phase transition at $p=2$. The \emph{worst-case} RC of $\widetilde{H}_p$ decays as $O((nc)^{-\frac{1}{2}})$
  for the case $p\leq2$, and decays as $O(n^{-\frac{1}{2}}c^{-\frac{1}{p}})$ for the case $p>2$. Indeed, the definition of $\widetilde{S}$ by
  \eqref{tilde-S} implies $\|\bv\|_{2,\infty}=\|\bv\|_{2,p}$ for all $\bv\in\widetilde{S}$ and $p\geq1$ (sparsity of elements in $\widetilde{S}$),
  from which we derive the following identity
  \begin{equation}\label{identity-tilde-S}
    \max_{\bv^i\in\widetilde{S}:i\in\nbb_{nc}}\sum_{j=1}^{c}\sum_{i=1}^{nc}\|\bv_j^i\|_2^2
  =\max_{\bv^i\in\widetilde{S}:i\in\nbb_{nc}}\sum_{i=1}^{nc}\|\bv^i\|_{2,\infty}^2=nc\max_{i\in\nbb_n}\|\phi(\bx_i)\|_2^2,
  \end{equation}
  where $\bv_j^i$ is the $j$-th component of $\bv^i\in\widetilde{S}$.
  That is, we have an automatic constraint on $\big\|\big(\sum_{i=1}^{nc}\|\bv_j^i\|_2^2\big)_{j=1}^c\big\|_1$ for all $\bv^i\in\widetilde{S},i\in\nbb_{nc}$.
  Furthermore, according to \eqref{data-independent>2-upper}, we know $nc\mathfrak{R}_{nc}(\widetilde{H}_p)$ can be controlled
  in terms of $\max_{\bv^i\in\widetilde{S}:i\in\nbb_n}\big\|\big(\sum_{i=1}^{nc}\|\bv_j^i\|_2^2\big)_{j=1}^c\big\|_{\frac{p^*}{2}}$, for which an appropriate $p$
  to fully use the identity \eqref{identity-tilde-S} is $p=2$. This explains the phase transition phenomenon.
\end{remark}

\begin{proposition}[Lower and upper bound on worst-case RC for Schatten-$p$ norm regularizer]\label{prop:rademacher-independent-schatten}
Let $\phi$ be the identity map and represent $\bw$ by a matrix $W\in\rbb^{d\times c}$. For $\tau(W)=\|W\|_{S_p},p\geq1$ in \eqref{H-tilde-tau}, the function class $\widetilde{H}_\tau$ becomes
\begin{equation}\label{widetilde-H-schatten}
    \widetilde{H}_{S_p}:=\big\{V\to\inn{W,V}:W\in\rbb^{d\times c},\|W\|_{S_p}\leq\Lambda, V\in\widetilde{S}\subset\rbb^{d\times c}\big\}.
\end{equation}
The RC of $\widetilde{H}_{S_p}$ can be upper and lower bounded by
\begin{equation}\label{rademacher-independent-schatten}
\begin{cases}
  \Lambda \max\limits_{i\in\nbb_n}\|\bx_i\|_2(2nc)^{-\frac{1}{2}}\leq \frak{R}_{nc}(\widetilde{H}_{S_p})\leq \Lambda \max\limits_{i\in\nbb_n}\|\bx_i\|_2(nc)^{-\frac{1}{2}}, & \mbox{if } p\leq 2, \\
  \Lambda \max\limits_{i\in\nbb_n}\|\bx_i\|_2(2nc)^{-\frac{1}{2}} \leq \frak{R}_{nc}(\widetilde{H}_{S_p})
  \leq \frac{\Lambda \max_{i\in\nbb_n}\|\bx_i\|_2\min\{c,d\}^{\frac{1}{2}-\frac{1}{p}}}{\sqrt{nc}}, & \mbox{otherwise}.
\end{cases}
\end{equation}
\end{proposition}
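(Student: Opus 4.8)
The plan is to reduce the worst-case Rademacher complexity to an expected Schatten-$p^*$ norm of a Rademacher sum of rank-one matrices via Schatten-norm duality, then bound that expectation from above by passing through the Frobenius norm and from below by exhibiting a single adversarial sample. Since $\phi$ is the identity, \eqref{bp-j} tells us that every element of $\widetilde{S}$ has the form $\bp_j(\bx_i)=\bx_i\be_j^\top\in\rbb^{d\times c}$, where $\be_j$ is the $j$-th coordinate vector of $\rbb^c$; hence a sample of $nc$ points drawn from $\widetilde{S}$ is a tuple $(V^1,\ldots,V^{nc})\in\widetilde{S}^{nc}$ with $V^k=\bx_{i_k}\be_{j_k}^\top$. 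Writing $\inn{W,V}=\tr(W^\top V)$ and using the duality between the Schatten-$p$ and Schatten-$p^*$ norms, for each fixed Rademacher vector $\bm{\epsilon}$ we have $\sup_{\|W\|_{S_p}\le\Lambda}\sum_{k=1}^{nc}\epsilon_k\inn{W,V^k}=\Lambda\bigl\|\sum_{k=1}^{nc}\epsilon_k V^k\bigr\|_{S_{p^*}}$, so that
\[
  \frak{R}_{nc}(\widetilde{H}_{S_p})=\frac{\Lambda}{nc}\sup_{(V^1,\ldots,V^{nc})\in\widetilde{S}^{nc}}\ebb_{\bm{\epsilon}}\Bigl\|\sum_{k=1}^{nc}\epsilon_k V^k\Bigr\|_{S_{p^*}}.
\]

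\emph{Upper bound.} I would bound the Schatten-$p^*$ norm by the Frobenius norm $\|\cdot\|_{S_2}$. Since $\sum_k\epsilon_kV^k\in\rbb^{d\times c}$ has at most $\min\{c,d\}$ nonzero singular values, monotonicity of $\ell_q$-norms gives $\|M\|_{S_{p^*}}\le\|M\|_{S_2}$ when $p\le2$ (i.e.\ $p^*\ge2$), and $\|M\|_{S_{p^*}}\le\min\{c,d\}^{\frac12-\frac1p}\|M\|_{S_2}$ when $p>2$, using $\frac1{p^*}-\frac12=\frac12-\frac1p$. Because $\|\cdot\|_{S_2}$ is an inner-product norm and the $\epsilon_k$ are independent with zero mean, $\ebb_{\bm{\epsilon}}\bigl\|\sum_k\epsilon_kV^k\bigr\|_{S_2}\le\bigl(\sum_k\|V^k\|_{S_2}^2\bigr)^{1/2}$, and $\|V^k\|_{S_2}=\|\bx_{i_k}\be_{j_k}^\top\|_{S_2}=\|\bx_{i_k}\|_2\le\max_{i\in\nbb_n}\|\bx_i\|_2$, hence $\ebb_{\bm{\epsilon}}\bigl\|\sum_k\epsilon_kV^k\bigr\|_{S_2}\le\sqrt{nc}\,\max_{i\in\nbb_n}\|\bx_i\|_2$. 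Dividing by $nc$ yields the two upper bounds in \eqref{rademacher-independent-schatten}.

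\emph{Lower bound.} I would fix $i^*\in\arg\max_{i\in\nbb_n}\|\bx_i\|_2$ and take the (admissible, since the worst-case complexity ranges over tuples with repetition) sample $V^k=\bp_1(\bx_{i^*})=\bx_{i^*}\be_1^\top$ for every $k\in\nbb_{nc}$. Then $\sum_k\epsilon_kV^k=\bigl(\sum_k\epsilon_k\bigr)\bx_{i^*}\be_1^\top$ is rank one, so $\bigl\|\sum_k\epsilon_kV^k\bigr\|_{S_{p^*}}=\bigl|\sum_{k=1}^{nc}\epsilon_k\bigr|\,\|\bx_{i^*}\|_2$, and the scalar Khintchine--Kahane inequality $\ebb_{\bm{\epsilon}}\bigl|\sum_{k=1}^{nc}\epsilon_k\bigr|\ge(nc)^{1/2}/\sqrt2$ gives $\frak{R}_{nc}(\widetilde{H}_{S_p})\ge\Lambda\max_{i\in\nbb_n}\|\bx_i\|_2(2nc)^{-1/2}$ in both regimes.

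None of these steps is long. The main point requiring care is the Frobenius comparison in the upper bound: one must track the exponent $\frac12-\frac1p$ correctly in the regime $p>2$ while checking that the comparison constant is exactly $1$ for $p\le2$, so the bound genuinely stays of order $(nc)^{-1/2}$; and one must note that the degenerate all-identical configuration is a legitimate sample, which is precisely why the lower and upper bounds match only up to a constant factor (and up to an extra $\min\{c,d\}^{\frac12-\frac1p}$ factor when $p>2$).
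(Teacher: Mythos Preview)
Your proposal is correct and follows essentially the same approach as the paper: both reduce to the expected Schatten-$p^*$ norm of a Rademacher sum via duality, bound it above through the Frobenius norm, and bound it below using the all-identical rank-one sample together with Khintchine--Kahane. The only cosmetic difference is that the paper phrases the comparisons via hypothesis-class containments (e.g.\ $\widetilde{H}_{S_p}\subset\widetilde{H}_{S_2}$ for $p\le2$ and the reverse for $p>2$) while you work directly on the dual norm $\|\cdot\|_{S_{p^*}}$; your treatment is slightly more uniform in that the rank-one lower bound immediately covers all $p$ at once.
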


The associated data-dependent error bounds, given in Corollary \ref{cor:risk-independent-lp} and Corollary \ref{cor:risk-independent-shatten}, are then immediate.

\begin{corollary}[Data-dependent bound for $\ell_p$-norm regularizer and Lipschitz continuous loss w.r.t. $\|\cdot\|_\infty$]\label{cor:risk-independent-lp}
    Consider the hypothesis space $H_{p,\Lambda}:=H_{\tau,\Lambda}$ in \eqref{hypothesis-space} with $\tau(\bw)=\|\bw\|_{2,p},p\geq1$. Assume that $\Psi_y$ is $L$-Lipschitz continuous w.r.t. $\ell_\infty$-norm for any $y\in\ycal$ and $\hat{B}_\Psi\leq 2e\hat{B}ncL$. Then,
    for any $0<\delta<1$ with probability $1-\delta$, we have
    $$
      A_{p}\leq\frac{27L\Lambda \max_{i\in\nbb_n}\|\phi(\bx_i)\|_2c^{\frac{1}{2}-\frac{1}{\max(2,p)}}}{\sqrt{n}}\Big(1+\log_2^{3\over2}\big(\sqrt{2}n^{3\over2}c\big)\Big).
    $$
\end{corollary}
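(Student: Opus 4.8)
The plan is to read Corollary \ref{cor:risk-independent-lp} off as a direct instantiation of Theorem \ref{thm:risk-data-independent}, after (i) evaluating the quantity $\hat B$ for the $\ell_{2,p}$-norm regularizer and (ii) plugging in the two-sided worst-case Rademacher complexity estimate of Proposition \ref{prop:rademacher-independent-lp}. The hypotheses assumed in Corollary \ref{cor:risk-independent-lp} ($\Psi_y$ is $L$-Lipschitz w.r.t. $\|\cdot\|_\infty$ for every $y\in\ycal$, and $\hat B_\Psi\le 2e\hat B ncL$) are exactly those required by Theorem \ref{thm:risk-data-independent}, so that theorem applies verbatim with $\tau(\bw)=\|\bw\|_{2,p}$ and $\widetilde H_\tau=\widetilde H_p$, giving $A_p\le 27L\sqrt c\,\mathfrak{R}_{nc}(\widetilde H_p)\bigl(1+\log_2^{3/2}(\hat B n\sqrt c/\mathfrak{R}_{nc}(\widetilde H_p))\bigr)$. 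First I would simplify $\hat B$: for $p\ge1$ one has $\|\bw\|_{2,\infty}\le\|\bw\|_{2,p}$ with equality attained by a one-sparse $\bw$, hence $\sup_{\|\bw\|_{2,p}\le\Lambda}\|\bw\|_{2,\infty}=\Lambda$ and therefore $\hat B=\Lambda\max_{i\in\nbb_n}\|\phi(\bx_i)\|_2$.

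Next I would control the two occurrences of $\mathfrak{R}_{nc}(\widetilde H_p)$ separately, using the upper and lower bounds $u:=\Lambda\max_i\|\phi(\bx_i)\|_2 n^{-1/2}c^{-1/\max(2,p)}$ and $\ell:=\Lambda\max_i\|\phi(\bx_i)\|_2 (2n)^{-1/2}c^{-1/\max(2,p)}$ from Proposition \ref{prop:rademacher-independent-lp}: I would bound the linear prefactor $\mathfrak{R}_{nc}(\widetilde H_p)$ from above by $u$, and bound $\mathfrak{R}_{nc}(\widetilde H_p)$ inside the logarithm from below by $\ell$. Both replacements enlarge the right-hand side, since the outer factor is nonnegative and $x\mapsto\log_2^{3/2}(\hat B n\sqrt c/x)$ is decreasing on the range $x\le\hat B n\sqrt c$ (and indeed $\mathfrak{R}_{nc}(\widetilde H_p)\le u\le\hat B n\sqrt c$ because $n,c\ge1$). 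With $\hat B=\Lambda\max_i\|\phi(\bx_i)\|_2$ one gets $\hat B n\sqrt c/\ell=\sqrt2\,n^{3/2}c^{1/2+1/\max(2,p)}\le\sqrt2\,n^{3/2}c$, using $1/\max(2,p)\le 1/2$. Combining and collecting the powers of $c$ via $\sqrt c\cdot c^{-1/\max(2,p)}=c^{1/2-1/\max(2,p)}$ gives $A_p\le 27L\sqrt c\cdot u\cdot\bigl(1+\log_2^{3/2}(\sqrt2\,n^{3/2}c)\bigr)$, which is exactly the claimed inequality.

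The argument is essentially bookkeeping; once Theorem \ref{thm:risk-data-independent} and Proposition \ref{prop:rademacher-independent-lp} are in hand there is no real obstacle. The one point deserving care is that $x\mapsto x\bigl(1+\log_2^{3/2}(C/x)\bigr)$ is \emph{not} monotone in $x$, so one must not substitute a single bound for $\mathfrak{R}_{nc}(\widetilde H_p)$ everywhere; instead one bounds the linear factor and the logarithmic factor in the directions that each enlarge the expression, which is legitimate precisely because both factors are nonnegative on the relevant range. The remaining checks — that $\hat B$ simplifies as stated for the $\ell_{2,p}$ regularizer, that $\hat B n\sqrt c/\mathfrak{R}_{nc}(\widetilde H_p)\ge1$ so the logarithm is well-behaved, and that $c^{1/\max(2,p)}\le c^{1/2}$ — are immediate.
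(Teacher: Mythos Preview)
Your proposal is correct and follows essentially the same route as the paper's proof: apply Theorem \ref{thm:risk-data-independent}, substitute the upper bound of Proposition \ref{prop:rademacher-independent-lp} in the linear prefactor and the lower bound inside the logarithm, and simplify using $\hat B\le\Lambda\max_i\|\phi(\bx_i)\|_2$ together with $c^{1/\max(2,p)}\le c^{1/2}$. Your explicit caution about handling the two occurrences of $\mathfrak{R}_{nc}(\widetilde H_p)$ separately (because $x\mapsto x(1+\log_2^{3/2}(C/x))$ is not monotone) is a point the paper leaves implicit, so your write-up is if anything slightly more careful.
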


\begin{corollary}[Data-dependent bound for Schatten-$p$ norm regularizer and Lipschitz continuous loss w.r.t. $\ell_\infty$-norm]\label{cor:risk-independent-shatten}
   Let $\phi$ be the identity map and represent $\bw$ by a matrix $W\in\rbb^{d\times c}$.
   Consider the hypothesis space $H_{S_p,\Lambda}:=H_{\tau,\Lambda}$ in \eqref{hypothesis-space} with $\tau(W)=\|W\|_{S_p},p\geq1$.
   Assume that $\Psi_y$ is $L$-Lipschitz continuous w.r.t. $\ell_\infty$-norm for any $y\in\ycal$ and $\hat{B}_\Psi\leq 2e\hat{B}ncL$. Then, for any $0<\delta<1$ with probability $1-\delta$, we have
    $$
      A_{S_p}\leq			
      \begin{cases}
         \frac{27L\Lambda \max_{i\in\nbb_n}\|\bx_i\|_2}{\sqrt{n}}\Big(1+\log_2^{3\over2}\big(\sqrt{2}n^{3\over2}c\big)\Big), &\mbox{ if } p\leq2,  \\
         \frac{27L\Lambda \max_{i\in\nbb_n}\|\bx_i\|_2\min\{c,d\}^{\frac{1}{2}-\frac{1}{p}}}{\sqrt{n}}\Big(1+\log_2^{3\over2}\big(\sqrt{2}n^{3\over2}c\big)\Big), & \mbox{otherwise}.
      \end{cases}
    $$
\end{corollary}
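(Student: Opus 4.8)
The plan is to derive the corollary directly from Theorem~\ref{thm:risk-data-independent} combined with the two-sided estimate of the worst-case Rademacher complexity of $\widetilde{H}_{S_p}$ provided by Proposition~\ref{prop:rademacher-independent-schatten}. First I would observe that the hypotheses here---$\Psi_y$ being $L$-Lipschitz continuous w.r.t. the $\ell_\infty$-norm for every $y\in\ycal$ together with $\hat{B}_\Psi\le 2e\hat{B}ncL$---are exactly the condition of Theorem~\ref{thm:rademacher-independent-bound}, so Theorem~\ref{thm:risk-data-independent} is applicable. Instantiating it with $\tau(W)=\|W\|_{S_p}$, so that $\widetilde{H}_\tau=\widetilde{H}_{S_p}$, gives, with probability at least $1-\delta$,
\[
  A_{S_p}\le 27L\sqrt{c}\,\mathfrak{R}_{nc}(\widetilde{H}_{S_p})\Big(1+\log_2^{3/2}\frac{\hat{B}n\sqrt{c}}{\mathfrak{R}_{nc}(\widetilde{H}_{S_p})}\Big).
\]

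Next I would plug in the bounds of \eqref{rademacher-independent-schatten}, treating the two occurrences of $\mathfrak{R}_{nc}(\widetilde{H}_{S_p})$ differently: the leading multiplicative factor is controlled by using the \emph{upper} bound $U$ from \eqref{rademacher-independent-schatten}, while the argument of the logarithm is controlled by using the \emph{lower} bound $\ell$. Since $\hat{B}n\sqrt{c}\ge U\ge\mathfrak{R}_{nc}(\widetilde{H}_{S_p})\ge\ell$ and $t\mapsto\log_2 t$ is nondecreasing on $[1,\infty)$, these substitutions are legitimate, and they cost only the constant gap (a factor $\sqrt{2}$) between $U$ and $\ell$ in \eqref{rademacher-independent-schatten}; in particular no monotonicity of the whole expression $R\mapsto R(1+\log_2^{3/2}(\cdot/R))$ is needed. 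The outcome is $A_{S_p}\le 27L\sqrt{c}\,U\big(1+\log_2^{3/2}(\hat{B}n\sqrt{c}/\ell)\big)$.

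The remainder is bookkeeping. For $p\le2$, $U=\Lambda\max_{i\in\nbb_n}\|\bx_i\|_2(nc)^{-1/2}$, whence $27L\sqrt{c}\,U=27L\Lambda\max_{i\in\nbb_n}\|\bx_i\|_2n^{-1/2}$; for $p>2$ the additional factor $\min\{c,d\}^{1/2-1/p}$ in $U$ reproduces the stated prefactor. For the logarithm I would compute $\hat{B}$: with $\phi$ the identity and $\tau(W)=\|W\|_{S_p}$, we have $\hat{B}=\max_{i\in\nbb_n}\|\bx_i\|_2\sup_{\|W\|_{S_p}\le\Lambda}\|W\|_{2,\infty}$, and since every column of $W$ has $\ell_2$-norm at most $\|W\|_{S_\infty}\le\|W\|_{S_p}$, with equality attained at a rank-one $W$, it follows that $\hat{B}=\Lambda\max_{i\in\nbb_n}\|\bx_i\|_2$. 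Using $\ell=\Lambda\max_{i\in\nbb_n}\|\bx_i\|_2(2nc)^{-1/2}$, which is the lower bound in both regimes of \eqref{rademacher-independent-schatten}, gives $\hat{B}n\sqrt{c}/\ell=\sqrt{2}\,n^{3/2}c$, hence the factor $1+\log_2^{3/2}(\sqrt{2}\,n^{3/2}c)$, exactly as in the statement.

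I do not expect any genuine obstacle inside this corollary: it is essentially a one-line consequence of Theorem~\ref{thm:risk-data-independent} and Proposition~\ref{prop:rademacher-independent-schatten}, the only care needed being to use the upper Rademacher bound in the leading factor but the lower one inside the logarithm, and to evaluate $\hat{B}$ explicitly for the Schatten ball. The real work sits upstream---in establishing Proposition~\ref{prop:rademacher-independent-schatten}, whose upper bound for $p\le2$ rests on the non-commutative Khintchine--Kahane inequality for Schatten norms and whose matching lower bound needs a rank-one construction, and in Theorem~\ref{thm:risk-data-independent} itself---but all of that is taken as given here.
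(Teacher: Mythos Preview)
Your proposal is correct and mirrors the paper's own proof: apply Theorem~\ref{thm:risk-data-independent}, insert the upper Rademacher bound from Proposition~\ref{prop:rademacher-independent-schatten} in the prefactor and the lower bound inside the logarithm, then use $\hat{B}\le\Lambda\max_i\|\bx_i\|_2$ to simplify. Your one-line justification $\|W\|_{2,\infty}\le\|W\|_{S_\infty}\le\|W\|_{S_p}$ is in fact cleaner than the paper's two-case argument for the same inequality.
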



%

\section{Applications\label{sec:applications}}

In this section, we apply the general results in subsections \ref{sec:data-dependent-bound} and \ref{sec:data-independent-bounds} to study data-dependent
error bounds for some prominent multi-class learning methods. We also compare our data-dependent bounds with the state of the art for different
MC-SVMs. In subsection \ref{sec:discussion}, we make an in-depth discussion to compare error bounds based on GCs with those based on CNs.

\subsection{Classic MC-SVMs}\label{sec:classMCSVM}
We first apply the results from the previous section to several classic MC-SVMs.
For this purpose, we need to show that the associated loss functions satisfy Lipschitz conditions.

To this end, for any $h:\xcal\to\rbb^c$, we denote by
\begin{equation}\label{margin}
  \rho_h(\bx,y):= h_y(\bx)-\max_{y':y'\neq y}h_{y'}(\bx)
\end{equation}
the margin of the model $h$ at $(\bx,y)$. It is clear that the prediction rule $h$ makes an error at $(\bx,y)$ if $\rho_h(\bx,y)< 0$.
In Examples \ref{exp:loss-margin}, \ref{exp:ww}, and \ref{exp:llw} below, we assume that $\ell:\rbb\to\rbb_+$ is a decreasing and $L_\ell$-Lipschitz function.

\begin{example}[Multi-class margin-based loss \citep{crammer2002algorithmic}]\label{exp:loss-margin}
  The loss function defined as
  \begin{equation}\label{loss-function-margin}
    \Psi_y^\ell(\bt):=\max_{y':y'\neq y}\ell(t_y-t_{y'}),\quad\forall\bt\in\rbb^c
  \end{equation}
  is $(2L_\ell)$-Lipschitz continuous w.r.t. $\ell_\infty$-norm and $\ell_2$-norm. Furthermore, we have $\ell(\rho_h(\bx,y))=\Psi_y^\ell(h(\bx))$.
\end{example}

The loss function $\Psi_y^\ell$ defined above in Eq. \eqref{loss-function-margin}
is a margin-based loss function widely used in multi-class classification \citep{crammer2002algorithmic} and structured prediction \citep{mohri2012foundations}.

Next, we study the multinomial logistic loss $\Psi^m_y$ defined below,
which is used in multinomial logistic regression~\citep[][Chapter 4.3.4]{bishop2006pattern}.

\begin{example}[Multinomial logistic loss]\label{exp:soft-max}
  The multinomial logistic loss $\Psi^m_y(\bt)$ defined as
  \begin{equation}\label{soft-max}
    \Psi^m_y(\bt):=\log\big(\sum_{j=1}^c\exp(t_j-t_y)\big),\quad\forall\bt\in\rbb^c
  \end{equation}
  is $2$-Lipschitz continuous w.r.t. the $\ell_\infty$-norm and the $\ell_2$-norm.
\end{example}

The loss $\tilde{\Psi}_y^{\ell}$ defined in Eq. \eqref{ww} below is used in \cite{weston1998multi} to make pairwise comparisons among components of the predictor.

\begin{example}[Loss function used in \cite{weston1998multi}]\label{exp:ww}
  The loss function defined as
  \begin{equation}\label{ww}
    \tilde{\Psi}_y^{\ell}(\bt)=\sum_{j=1}^{c}\ell(t_y-t_j),\quad\forall\bt\in\rbb^c
  \end{equation}
  is Lipschitz continuous w.r.t. a variant of the $\ell_2$-norm involving the Lipschitz constant pair
  $(L_\ell\sqrt{c},L_\ell c)$ and index $y$. Furthermore, it is also $(2L_\ell c)$-Lipschitz continuous w.r.t. the $\ell_\infty$-norm.
\end{example}

Finally, the loss $\bar{\Psi}_y^{\ell}$ defined in Eq. \eqref{llw} and the loss $\hat{\Psi}_y^{\ell}$ defined in Eq. \eqref{jenssen} are used separately
in \cite{lee2004multicategory} based on constrained comparisons.
\begin{example}[Loss function used in \cite{lee2004multicategory}]\label{exp:llw}
  The loss function defined as
  \begin{equation}\label{llw}
    \bar{\Psi}_y^{\ell}(\bt)=\sum_{j=1,j\neq y}^{c}\ell(-t_j),\quad\forall\bt\in\Omega=\{\tilde{\bt}\in\rbb^c:\sum_{j=1}^{c}\tilde{t}_j=0\}
  \end{equation}
  is $(L_\ell\sqrt{c})$-Lipschitz continuous w.r.t. the $\ell_2$-norm and $(L_\ell c)$-Lipschitz continuous w.r.t. the $\ell_\infty$-norm.
\end{example}

\begin{example}[Loss function used in \cite{jenssen2012scatter}]\label{exp:jenssen}
  The loss function defined as
  \begin{equation}\label{jenssen}
    \hat{\Psi}_y^{\ell}(\bt)=\ell(t_y),\quad\forall\bt\in\Omega=\{\tilde{\bt}\in\rbb^c:\sum_{j=1}^{c}\tilde{t}_j=0\}
  \end{equation}
  is Lipschitz continuous w.r.t. a variant of the $\ell_2$-norm involving the Lipschitz constant pair $(0,L_\ell)$ and index $y$, and $L_\ell$-Lipschitz continuous
  w.r.t. the $\ell_\infty$-norm.
\end{example}

The following data-dependent error bounds are immediate by plugging the Lipschitz conditions established in Examples \ref{exp:loss-margin}, \ref{exp:soft-max}, \ref{exp:ww}, \ref{exp:llw} and \ref{exp:jenssen} into Corollaries \ref{cor:risk-dependent-lp}, \ref{cor:risk-dependent-shatten}, \ref{cor:risk-independent-lp} and \ref{cor:risk-independent-shatten}, separately.
In the following, we always assume that the condition $\hat{B}_\Psi\leq 2e\hat{B}ncL$ holds, where $L$ is the Lipschitz constant in Theorem \ref{thm:rademacher-independent-bound}.

\begin{corollary}[Generalization bounds for Crammer and Singer MC-SVM]\label{cor:Cramer}
  Consider the MC-SVM in \cite{crammer2002algorithmic} with the loss function $\Psi_y^{\ell}$ \eqref{loss-function-margin} and the hypothesis space $H_\tau$ with $\tau(\bw)=\|\bw\|_{2,2}$. Let $0<\delta<1$. Then,
  \begin{enumerate}[(a)]
    \item with probability at least $1-\delta$, we have $A_2\leq \frac{4L_\ell\Lambda \sqrt{2\pi c}}{n}\big[\sum_{i=1}^{n}K(\bx_i,\bx_i)\big]^{\frac{1}{2}}$ (by GCs);
    \item with probability at least $1-\delta$, we have $A_2\leq \frac{54L_\ell\Lambda \max_{i\in\nbb_n}\|\phi(\bx_i)\|_2}{\sqrt{n}}\big(1+\log_2^{3\over2}\big(\sqrt{2}n^{3\over2}c\big)\big)$ (by CNs).
  \end{enumerate}
\end{corollary}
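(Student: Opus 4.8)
The plan is to obtain both bounds as direct instantiations of the general corollaries already proved, so that the only genuine work is to pin down the Lipschitz behaviour of the Crammer--Singer loss $\Psi_y^{\ell}$ of \eqref{loss-function-margin}. I would first verify Example \ref{exp:loss-margin}. Since $\ell$ is $L_\ell$-Lipschitz, for every $y'\neq y$ and all $\bt,\bt'\in\rbb^c$ we have $|\ell(t_y-t_{y'})-\ell(t_y'-t_{y'}')|\leq L_\ell(|t_y-t_y'|+|t_{y'}-t_{y'}'|)$, and the right-hand side is at most $2L_\ell\|\bt-\bt'\|_\infty$ and at most $2L_\ell\|\bt-\bt'\|_2$. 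Applying the elementary contraction $|\max_k a_k-\max_k b_k|\leq\max_k|a_k-b_k|$ with $a_{y'}=\ell(t_y-t_{y'})$ and $b_{y'}=\ell(t_y'-t_{y'}')$ then shows that $\Psi_y^{\ell}$ is $(2L_\ell)$-Lipschitz w.r.t. both $\|\cdot\|_\infty$ and $\|\cdot\|_2$; in particular it satisfies Definition \ref{def:lipschitz-variant-2-norm} with the pair $(L_1,L_2)=(2L_\ell,0)$ and index $y$ (and one records the identity $\ell(\rho_h(\bx,y))=\Psi_y^{\ell}(h(\bx))$ only to note that $A_2$, built from $\Psi$, is the quantity of interest).

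For part (a) I would plug $(L_1,L_2)=(2L_\ell,0)$ and $p=2$ into Corollary \ref{cor:risk-dependent-lp}. Choosing $q=2$ in the infimum over $q\geq p$ (hence $q^*=2$), the bracketed factor equals $2L_\ell\cdot 2^{1/2}c^{1/2}+0=2\sqrt{2}\,L_\ell\sqrt{c}$, so \eqref{risk-dependent-lp} becomes
\[
  A_2\leq\frac{2\Lambda\sqrt{\pi}}{n}\Big[\sum_{i=1}^{n}K(\bx_i,\bx_i)\Big]^{1/2}\cdot 2\sqrt{2}\,L_\ell\sqrt{c}=\frac{4L_\ell\Lambda\sqrt{2\pi c}}{n}\Big[\sum_{i=1}^{n}K(\bx_i,\bx_i)\Big]^{1/2},
\]
which is claim (a). Any admissible $q$ already yields a valid upper bound; that $q=2$ is the optimal choice follows from $\frac{d}{ds}\log\big(s^{1/2}c^{1/s}\big)=\frac{1}{2s}-\frac{\log c}{s^2}<0$ on $s=q^*\in(1,2]$ whenever $c\geq e$.

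For part (b) I would plug $p=2$ and the $\ell_\infty$-Lipschitz constant $L=2L_\ell$ into Corollary \ref{cor:risk-independent-lp}, the side condition $\hat{B}_\Psi\leq 2e\hat{B}ncL$ being assumed as stated before that corollary. Because $\max(2,p)=2$, the class-size exponent is $c^{1/2-1/2}=1$, and the bound collapses to
\[
  A_2\leq\frac{27\cdot 2L_\ell\,\Lambda\max_{i\in\nbb_n}\|\phi(\bx_i)\|_2}{\sqrt{n}}\Big(1+\log_2^{3/2}\big(\sqrt{2}\,n^{3/2}c\big)\Big)=\frac{54L_\ell\Lambda\max_{i\in\nbb_n}\|\phi(\bx_i)\|_2}{\sqrt{n}}\Big(1+\log_2^{3/2}\big(\sqrt{2}\,n^{3/2}c\big)\Big),
\]
which is claim (b).

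The only step carrying any real content is the first one: the outer maximum over $y'\neq y$ in \eqref{loss-function-margin} must be absorbed through the contraction property of $\max$, and one has to be a little careful that the pairwise increment $|t_y-t_y'|+|t_{y'}-t_{y'}'|$ is controlled by $2\|\bt-\bt'\|$ for both the $\ell_2$- and the $\ell_\infty$-norm; this is the source of the factor $2$ and hence of the constants $4$ and $54$. Everything after that is a mechanical substitution into Corollaries \ref{cor:risk-dependent-lp} and \ref{cor:risk-independent-lp} together with the trivial optimization of the $q$-infimum.
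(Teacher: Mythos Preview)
Your proposal is correct and follows exactly the route the paper takes: it states that Corollary~\ref{cor:Cramer} is ``immediate by plugging the Lipschitz conditions established in Examples~\ref{exp:loss-margin}, \ldots\ into Corollaries~\ref{cor:risk-dependent-lp}, \ref{cor:risk-dependent-shatten}, \ref{cor:risk-independent-lp} and~\ref{cor:risk-independent-shatten},'' and its proof of Example~\ref{exp:loss-margin} is line-for-line the argument you give (contraction of the max, Lipschitz continuity of $\ell$, then the bound $|t_y-t_y'|+|t_{y'}-t_{y'}'|\leq 2\|\bt-\bt'\|_\infty\leq 2\|\bt-\bt'\|_2$). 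Your explicit computations with $(L_1,L_2)=(2L_\ell,0)$, $p=q=2$ for part~(a) and $L=2L_\ell$, $p=2$ for part~(b) are the intended substitutions and the constants check out.
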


Analogous to Corollary \ref{cor:Cramer}, we have the following corollary on error bounds for the multinomial logistic regression in \cite{bishop2006pattern}.
\begin{corollary}[Generalization bounds for multinomial logistic regression]\label{cor:logistic}
  Consider the multinomial logistic regression with the loss function $\Psi_y^{\ell}$ \eqref{soft-max} and the hypothesis space $H_\tau$ with $\tau(\bw)=\|\bw\|_{2,2}$. Let $0<\delta<1$. Then,
  \begin{enumerate}[(a)]
    \item with probability at least $1-\delta$, we have $A_2\leq \frac{4\Lambda \sqrt{2\pi c}}{n}\big[\sum_{i=1}^{n}K(\bx_i,\bx_i)\big]^{\frac{1}{2}}$ (by GCs);
    \item with probability at least $1-\delta$, we have $A_2\leq \frac{54\Lambda \max_{i\in\nbb_n}\|\phi(\bx_i)\|_2}{\sqrt{n}}\big(1+\log_2^{3\over2}\big(\sqrt{2}n^{3\over2}c\big)\big)$ (by CNs).
  \end{enumerate}
\end{corollary}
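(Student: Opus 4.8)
The plan is to obtain Corollary~\ref{cor:logistic} as a direct specialization of the two general $\ell_p$-regularizer bounds already established, namely Corollary~\ref{cor:risk-dependent-lp} for the Gaussian-complexity route and Corollary~\ref{cor:risk-independent-lp} for the covering-number route, instantiated at $p=2$. In effect the argument is identical to that of Corollary~\ref{cor:Cramer}, with the Lipschitz constant $2L_\ell$ of the Crammer--Singer loss $\Psi^\ell_y$ replaced by the constant $2$ of the multinomial logistic loss $\Psi^m_y$. The only external input is Example~\ref{exp:soft-max}, which states that $\Psi^m_y$ in \eqref{soft-max} is $2$-Lipschitz continuous both w.r.t. the $\ell_2$-norm and w.r.t. the $\ell_\infty$-norm: the former means $\Psi^m_y$ satisfies \eqref{generalized-lipschitz-condition} with Lipschitz constant pair $(L_1,L_2)=(2,0)$ and index $y$, and the latter gives $L=2$ in the sense of Theorem~\ref{thm:rademacher-independent-bound}.

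For part (a) I would apply Corollary~\ref{cor:risk-dependent-lp} with $\tau(\bw)=\|\bw\|_{2,2}$ (so $p=2$) and $(L_1,L_2)=(2,0)$. Since $L_2=0$, the second summand inside the infimum in \eqref{risk-dependent-lp} drops out, and it remains to upper bound $\inf_{q\ge2}L_1(q^*)^{1/2}c^{1/q^*}$. Taking the feasible choice $q=2$, hence $q^*=2$, gives $2\cdot\sqrt{2}\cdot\sqrt{c}=2\sqrt{2c}$, and substituting this into \eqref{risk-dependent-lp} yields $A_2\le\frac{2\Lambda\sqrt{\pi}}{n}\big[\sum_{i=1}^n K(\bx_i,\bx_i)\big]^{1/2}\cdot 2\sqrt{2c}=\frac{4\Lambda\sqrt{2\pi c}}{n}\big[\sum_{i=1}^n K(\bx_i,\bx_i)\big]^{1/2}$, which is the asserted bound. (In fact $q=2$ is the minimizer of $q\mapsto(q^*)^{1/2}c^{1/q^*}$ over $q\ge2$ once $c$ is not too small, but for an upper bound any admissible $q$ works.)

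For part (b) I would apply Corollary~\ref{cor:risk-independent-lp} with $p=2$ and $L=2$; its hypotheses hold because $\Psi^m_y$ is $2$-Lipschitz w.r.t. $\|\cdot\|_\infty$ and because the standing assumption of this section supplies $\hat B_\Psi\le 2e\hat B ncL$. With $p=2$ we have $\max(2,p)=2$, so the exponent $\tfrac12-\tfrac{1}{\max(2,p)}$ equals $0$ and the factor $c^{1/2-1/\max(2,p)}$ disappears; combined with $27L=54$ this gives $A_2\le\frac{54\Lambda\max_{i\in\nbb_n}\|\phi(\bx_i)\|_2}{\sqrt{n}}\big(1+\log_2^{3/2}(\sqrt{2}n^{3/2}c)\big)$, exactly as stated.

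The whole argument is a substitution once Example~\ref{exp:soft-max} is in hand, so the only step carrying genuine content is that example itself; I do not anticipate any real obstacle beyond a routine gradient estimate. One verifies it by the mean value inequality: $\nabla\Psi^m_y(\bt)=\mathrm{softmax}(\bt)-\be_y$ has coordinates in $[-1,1]$ with $\|\nabla\Psi^m_y(\bt)\|_1=2\big(1-\mathrm{softmax}(\bt)_y\big)\le2$, whence $\|\nabla\Psi^m_y(\bt)\|_2\le\|\nabla\Psi^m_y(\bt)\|_1\le2$; pairing the $\ell_1$-gradient bound with the $\ell_\infty$-norm and the $\ell_2$-gradient bound with the $\ell_2$-norm yields the two Lipschitz constants used above.
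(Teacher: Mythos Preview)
Your proposal is correct and follows essentially the same approach as the paper: the paper states that these corollaries are ``immediate by plugging the Lipschitz conditions established in Examples \ref{exp:loss-margin}, \ref{exp:soft-max}, \ldots\ into Corollaries \ref{cor:risk-dependent-lp}, \ldots, \ref{cor:risk-independent-lp}'' and describes Corollary~\ref{cor:logistic} as ``analogous to Corollary~\ref{cor:Cramer}'', which is exactly the substitution you carry out. Your sketch of Example~\ref{exp:soft-max} via the direct gradient $\nabla\Psi^m_y(\bt)=\mathrm{softmax}(\bt)-\be_y$ is a minor variant of the paper's argument (which factors through the log-sum-exp function $f^m$ first), but both are routine and arrive at the same $2$-Lipschitz constants.
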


The following three corollaries give error bounds for MC-SVMs in \cite{weston1998multi,lee2004multicategory,jenssen2012scatter}.
The MC-SVM in Corollary \ref{cor:jenssen} is a minor variant of that in \cite{jenssen2012scatter} with a fixed functional margin.

\begin{corollary}[Generalization bounds for \citeauthor{weston1998multi} MC-SVM]\label{cor:ww}
  Consider the MC-SVM in \citet{weston1998multi} with the loss function $\tilde{\Psi}_y^{\ell}$ \eqref{ww} and the hypothesis space $H_\tau$ with $\tau(\bw)=\|\bw\|_{2,2}$. Let $0<\delta<1$. Then,
  \begin{enumerate}[(a)]
    \item with probability at least $1-\delta$, we have $A_2\leq \frac{4L_\ell \Lambda c\sqrt{2\pi}}{n}\big[\sum_{i=1}^{n}K(\bx_i,\bx_i)\big]^{1\over 2}$ (by GCs);
    \item with probability at least $1-\delta$, we have $A_2\leq \frac{54L_\ell\Lambda c\max_{i\in\nbb_n}\|\phi(\bx_i)\|_2}{\sqrt{n}}\big(1+\log_2^{3\over2}\big(\sqrt{2}n^{3\over2}c\big)\big)$ (by CNs).
  \end{enumerate}
\end{corollary}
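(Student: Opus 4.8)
The plan is to derive both inequalities by specializing the general $\ell_p$-norm bounds of Section~\ref{sec:data-dependent-bound} and Section~\ref{sec:data-independent-bounds} to $p=2$ and inserting the Lipschitz data of the Weston--Watkins loss $\tilde{\Psi}_y^{\ell}$ collected in Example~\ref{exp:ww}. No step is genuinely difficult; the content is in tracking constants and choosing the free exponent $q$ optimally.

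For part (a) I would apply Corollary~\ref{cor:risk-dependent-lp} with $\tau(\bw)=\|\bw\|_{2,2}$. By Example~\ref{exp:ww}, $\tilde{\Psi}_y^{\ell}$ is Lipschitz continuous w.r.t.\ a variant of the $\ell_2$-norm with constant pair $(L_1,L_2)=(L_\ell\sqrt{c},L_\ell c)$ and index $y$, so these values feed directly into \eqref{risk-dependent-lp}. It then remains to evaluate $\inf_{q\geq 2}\big[L_1(q^*)^{1/2}c^{1/q^*}+L_2(q^*)^{1/2}\max(c^{1/q^*-1/2},1)\big]$. Taking $q=2$ (so $q^*=2$) makes the exponent $1/q^*-1/2$ vanish, hence $\max(c^{1/q^*-1/2},1)=1$, and the bracket collapses to $\sqrt{2}L_\ell c+\sqrt{2}L_\ell c=2\sqrt{2}L_\ell c$. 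Substituting into \eqref{risk-dependent-lp} and using $\sqrt{2}\sqrt{\pi}=\sqrt{2\pi}$ yields $A_2\leq \frac{4L_\ell\Lambda c\sqrt{2\pi}}{n}\big[\sum_{i=1}^n K(\bx_i,\bx_i)\big]^{1/2}$, which is (a).

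For part (b) I would instead invoke Corollary~\ref{cor:risk-independent-lp} with the same regularizer $\tau(\bw)=\|\bw\|_{2,2}$. Example~\ref{exp:ww} also records that $\tilde{\Psi}_y^{\ell}$ is $(2L_\ell c)$-Lipschitz continuous w.r.t.\ the $\ell_\infty$-norm, so the corollary applies with $L=2L_\ell c$, the standing assumption $\hat{B}_\Psi\leq 2e\hat{B}ncL$ being in force. Since $p=2$ gives $\max(2,p)=2$ and hence $c^{1/2-1/\max(2,p)}=c^0=1$, Corollary~\ref{cor:risk-independent-lp} reads $A_2\leq \frac{27(2L_\ell c)\Lambda\max_{i\in\nbb_n}\|\phi(\bx_i)\|_2}{\sqrt{n}}\big(1+\log_2^{3/2}(\sqrt{2}n^{3/2}c)\big)$, i.e.\ exactly (b) after writing $27\cdot 2=54$.

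The only place that calls for a small argument --- the ``main obstacle'', such as it is --- is confirming that the smallest admissible exponent $q=p=2$ is (up to constants) the minimizer in part (a): increasing $q$ shrinks the benign factor $(q^*)^{1/2}$ but enlarges $c^{1/q^*}$ up to $c^{1}$, so the summand $L_1(q^*)^{1/2}c^{1/q^*}=L_\ell(q^*)^{1/2}c^{1/2+1/q^*}$ is non-decreasing in $q$ for $c>e$; thus $q=2$ is essentially optimal and nothing is lost by it. All remaining manipulations are routine substitutions into Corollaries~\ref{cor:risk-dependent-lp} and~\ref{cor:risk-independent-lp}.
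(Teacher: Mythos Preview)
Your proposal is correct and follows exactly the paper's approach: plug the Lipschitz constants from Example~\ref{exp:ww} into Corollary~\ref{cor:risk-dependent-lp} (with $q=2$) for part (a) and into Corollary~\ref{cor:risk-independent-lp} for part (b). The closing paragraph about $q=2$ being the minimizer is superfluous, since the infimum in \eqref{risk-dependent-lp} means any single choice of $q\geq p$ already yields a valid upper bound; no optimality argument is required to reach the stated constant.
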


\begin{corollary}[Generalization bounds for \citeauthor{lee2004multicategory} MC-SVM]\label{cor:llw}
  Consider the MC-SVM in \citet{lee2004multicategory} with the loss function $\bar{\Psi}_y^{\ell}$ \eqref{llw} and the hypothesis space $H_\tau$ with $\tau(\bw)=\|\bw\|_{2,2}$. Let $0<\delta<1$. Then,
  \begin{enumerate}[(a)]
    \item with probability at least $1-\delta$, we have $A_2\leq \frac{2L_\ell \Lambda c\sqrt{2\pi}}{n}\big[\sum_{i=1}^{n}K(\bx_i,\bx_i)\big]^{1\over 2}$ (by GCs);
    \item with probability at least $1-\delta$, we have $A_2\leq \frac{27L_\ell\Lambda c\max_{i\in\nbb_n}\|\phi(\bx_i)\|_2}{\sqrt{n}}\big(1+\log_2^{3\over2}\big(\sqrt{2}n^{3\over2}c\big)\big)$ (by CNs).
  \end{enumerate}
\end{corollary}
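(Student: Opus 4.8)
The plan is to obtain both inequalities as plug-ins into the general $\ell_{2,p}$-norm bounds specialized to $p=2$, feeding in the Lipschitz constants of $\bar\Psi_y^\ell$ from \eqref{llw} that are recorded in Example~\ref{exp:llw}; everything after that is arithmetic.

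For part (a), via Gaussian complexities, Example~\ref{exp:llw} asserts that $\bar\Psi_y^\ell$ is $(L_\ell\sqrt c)$-Lipschitz continuous w.r.t.\ the $\ell_2$-norm, which I would read as Lipschitz continuity w.r.t.\ a variant of the $\ell_2$-norm in the sense of Definition~\ref{def:lipschitz-variant-2-norm} with constant pair $(L_1,L_2)=(L_\ell\sqrt c,\,0)$ and index $y$. Hence Corollary~\ref{cor:risk-dependent-lp} applies with $\tau(\bw)=\|\bw\|_{2,2}$. Substituting $(L_1,L_2)=(L_\ell\sqrt c,0)$ and $p=2$ into \eqref{risk-dependent-lp} and upper bounding the infimum over $q\ge 2$ by its value at $q=2$ (so $q^*=2$), the bracketed quantity becomes $L_\ell\sqrt c\cdot\sqrt2\cdot c^{1/2}=\sqrt2\,L_\ell c$, which gives $A_2\le\frac{2\Lambda\sqrt\pi}{n}\big[\sum_{i=1}^n K(\bx_i,\bx_i)\big]^{1/2}\cdot\sqrt2\,L_\ell c=\frac{2L_\ell\Lambda c\sqrt{2\pi}}{n}\big[\sum_{i=1}^n K(\bx_i,\bx_i)\big]^{1/2}$, the claimed bound.

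For part (b), via covering numbers, Example~\ref{exp:llw} also gives that $\bar\Psi_y^\ell$ is $(L_\ell c)$-Lipschitz continuous w.r.t.\ the $\ell_\infty$-norm, so I would set $L=L_\ell c$ in Theorem~\ref{thm:rademacher-independent-bound} and Corollary~\ref{cor:risk-independent-lp}; the standing condition $\hat{B}_\Psi\le 2e\hat{B}ncL$ is assumed to hold throughout. Taking $p=2$ in Corollary~\ref{cor:risk-independent-lp} makes $\max(2,p)=2$, so the class-size factor $c^{1/2-1/\max(2,p)}$ equals $1$ and the bound collapses to $A_2\le\frac{27L_\ell c\,\Lambda\max_{i\in\nbb_n}\|\phi(\bx_i)\|_2}{\sqrt n}\big(1+\log_2^{3/2}(\sqrt2\,n^{3/2}c)\big)$, as stated.

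Where the content---and the only conceivable obstacle---lies is Example~\ref{exp:llw} itself, which the corollary takes for granted: the $\ell_2$ estimate holds because $\nabla_{\bt}\sum_{j\ne y}\ell(-t_j)$ has at most $c-1$ nonzero coordinates, each of magnitude $\le L_\ell$, so its Euclidean norm is $\le L_\ell\sqrt{c-1}\le L_\ell\sqrt c$, while the $\ell_\infty$ estimate is just the sum of $c-1$ coordinatewise $L_\ell$-Lipschitz terms. The one point worth a remark is that $\bar\Psi_y^\ell$ is defined only on $\Omega=\{\bt\in\rbb^c:\sum_j t_j=0\}$, so the Lipschitz inequalities---and hence Corollaries~\ref{cor:risk-dependent-lp} and \ref{cor:risk-independent-lp}---are invoked on the restriction of the hypotheses to $\Omega$; since the hypothesis class of \citet{lee2004multicategory} enforces exactly this sum-to-zero constraint, this changes nothing, and I anticipate no genuine difficulty.
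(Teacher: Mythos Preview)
Your proposal is correct and follows exactly the paper's approach: the paper states that these corollaries are ``immediate by plugging the Lipschitz conditions established in Examples \ref{exp:loss-margin}--\ref{exp:jenssen} into Corollaries \ref{cor:risk-dependent-lp}, \ref{cor:risk-dependent-shatten}, \ref{cor:risk-independent-lp} and \ref{cor:risk-independent-shatten},'' and your arithmetic with $(L_1,L_2)=(L_\ell\sqrt c,0)$ at $q=2$ for part (a) and $L=L_\ell c$ at $p=2$ for part (b) reproduces the stated constants. One small remark: your gradient-based justification of the $\ell_2$-Lipschitz constant in Example~\ref{exp:llw} presumes differentiability of $\ell$, which the paper does not assume; the paper instead bounds $|\bar\Psi_y^\ell(\bt)-\bar\Psi_y^\ell(\bt')|\le L_\ell\sum_{j\ne y}|t_j-t_j'|\le L_\ell\sqrt{c}\|\bt-\bt'\|_2$ directly, but this does not affect the corollary itself.
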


\begin{corollary}[Generalization bounds for \citeauthor{jenssen2012scatter} MC-SVM]\label{cor:jenssen}
  Consider the MC-SVM in \citet{jenssen2012scatter} with the loss function $\hat{\Psi}_y^{\ell}$ \eqref{llw} and the hypothesis space $H_\tau$ with $\tau(\bw)=\|\bw\|_{2,2}$. Let $0<\delta<1$. Then,
  \begin{enumerate}[(a)]
    \item with probability at least $1-\delta$, we have $A_2\leq \frac{2L_\ell \Lambda \sqrt{2\pi}}{n}\big[\sum_{i=1}^{n}K(\bx_i,\bx_i)\big]^{1\over 2}$ (by GCs);
    \item with probability at least $1-\delta$, we have $A_2\leq \frac{27L_\ell\Lambda \max_{i\in\nbb_n}\|\phi(\bx_i)\|_2}{\sqrt{n}}\big(1+\log_2^{3\over2}\big(\sqrt{2}n^{3\over2}c\big)\big)$ (by CNs).
  \end{enumerate}
\end{corollary}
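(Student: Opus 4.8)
The two stated inequalities are specializations, with $p=2$, of the general $\ell_{2,p}$-norm results proved earlier, once the relevant Lipschitz properties of the loss $\hat{\Psi}_y^{\ell}(\bt)=\ell(t_y)$ are extracted. The plan therefore splits along the two estimates (a) and (b), and in each case amounts to recording a Lipschitz constant and plugging into the corresponding $\ell_p$-norm corollary with $p=2$.

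For part (a), I would first note the Lipschitz behavior of $\hat{\Psi}_y^{\ell}$ with respect to a variant of the $\ell_2$-norm. Since $\hat{\Psi}_y^{\ell}$ depends on $\bt$ only through the single coordinate $t_y$ and $\ell$ is $L_\ell$-Lipschitz, for all $\bt,\bt'\in\rbb^c$ we have $|\ell(t_y)-\ell(t_y')|\le L_\ell|t_y-t_y'| = 0\cdot\|\bt-\bt'\|_2 + L_\ell|t_y-t_y'|$, i.e. $\hat{\Psi}_y^{\ell}$ satisfies Definition \ref{def:lipschitz-variant-2-norm} with $(L_1,L_2)=(0,L_\ell)$ and index $r=y$ — precisely the claim in Example \ref{exp:jenssen}, valid on all of $\rbb^c$ so that the restriction to $\Omega$ is irrelevant. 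I then invoke Corollary \ref{cor:risk-dependent-lp} with $p=2$: its bound is an infimum over $q\ge 2$ of $L_1(q^*)^{1/2}c^{1/q^*}+L_2(q^*)^{1/2}\max(c^{1/q^*-1/2},1)$, and choosing $q=2$ (hence $q^*=2$) makes the first summand vanish because $L_1=0$ and collapses $\max(c^{1/q^*-1/2},1)=\max(c^0,1)=1$, leaving the factor $L_\ell\sqrt{2}$. Multiplying by $\tfrac{2\Lambda\sqrt{\pi}}{n}\big[\sum_i K(\bx_i,\bx_i)\big]^{1/2}$ yields exactly $\tfrac{2L_\ell\Lambda\sqrt{2\pi}}{n}\big[\sum_i K(\bx_i,\bx_i)\big]^{1/2}$.

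For part (b), I would instead use the $\ell_\infty$-Lipschitz property from Example \ref{exp:jenssen}, i.e. $|\ell(t_y)-\ell(t_y')|\le L_\ell|t_y-t_y'|\le L_\ell\|\bt-\bt'\|_\infty$, so $\hat{\Psi}_y^{\ell}$ is $L_\ell$-Lipschitz w.r.t. $\|\cdot\|_\infty$. Corollary \ref{cor:risk-independent-lp} then applies with $p=2$ and $L=L_\ell$ (the boundedness hypothesis $\hat{B}_\Psi\le 2e\hat{B}ncL$ being the standing assumption declared before the corollaries). Its bound carries the factor $c^{1/2-1/\max(2,p)}$, which for $p=2$ is $c^{1/2-1/2}=1$, so the conclusion reduces to $\tfrac{27L_\ell\Lambda\max_{i\in\nbb_n}\|\phi(\bx_i)\|_2}{\sqrt{n}}\big(1+\log_2^{3/2}(\sqrt{2}n^{3/2}c)\big)$.

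I do not expect any genuine technical obstacle here: this corollary is a direct instantiation of general machinery already in place. The only point worth double-checking is that both Lipschitz estimates of Example \ref{exp:jenssen} hold without any domain restriction — which they do, since $\hat{\Psi}_y^{\ell}$ depends on a single coordinate — so that Corollaries \ref{cor:risk-dependent-lp} and \ref{cor:risk-independent-lp}, both phrased for losses on $\rbb^c$, apply verbatim. If one insists on the original formulation of \citet{jenssen2012scatter} with predictions constrained to the affine subspace $\Omega$, one simply observes that such a restriction only shrinks the loss/hypothesis classes and hence their complexities, so the bounds persist; this is the content of the ``minor variant'' remark preceding the corollary.
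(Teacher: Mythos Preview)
Your proposal is correct and follows essentially the same approach as the paper: the paper states that these corollaries are ``immediate by plugging the Lipschitz conditions established in Examples \ref{exp:loss-margin}--\ref{exp:jenssen} into Corollaries \ref{cor:risk-dependent-lp}, \ref{cor:risk-dependent-shatten}, \ref{cor:risk-independent-lp} and \ref{cor:risk-independent-shatten},'' and you carry out precisely this instantiation with $(L_1,L_2)=(0,L_\ell)$ in Corollary~\ref{cor:risk-dependent-lp} at $p=q=2$ for part~(a), and $L=L_\ell$ in Corollary~\ref{cor:risk-independent-lp} at $p=2$ for part~(b). Your additional remarks on the domain $\Omega$ being immaterial are a nice clarification but not strictly needed.
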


\begin{remark}[Comparison with the state of the art]
It is interesting to compare the above error bounds with the best known results in the literature.
To start with, the data-dependent error bound of Corollary \ref{cor:Cramer} (a) exhibits a square-root dependency on the number of classes,
matching the state of the art from the conference version of this paper \citep{lei2015multi}, which is significantly improved to
a \emph{logarithmic} dependency in Corollary \ref{cor:Cramer} (b).

 The error bound in Corollary \ref{cor:ww} (a) for the MC-SVM by \citet{weston1998multi} scales linearly in $c$.
On the other hand, according to Example \ref{exp:ww}, it is evident that $\tilde{\Psi}_y^{\ell}$ is $(c+\sqrt{c})L_\ell$-Lipschitz
continuous w.r.t. the $\ell_2$-norm, for any $y\in \ycal$.
Therefore, one can apply the structural result \eqref{structural-rademacher} from \citep{maurer2016vector,cortes2016structured}
to derive the bound $O(c^{3\over2}n^{-1}[\sum_{i=1}^{n}K(\bx_i,\bx_i)]^{1\over2})$.
Furthermore, according to Example \ref{exp:jenssen}, $\hat{\Psi}_y^{\ell}$ is $L_\ell$-Lipschitz continuity w.r.t. $\|\cdot\|_2$.
Hence, one can apply the structural result \eqref{structural-rademacher} to derive the bound $O(c^{1\over2}n^{-1}[\sum_{i=1}^{n}K(\bx_i,\bx_i)]^{1\over2})$,
which is worse than the error bound $O(n^{-1}[\sum_{i=1}^{n}K(\bx_i,\bx_i)]^{1\over2})$ based on Lemma \ref{lem:GP-structural-lipschitz} and stated
in Corollary \ref{cor:jenssen} (a), which has no dependency on the number of classes.
This justifies the effectiveness of our new structural result (Lemma \ref{lem:GP-structural-lipschitz}) in capturing the Lipschitz continuity of loss functions w.r.t.
a variant of the $\ell_2$-norm to allow for a relatively large $L_2$, which is precisely the case for some popular MC-SVMs \citep{weston1998multi,jenssen2012scatter,lapin2015top}.

Note that for the MC-SVMs by \citet{weston1998multi,lee2004multicategory,jenssen2012scatter}, the GC-based error bounds are tighter than the
corresponding error bounds based on CNs, up to logarithmic factors.
\end{remark}

\subsection{Top-$k$ MC-SVM\label{sec:top-k-svm}}
Motivated by the ambiguity in class labels caused by the rapid increase in number of classes in modern computer vision benchmarks, \citet{lapin2015top,lapin2016loss} introduce the top-$k$ MC-SVM by using the top-$k$ hinge loss to allow $k$ predictions for each object $\bx$. For any $\bt\in\rbb^c$, let the bracket $[\cdot]$ denote a permutation such that $[j]$ is the index of the $j$-th largest score,
i.e., $t_{[1]}\geq t_{[2]}\geq\cdots\geq t_{[c]}$.

\begin{example}[Top-$k$ hinge loss \citep{lapin2015top}]\label{exp:top-k}
  The top-$k$ hinge loss defined by
  \begin{equation}\label{top-k-loss}
    \Psi_y^k(\bt)=\max\Big\{0,\frac{1}{k}\sum_{j=1}^{k}(1_{y\neq 1}+t_1-t_{y},\ldots,1_{y\neq c}+t_c-t_{y})_{[j]}\Big\},\quad\forall\bt\in\rbb^c
  \end{equation}
  is Lipschitz continuous w.r.t. a variant of the $\ell_2$-norm involving a Lipschitz constant pair $\big(\frac{1}{\sqrt{k}},1\big)$ and index $y$. Furthermore, it is also
  $2$-Lipschitz continuous w.r.t. $\ell_\infty$-norm.
\end{example}

With the Lipschitz conditions established in Example \ref{exp:top-k}, we are now able to give generalization error bounds for the top-$k$ MC-SVM~\citep{lapin2015top}.
\begin{corollary}[Generalization bounds for top-$k$ MC-SVM]\label{cor:top-k-data-dependent}
  Consider the top-$k$ MC-SVM with the loss functions \eqref{top-k-loss} and the hypothesis space $H_\tau$ with $\tau(\bw)=\|\bw\|_{2,2}$. Let $0<\delta<1$. Then,
  \begin{enumerate}[(a)]
    \item with probability at least $1-\delta$, we have $A_2\leq \frac{2\Lambda\sqrt{2\pi}}{n}(c^{1\over 2}k^{-\frac{1}{2}}+1)[\sum_{i=1}^{n}K(\bx_i,\bx_i)]^{1\over2}$ (by GCs);
    \item with probability at least $1-\delta$, we have $A_2\leq\frac{54\Lambda \max_{i\in\nbb_n}\|\phi(\bx_i)\|_2}{\sqrt{n}}\big(1+\log_2^{3\over2}\big(\sqrt{2}n^{3\over2}c\big)\big)$ (by CNs).
  \end{enumerate}
\end{corollary}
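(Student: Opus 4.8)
The plan is to obtain both inequalities as immediate specializations of the general $\ell_p$-norm bounds already established, using the Lipschitz properties of the top-$k$ hinge loss recorded in Example \ref{exp:top-k} together with the observation that the regularizer $\tau(\bw)=\|\bw\|_{2,2}$ is the $p=2$ instance of the $\ell_{2,p}$-norm.

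For part (a), I would invoke Corollary \ref{cor:risk-dependent-lp} with $p=2$ and the Lipschitz constant pair $(L_1,L_2)=(k^{-1/2},1)$ supplied by Example \ref{exp:top-k}, which asserts that $\Psi_y^k$ is Lipschitz continuous w.r.t. a variant of the $\ell_2$-norm with index $y$. In the infimum over $q\geq p$ in \eqref{risk-dependent-lp} it suffices to pick $q=2$, so that $q^*=2$; then $c^{1/q^*}=c^{1/2}$ and $\max(c^{1/q^*-1/2},1)=\max(c^{0},1)=1$, and the bracketed factor collapses to $\sqrt{2}\,\bigl(k^{-1/2}c^{1/2}+1\bigr)$. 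Multiplying by the prefactor $\frac{2\Lambda\sqrt{\pi}}{n}\bigl[\sum_{i=1}^n K(\bx_i,\bx_i)\bigr]^{1/2}$ and folding $\sqrt{2}$ into $\sqrt{\pi}$ yields exactly $\frac{2\Lambda\sqrt{2\pi}}{n}\bigl(c^{1/2}k^{-1/2}+1\bigr)\bigl[\sum_{i=1}^n K(\bx_i,\bx_i)\bigr]^{1/2}$.

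For part (b), I would apply Corollary \ref{cor:risk-independent-lp} with $p=2$ and $L=2$, the $\ell_\infty$-Lipschitz constant of $\Psi_y^k$ from Example \ref{exp:top-k}; the hypothesis $\hat{B}_\Psi\leq 2e\hat{B}ncL$ required there is in force by the standing assumption preceding the corollary. Since $\max(2,p)=2$, the class-size exponent satisfies $c^{1/2-1/\max(2,p)}=c^{0}=1$, so it drops out, and $27L=54$; Corollary \ref{cor:risk-independent-lp} then reads $A_2\leq \frac{54\Lambda\max_{i\in\nbb_n}\|\phi(\bx_i)\|_2}{\sqrt{n}}\bigl(1+\log_2^{3/2}(\sqrt{2}\,n^{3/2}c)\bigr)$, which is the asserted bound.

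Thus the corollary itself is pure bookkeeping. The step that actually carries content --- though it is logically upstream, in Example \ref{exp:top-k} rather than in this corollary --- is verifying that $\Psi_y^k$ in \eqref{top-k-loss} is Lipschitz continuous w.r.t. the variant $\ell_2$-norm with the sharp pair $(k^{-1/2},1)$. I expect that to be the delicate part: one writes $\frac1k\sum_{j=1}^k(\cdot)_{[j]}$ as a maximum over $k$-subsets of averages, whose gradients have $\ell_2$-norm $k^{-1/2}$, to extract $L_1=k^{-1/2}$ on the coordinates other than $y$; notes that $t_y$ enters every one of the $c$ entries of the vector $(1_{y\neq j}+t_j-t_y)_j$ and hence with total coefficient $1$ after the top-$k$ average, giving $L_2=1$; and uses that $t\mapsto\max\{0,t\}$ is $1$-Lipschitz, so these constants are preserved.
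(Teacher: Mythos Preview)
Your proposal is correct and matches the paper's approach exactly: the paper states that these bounds are immediate by plugging the Lipschitz constants from Example~\ref{exp:top-k} into Corollaries~\ref{cor:risk-dependent-lp} and~\ref{cor:risk-independent-lp} with $p=2$, which is precisely what you do, and your arithmetic (choosing $q=2$ in the infimum for part (a), reading off $27L=54$ and $c^{1/2-1/\max(2,p)}=1$ for part (b)) is right. Your remark that the real work lies upstream in verifying the $(k^{-1/2},1)$ Lipschitz pair for $\Psi_y^k$ is also on point, and your sketch of that argument via writing the top-$k$ average as a maximum over $k$-subsets agrees with the paper's proof of Example~\ref{exp:top-k}.
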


\begin{remark}[Comparison with the state of the art]
An appealing property of Corollary \ref{cor:top-k-data-dependent} (a) is the involvement of the factor $k^{-\frac{1}{2}}$.
Note that we even can get error bounds with no dependencies on $c$ if we choose $k>\tilde{C}c$ for a universal constant $\tilde{C}$.

Comparing our result to the state of the art, it follows again from Example \ref{exp:top-k} that $\Psi_y^k$ is $(1+k^{-\frac{1}{2}})$-Lipschitz
continuous w.r.t. the $\ell_2$-norm for all $y\in\ycal$.
Using the structural result \eqref{structural-rademacher} \citep{lei2015multi,cortes2016structured,maurer2016vector},
one can derive an error bound decaying as $O\big(n^{-1}c^{1\over2}\big[\sum_{i=1}^{n}K(\bx_i,\bx_i)\big]^{1\over 2}\big)$,
which is suboptimal to Corollary \ref{cor:top-k-data-dependent} (a) since it does not shed insights on how the parameter $k$ would affect the generalization performance.
Furthermore, the error bound in Corollary \ref{cor:top-k-data-dependent} (b) enjoys a logarithmic dependency on the number of classes.
\end{remark}

\subsection{$\ell_p$-norm MC-SVMs}

In our previous work \citep{lei2015multi}, we introduce the $\ell_p$-norm MC-SVMs as an extension of the Crammer \& Singer MC-SVM
by replacing the associated $\ell_2$-norm regularizer with a general block $\ell_{2,p}$-norm regularizer~\citep{lei2015multi}.
We establish data-dependent error bounds in \citep{lei2015multi}, showing a logarithmic dependency on the number of classes as $p$ decreases to $1$.
The present analysis yields the following bounds, which also hold for the MC-SVM with the multinomial logistic loss
and the block $\ell_{2,p}$-norm regularizer.

\begin{corollary}[Generalization bounds for $\ell_p$-norm MC-SVMs]\label{cor:lp-mcsvm}
  Consider the $\ell_p$-norm MC-SVM with loss function \eqref{loss-function-margin} and the hypothesis space $H_\tau$ with $\tau(\bw)=\|\bw\|_{2,p},p\geq1$. Let $0<\delta<1$. Then,
  \begin{enumerate}[(a)]
    \item with probability at least $1-\delta$, we have:
    $$
      A_p\leq \frac{4L_\ell\Lambda\sqrt{\pi}}{n}\big[\sum_{i=1}^{n}K(\bx_i,\bx_i)\big]^{1\over 2}\inf_{q\geq p}[(q^*)^{1\over 2}c^{1\over {q^*}}] \quad\text{(by GCs)};
    $$
    \item with probability at least $1-\delta$, we have:
    $$
      A_p\leq \frac{54L_\ell\Lambda \max_{i\in\nbb_n}\|\phi(\bx_i)\|_2c^{\frac{1}{2}-\frac{1}{\max(2,p)}}}{\sqrt{n}}\big(1+\log_2^{3\over2}\big(\sqrt{2}n^{3\over2}c\big)\big) \quad\text{(by CNs)}.
    $$
  \end{enumerate}
\end{corollary}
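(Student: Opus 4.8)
The plan is to obtain both inequalities by pure \emph{instantiation}: the analytic work is already contained in Corollary~\ref{cor:risk-dependent-lp} and Corollary~\ref{cor:risk-independent-lp}, so all that remains is to feed in the Lipschitz constants of the margin-based loss $\Psi_y^\ell$ recorded in Example~\ref{exp:loss-margin} and to simplify the resulting expressions.

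For part~(a) I would argue as follows. By Example~\ref{exp:loss-margin}, $\Psi_y^\ell$ is $(2L_\ell)$-Lipschitz continuous w.r.t.\ the $\ell_2$-norm; this is precisely the special case of Definition~\ref{def:lipschitz-variant-2-norm} in which the Lipschitz constant pair is $(L_1,L_2)=(2L_\ell,0)$ (with an arbitrary index, in particular index $y$), and it holds for every $y\in\ycal$. Substituting $L_1=2L_\ell$ and $L_2=0$ into \eqref{risk-dependent-lp} annihilates the second summand inside the infimum and gives
\[
  A_p\leq \frac{2\Lambda\sqrt{\pi}}{n}\Big[\sum_{i=1}^{n}K(\bx_i,\bx_i)\Big]^{1/2}\inf_{q\geq p}\big[2L_\ell (q^*)^{1/2}c^{1/q^*}\big],
\]
which, after pulling the factor $2$ out of the infimum, is exactly the claimed bound.

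For part~(b) I would instead invoke the $\ell_\infty$-Lipschitz continuity: again by Example~\ref{exp:loss-margin}, $\Psi_y^\ell$ is $(2L_\ell)$-Lipschitz continuous w.r.t.\ the $\ell_\infty$-norm for every $y\in\ycal$. Taking $L=2L_\ell$ and using the standing assumption $\hat{B}_\Psi\leq 2e\hat{B}ncL$ (which is in force throughout this batch of corollaries), Corollary~\ref{cor:risk-independent-lp} applies verbatim and its conclusion with $L=2L_\ell$ reads
\[
  A_p\leq \frac{27\cdot 2L_\ell\,\Lambda\,\max_{i\in\nbb_n}\|\phi(\bx_i)\|_2\,c^{1/2-1/\max(2,p)}}{\sqrt{n}}\Big(1+\log_2^{3/2}\big(\sqrt{2}\,n^{3/2}c\big)\Big),
\]
i.e.\ the asserted inequality with constant $54L_\ell$.

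As for difficulties: at the level of this corollary there is essentially no obstacle — it is a one-line substitution into two previously established results. The only points requiring care are bookkeeping ones: first, recognizing that ordinary $\ell_2$-Lipschitz continuity is the degenerate case $(L_1,L_2)=(2L_\ell,0)$ of the variant in Definition~\ref{def:lipschitz-variant-2-norm}, so that Corollary~\ref{cor:risk-dependent-lp} is applicable; and second, making sure the side condition $\hat{B}_\Psi\leq 2e\hat{B}ncL$ demanded by Corollary~\ref{cor:risk-independent-lp} is exactly the one imposed in the paragraph preceding the corollary, with $L=2L_\ell$. The genuine content — the $\ell_2$- and $\ell_\infty$-Lipschitz constants of $\Psi_y^\ell$ — is localized in Example~\ref{exp:loss-margin}, whose short verification rests on $\ell$ being $L_\ell$-Lipschitz together with the fact that $\bt\mapsto\max_{y'\neq y}(t_{y'}-t_y)$ changes by at most $2\|\bt-\bt'\|_\infty\leq 2\|\bt-\bt'\|_2$.
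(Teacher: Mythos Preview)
Your proposal is correct and follows exactly the paper's intended route: the paper states that these corollaries are ``immediate by plugging the Lipschitz conditions established in Examples~\ref{exp:loss-margin}--\ref{exp:jenssen} into Corollaries~\ref{cor:risk-dependent-lp}, \ref{cor:risk-dependent-shatten}, \ref{cor:risk-independent-lp} and \ref{cor:risk-independent-shatten},'' and your instantiation with $(L_1,L_2)=(2L_\ell,0)$ for part~(a) and $L=2L_\ell$ for part~(b) reproduces the asserted constants verbatim.
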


\begin{remark}[Comparison with the state of the art]\label{rem:lp-svm}
  Corollary \ref{cor:lp-mcsvm} (a) is an extension of error bounds in the conference version \citep{lei2015multi} from $1\leq p\leq2$ to the case $p\geq1$.
  We can see how $p$ affects the generalization performance of $\ell_p$-norm MC-SVMs.
  The function $f:\rbb_+\to\rbb_+$ defined by $f(t)=t^{1\over 2}c^{1\over t}$ is monotonically decreasing on the interval $(0,2\log c)$ and increasing on the interval $(2\log c,\infty)$. Therefore, the data-dependent error bounds in Corollary \ref{cor:lp-mcsvm} (a)  transfer to
  $$
    A_p\leq \begin{cases}
              4\Lambda L_\ell\sqrt{\pi p^*}n^{-1}c^{1-\frac{1}{p}}\big[\sum_{i=1}^{n}K(\bx_i,\bx_i)\big]^{\frac{1}{2}}, & \mbox{if } p>\frac{2\log c}{2\log c-1}, \\
              4\Lambda L_\ell(2\pi e\log c)^{1\over 2}n^{-1}\big[\sum_{i=1}^{n}K(\bx_i,\bx_i)\big]^{1\over 2}, & \mbox{otherwise}.
            \end{cases}
  $$
  That is, the dependency on the number of classes would be polynomial with exponent $1/p^*$ if $p>\frac{2\log c}{2\log c-1}$ and logarithmic otherwise.
  On the other hand, the error bounds in Corollary \ref{cor:lp-mcsvm} (b) significantly improve those in Corollary \ref{cor:lp-mcsvm} (a).
  Indeed, the error bounds in Corollary \ref{cor:lp-mcsvm} (b) enjoy a logarithmic dependency on the number of classes if $p\leq 2$ and a polynomial
  dependency with exponent $\frac{1}{2}-\frac{1}{p}$ otherwise (up to logarithmic factors). This phase transition phenomenon at $p=2$ is explained in Remark \ref{rem:phase}.
  It is also clear that error bounds based on CNs outperform those based on GCs by a factor of $\sqrt{c}$ for $p\geq2$ (up to logarithmic factors),
  which, as we will explain in subsection \ref{sec:discussion}, is due to the use of the Lipschitz continuity measured by a norm suitable to the loss function.
\end{remark}

\subsection{Schatten-$p$ Norm MC-SVMs}
\citet{amit2007uncovering} propose to use trace-norm regularization in multi-class classification to uncover shared
structures always existing in the learning regime with many classes.
Here we consider error bounds for the more general Schatten-$p$ norm MC-SVMs.
\begin{corollary}[Generalization bounds for Schatten-$p$ norm MC-SVMs]\label{cor:schatten}
  Let $\phi$ be the identity map and represent $\bw$ by a matrix $W\in\rbb^{d\times c}$.
  Consider Schatten-$p$ norm MC-SVMs with loss functions \eqref{loss-function-margin} and the hypothesis space $H_\tau$ with $\tau(W)=\|W\|_{S_p},p\geq1$. Let $0<\delta<1$. Then,
  \begin{enumerate}[(a)]
    \item with probability at least $1-\delta$, we have:
    $$
      A_{S_p}\leq \begin{cases}
                    \frac{2^{7\over 4}\pi\Lambda L_\ell}{n\sqrt{e}}\inf\limits_{p\leq q\leq 2}(q^*)^{1\over 2}\Big[c^{1\over {q^*}}\big[\sum_{i=1}^{n}\|\bx_i\|_2^2\big]^{1\over 2}+c^{1\over 2}\|\sum_{i=1}^{n}\bx_i\bx_i^\top\|^{1\over 2}_{S_{\frac{q^*}{2}}}\Big], & \mbox{if } p\leq2, \\
                    \frac{2^{9\over 4}\pi\Lambda L_\ell c^{1\over2}\min\{c,d\}^{\frac{1}{2}-\frac{1}{p}}}{n\sqrt{e}}\big[\sum_{i=1}^{n}\|\bx_i\|_2^2\big]^{1\over2}, & \mbox{otherwise}.
                  \end{cases}
    $$
    \item with probability at least $1-\delta$, we have:
    $$
      A_{S_p}\leq\begin{cases}
                   \frac{54L_\ell\Lambda \max_{i\in\nbb_n}\|\bx_i\|_2}{\sqrt{n}}\Big(1+\log_2^{3\over2}\big(\sqrt{2}n^{3\over2}c\big)\Big), & \mbox{if } p\leq2 \\
                   \frac{54L_\ell\Lambda \max_{i\in\nbb_n}\|\bx_i\|_2\min\{c,d\}^{\frac{1}{2}-\frac{1}{p}}}{\sqrt{n}}\Big(1+\log_2^{3\over2}\big(\sqrt{2}n^{3\over2}c\big)\Big), & \mbox{otherwise}.
                 \end{cases}
    $$
  \end{enumerate}
\end{corollary}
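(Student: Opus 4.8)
The plan is to deduce both parts of Corollary~\ref{cor:schatten} directly from the two general Schatten-$p$ norm results already established, namely Corollary~\ref{cor:risk-dependent-shatten} (the GC-based bound) for part~(a) and Corollary~\ref{cor:risk-independent-shatten} (the CN-based bound) for part~(b), by inserting the Lipschitz constants of the Crammer and Singer margin-based loss $\Psi_y^\ell$ recorded in Example~\ref{exp:loss-margin}. The setup matches the hypotheses of those corollaries exactly: $\phi$ is the identity map, $\bw$ is represented by a matrix $W\in\rbb^{d\times c}$, and $\tau(W)=\|W\|_{S_p}$. So the only work is to feed in the right constants and simplify; no new inequality is required.

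For part~(a), I would first observe that the $(2L_\ell)$-Lipschitz continuity of $\Psi_y^\ell$ with respect to the $\ell_2$-norm (Example~\ref{exp:loss-margin}) is a degenerate instance of Lipschitz continuity with respect to a variant of the $\ell_2$-norm in the sense of Definition~\ref{def:lipschitz-variant-2-norm}: it holds with the Lipschitz constant pair $(L_1,L_2)=(2L_\ell,0)$ and index $y$, the extra single-component term being vacuous. Substituting $L_1=2L_\ell$, $L_2=0$ into \eqref{risk-dependent-shatten} of Corollary~\ref{cor:risk-dependent-shatten} makes every $L_2$-term vanish; in the case $p\le2$ the factor $2$ extracted from $L_1$ combines with the leading constant $2^{3/4}$ to give $2^{7/4}$, and in the case $p>2$ the constant $2^{5/4}$ becomes $2^{9/4}$. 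This reproduces precisely the two displayed bounds in part~(a).

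For part~(b), I would use instead the $(2L_\ell)$-Lipschitz continuity of $\Psi_y^\ell$ with respect to the $\ell_\infty$-norm from Example~\ref{exp:loss-margin}, and apply Corollary~\ref{cor:risk-independent-shatten} with $L=2L_\ell$; the required side condition $\hat{B}_\Psi\le 2e\hat{B}ncL$ is assumed throughout this subsection, so it is available. Substituting $L=2L_\ell$ turns the constant $27$ into $54$ and leaves the remaining factors untouched, yielding the stated bounds in both the $p\le2$ and $p>2$ cases. There is no genuine analytic obstacle here; the only points demanding a little care are (i) recognizing that the plain $\ell_2$-Lipschitz property embeds into Definition~\ref{def:lipschitz-variant-2-norm} with $L_2=0$, so that the GC-based machinery of Theorem~\ref{thm:risk-data-dependent}/Corollary~\ref{cor:risk-dependent-shatten} applies with its full strength, and (ii) tracking the numerical constants ($2^{3/4}\cdot2=2^{7/4}$, $2^{5/4}\cdot2=2^{9/4}$, $27\cdot2=54$) through the substitution. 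I would also note in passing that one could instead use the (slightly sharper) pair $(L_\ell,L_\ell)$ in part~(a), but the choice $(2L_\ell,0)$ is what yields the constants as stated.
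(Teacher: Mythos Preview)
Your proposal is correct and matches the paper's own approach exactly: the paper states that these bounds are immediate from plugging the Lipschitz constants of Example~\ref{exp:loss-margin} into Corollaries~\ref{cor:risk-dependent-shatten} and~\ref{cor:risk-independent-shatten}, and your substitution $(L_1,L_2)=(2L_\ell,0)$ for part~(a) and $L=2L_\ell$ for part~(b), together with the constant tracking $2^{3/4}\cdot2=2^{7/4}$, $2^{5/4}\cdot2=2^{9/4}$, $27\cdot2=54$, is precisely what is needed.
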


\begin{remark}[Analysis of Schatten-$p$ norm MC-SVMs]
Analogous to Remark \ref{rem:lp-svm}, error bounds of Corollary \ref{cor:schatten} (a) transfer to
$$
  \begin{cases}
    O\big(n^{-1}(p^*)^{1\over 2}\big(c^{1\over {p^*}}\big[\sum_{i=1}^{n}\|\bx_i\|_2^2\big]^{1\over2}+c^{1\over2}\|\sum_{i=1}^{n}\bx_i\bx_i^\top\|^{1\over2}_{S_{\frac{p^*}{2}}}\big)\big), & \mbox{if } 2\leq p^*\leq2\log c,\\
    O\big(n^{-1}\sqrt{\log c}\big(\big[\sum_{i=1}^{n}\|\bx_i\|_2^2\big]^{1\over2}+c^{1\over2}\|\sum_{i=1}^{n}\bx_i\bx_i^\top\|_{S_{\log c}}^{1\over2}\big)\big), & \mbox{if } 2<2\log c<p^*, \\
    O\big(n^{-1}c^{1-\frac{1}{p}}\big[\sum_{i=1}^{n}\|\bx_i\|_2^2\big]^{1\over2}\big), & \mbox{if } p>2.
  \end{cases}
$$
As a comparison, error bounds in Corollary \ref{cor:schatten} (b) would decay as $O(n^{-\frac{1}{2}}\log^{3\over2}(n^{3\over2}c))$
if $p\leq2$ and $O\big(n^{-\frac{1}{2}}c^{\frac{1}{2}-\frac{1}{p}}\log^{3\over2}(n^{3\over2}c)\big)$ otherwise, which
significantly outperform those in Corollary \ref{cor:schatten} (a).
\end{remark}

\begin{table}[htbp]
\caption{Comparison of Data-dependent Generalization Error Bounds Derived in This Taper. \normalfont  We use the notation $B_1=\big(\frac{1}{n}\sum_{i=1}^{n}K(\bx_i,\bx_i)\big)^{1\over2}$ and $B_\infty=\max_{i\in\nbb_n}\|\phi(\bx_i)\|_2$. The best bound for each MC-SVM is followed by a bullet. \label{tab:comparison}}
  \centering
  \setlength{\tabcolsep}{0.4em}
  \centering\def\arraystretch{1.2}
    \begin{tabular}{|c|c|c|c|}
    \hline
    MC-SVM & by structural result \eqref{structural-rademacher} &by GCs & by CNs \\ \hline
    Crammer \& Singer & $O\big(B_1n^{-\frac{1}{2}}c^{1\over2}\big)$ & $O\big(B_1n^{-\frac{1}{2}}c^{1\over2}\big)$ & $O\big(B_\infty n^{-\frac{1}{2}}\log^{3\over2}(nc)\big)\ \bullet$ \\ \hline
    Multinomial Logistic & $O\big(B_1n^{-\frac{1}{2}}c^{1\over2}\big)$ & $O\big(B_1n^{-\frac{1}{2}}c^{1\over2}\big)$ & $O\big(B_\infty n^{-\frac{1}{2}}\log^{3\over2}(nc)\big)\ \bullet$ \\ \hline
    \citeauthor{weston1998multi} & $O\big(B_1n^{-\frac{1}{2}}c^{3\over2}\big)$ & $O\big(B_1n^{-\frac{1}{2}}c\big)\ \bullet$ & $O\big(B_\infty n^{-\frac{1}{2}}c\log^{3\over2}(nc)\big)$ \\ \hline
    \citeauthor{lee2004multicategory} & $O\big(B_1n^{-\frac{1}{2}}c\big)\ \bullet$ & $O\big(B_1n^{-\frac{1}{2}}c\big)\ \bullet$ & $O\big(B_\infty n^{-\frac{1}{2}}c\log^{3\over2}(nc)\big)$\\ \hline
    \citeauthor{jenssen2012scatter} &$O\big(B_1n^{-\frac{1}{2}}c^{1\over2}\big)$& $O\big(B_1n^{-\frac{1}{2}}\big)\ \bullet$ & $O\big(B_\infty n^{-\frac{1}{2}}\log^{3\over2}(nc)\big)$\\ \hline
    top-k & $O\big(B_1n^{-\frac{1}{2}}c^{1\over2}\big)$ & $O\big(B_1n^{-\frac{1}{2}}(ck^{-1})^{1\over 2}\big)$ & $O\big(B_\infty n^{-\frac{1}{2}}\log^{3\over2}(nc)\big)\ \bullet$\\ \hline
    $\ell_p$-norm $p\in(1,\infty)$ & $O\big(B_1n^{-\frac{1}{2}}c^{1-\frac{1}{p}}\big)$ & $O\big(B_1n^{-\frac{1}{2}}c^{1-\frac{1}{p}}\big)$  & $O\big(B_\infty n^{-\frac{1}{2}}c^{\frac{1}{2}-\frac{1}{\max(2,p)}}\log^{3\over2}(nc)\big)\ \bullet$ \\ \hline
    Schatten-$p$ $p\in[1,2)$ & $O\big(B_1n^{-\frac{1}{2}}c^{\frac{1}{2}}\big)$  & $O\big(B_1n^{-\frac{1}{2}}c^{\frac{1}{2}}\big)$ & $O\big(B_\infty n^{-\frac{1}{2}}\log^{3\over2}(nc)\big)\ \bullet$ \\ \hline
    Schatten-$p$ $p\in[2,\infty)$ & $O\big(B_1n^{-\frac{1}{2}}c^{1-\frac{1}{p}}\big)$ & $O\big(B_1n^{-\frac{1}{2}}c^{1-\frac{1}{p}}\big)$ & $O\big(B_\infty n^{-\frac{1}{2}}c^{\frac{1}{2}-\frac{1}{p}}\log^{3\over2}(nc)\big)\ \bullet$ \\
    \hline
  \end{tabular}
\end{table}


\subsection{Comparison of the GC   with the CN Approach\label{sec:discussion}}
In this paper, we develop two methods to derive data-dependent error bounds applicable to learning with many classes. We
summarize these two types of error bounds for some specific MC-SVMs in the third and fourth column of Table \ref{tab:comparison},
from which it is clear that each approach may yield better bounds for some MC-SVMs than the other.
For example, for the Crammer \& Singer MC-SVM, the GC-based error bound enjoys a square-root
dependency on the number of classes, while the CN-based bound enjoys a logarithmic dependency.
CN-based error bounds also enjoy significant advantages for $\ell_p$-norm MC-SVMs and Schatten-$p$ norm MC-SVMs.
On the other hand, GC-based analyses also enjoy some advantages. Firstly,
for the MC-SVMs in \citet{weston1998multi,lee2004multicategory}, the GC-based error bounds decay as $O(n^{-\frac{1}{2}}c)$, while the CN-based bounds
decay as $O(n^{-\frac{1}{2}}c\log^{3\over2}(nc))$. Secondly, the GC-based error bounds
involve a summation of $K(\bx_i,\bx_i)$ over training examples, while the CN-based error bounds involve a maximum of $\|\phi(\bx_i)\|_i$ over training examples.
In this sense, GC-based error bounds capture better the properties of the distribution from which the training examples are drawn.

Observing the mismatch between these two types of generalization error bounds, it is interesting to perform an in-depth discussion to explain this phenomenon.
Our GC-based bounds are based on a structural result (Lemma \ref{lem:GP-structural-lipschitz}) of empirical GCs to exploit the Lipschitz continuity
of loss functions w.r.t. a variant of the $\ell_2$-norm, while our CN-based analysis is based on a structural
result of empirical $\ell_\infty$-norm CNs to directly use the Lipschitz continuity of loss functions w.r.t. the $\ell_\infty$-norm. Which
approach is better depends on the Lipschitz continuity of the associated loss functions. Specifically, if $\Psi_y$ is Lipschitz continuous w.r.t. a variant of the $\ell_2$-norm
involving the Lipschitz constant pair $(L_1,L_2)$, and $L$-Lipschitz continuous w.r.t. the $\ell_\infty$-norm, then one can show the following inequality with probability
at least $1-\delta$ for $\delta\in(0,1)$ (Theorem \ref{thm:risk-data-dependent} and Theorem \ref{thm:risk-data-independent}, respectively)
\begin{subnumcases}{A_\tau\leq}
   2\sqrt{\pi}\Big[L_1c\mathfrak{G}_{\widetilde{S}}(\widetilde{H}_\tau)+L_2\mathfrak{G}_{\widetilde{S}'}(\widetilde{H}_\tau)\Big], & (by GCs), \label{comparison-gaussian}
   \\
   27L\sqrt{c}\mathfrak{R}_{nc}(\widetilde{H}_\tau)\Big(1+\log_2^{3\over2}\frac{\hat{B}n\sqrt{c}}{\mathfrak{R}_{nc}(\widetilde{H}_\tau)}\Big), & (by CNs). \label{comparison-covering}
\end{subnumcases}

It is reasonable to assume that $\mathfrak{G}_{\widetilde{S}}(\widetilde{H}_\tau)$ and $\mathfrak{R}_{nc}(\widetilde{H}_\tau)$ decay at a same order. For example,
if $\tau(\bw)=\|\bw\|_{2,p},p\geq2$, then one can show (the first inequality follows from \eqref{risk-data-dependent-3}, \eqref{risk-data-dependent-4} and
\eqref{risk-dependent-lp-1}, and the second inequality follows from Proposition \ref{prop:rademacher-independent-lp})
$$
  \mathfrak{G}_{\widetilde{S}}(\widetilde{H}_\tau)=O\Big(n^{-1}c^{-\frac{1}{p}}\big(\sum_{i=1}^{n}K(\bx_i,\bx_i)\big)^{1\over2}\Big)\quad
  \text{and}\quad\mathfrak{R}_{nc}(\widetilde{H}_\tau)=O\Big(n^{-\frac{1}{2}}c^{-\frac{1}{p}}\max_{i\in\nbb_n}\|\phi(\bx_i)\|_2\Big).
$$
We further assume the dominant term in \eqref{comparison-gaussian} is $L_1c\mathfrak{G}_{\widetilde{S}}(\widetilde{H}_\tau)$ to see clearly the
relative behavior of these two types of error bounds. If $L_1$ and $L$ are of the same order, as exemplified by Example \ref{exp:loss-margin} and Example \ref{exp:soft-max},
then the error bounds based on CNs  outperform those based on  GCs by a factor of $\sqrt{c}$ (up to logarithmic factors).
If $L_1=O(c^{-\frac{1}{2}}L)$, as exemplified by Example \ref{exp:ww}, Example \ref{exp:llw} and Example \ref{exp:jenssen}, then the error bounds based on GCs
 outperform those based on CNs by a factor of $\log^{3\over2}(nc)$.
The underlying reason is that the Lipschitz continuity w.r.t. $\|\cdot\|_2$ is a weaker assumption than that w.r.t. $\|\cdot\|_\infty$ in the magnitude of Lipschitz
constants. Indeed, if $\Psi_y$ is $L_1$-Lipschitz continuous w.r.t. $\|\cdot\|_2$, then one may expect that $\Psi_y$ is $(L_1\sqrt{c})$-Lipschitz continuous w.r.t.
$\|\cdot\|_\infty$ due to the inequality $\|\mathbf{t}\|_2\leq\sqrt{c}\|\mathbf{t}\|_\infty$ for any $\mathbf{t}\in\rbb^c$. This explains why \eqref{comparison-covering}
outperforms \eqref{comparison-gaussian} by a factor of $\sqrt{c}$ if we ignore the Lipschitz constants. To summarize, if $L_1=O(c^{-\frac{1}{2}}L)$, then \eqref{comparison-gaussian}
outperforms \eqref{comparison-covering}. Otherwise, \eqref{comparison-covering} would be better. Therefore, one should choose an appropriate approach according to the associated loss function
to exploit the inherent Lipschitz continuity.

We also include the error bounds based on the structural result \eqref{structural-rademacher} in the second column to demonstrate
the advantages of the structural result based on our variant of the $\ell_2$-norm over \eqref{structural-rademacher}.

\section{ Experiments\label{sec:experiments}}

In this section, we report some experimental results to show the effectiveness of our theory.
We choose the multinomial logistic loss $\Psi_y(\bt)=\Psi_y^m(\bt)$ defined in Example \ref{exp:soft-max}
and the hypothesis space $H_\tau$ with $\tau(\bw)=\|\bw\|_{2,p},p\geq1$
and $\phi(\bx)=\bx$. In subsection \ref{sec:exp-rc}, we aim to show that our error bounds capture well
the effects of the parameter $p$ and the number of classes on the generalization performance. In subsection, \ref{sec:exp-ms}, we aim to show
that our error analysis is able to imply a structural risk working well in the model selection.
We use several benchmark datasets in our experiments: the MNIST collected by \cite{lecun1998gradient},
the NEWS20 collected by \cite{lang1995newsweeder}, the LETTER collected by \cite{hsu2002comparison},
the RCV1 collected by \cite{lewis2004rcv1}, the SECTOR collected by \cite{mccallum1998comparison} and the ALOI collected by \cite{geusebroek2005amsterdam}.
For ALOI, we include the first $67\%$ instances in each class in the training dataset and use the remaining instances as the test dataset.
Table \ref{tab:data_set} gives some information of these datasets.
All these datasets can be downloaded from the LIBSVM website \citep{chang2011libsvm}.
\begin{table}[htbp]
\caption{Description of datasets used in the experiments.
		\label{tab:data_set}}
\small
\setlength{\tabcolsep}{3pt}
\centering
  \begin{tabular}{*{5}{|c}|}\hline
  Dataset & $c$ & $n$ & \# Test Examples & $d$ \\\hline
  MNIST & $10$ & $60,000$ & $10,000$ & $778$\\\hline
  NEWS20 & $20$ & $15,935$ & $3,993$ & $62,060$ \\\hline
  LETTER & $26$  & $10,500 $ & $5,000$ & $16$ \\\hline
  RCV1 & $53$ & $15,564$ & $518,571$ & $47,236$ \\\hline
  SECTOR & $105$ & $6,412$ & $3,207$ & $55,197$ \\\hline
  ALOI & $1,000$ & $72,000$ & $36,000$ & $128$ \\\hline
  \end{tabular}    		
\end{table}

\subsection{ Behavior of empirical Rademacher complexity\label{sec:exp-rc}}


Our generalization analysis depends crucially on estimating the empirical RC $\mathfrak{R}_S(F_{\tau,\Lambda})$,
which, as shown in the proof of Corollary \ref{cor:lp-mcsvm} (b), is bounded by
$O(\Lambda n^{-\frac{1}{2}}c^{\frac{1}{2}-\frac{1}{\max(2,p)}}\max\limits_{i\in\nbb_n}\|\bx_i\|_2)$.
Our purpose here is to investigate whether it  really captures the behavior of $\mathfrak{R}_S(F_{\tau,\Lambda})$ in practice.

Let $\bm{\epsilon}=\{\epsilon_i\}_{i\in\nbb_n}$ be independent Rademacher variables and define
\begin{equation}\label{AERC}
  \widetilde{\mathfrak{R}}_S(\bm{\epsilon},F_{\tau,\Lambda}):=\frac{1}{n}\sup_{\bw\in\rbb^{d\times c}:\|\bw\|_{2,p}\leq\Lambda}\sum_{i=1}^{n}\epsilon_i\Psi_{y_i}^m\big(\langle\bw_1,\bx_i\rangle,\ldots,\langle\bw_c,\bx_i\rangle\big).
\end{equation}
It can be checked that $\widetilde{\mathfrak{R}}_S(\bm{\epsilon},F_{\tau,\Lambda})$ (as a function of $\bm{\epsilon}$) satisfies the increment
condition \eqref{bounded-variation-assumption} in McDiarmid's inequality below and concentrates sharply around its expectation $\mathfrak{R}_S(F_{\tau,\Lambda})$.
We approximate $\mathfrak{R}_S(F_{\tau,\Lambda})$ by an Approximation of Empirical Rademacher
Complexity (AERC) defined by
$\text{AERC}(F_{\tau,\Lambda}):=\frac{1}{20}\sum_{t=1}^{20}\widetilde{\mathfrak{R}}_S(\bm{\epsilon}^{(t)},F_{\tau,\Lambda})$,
where $\bm{\epsilon}^{(t)}=\{\epsilon_i^{(t)}\}_{i\in\nbb_n},t=1,\ldots,20$, are independent sequences of independent Rademacher random variables.
We consider two approaches to identify the effect of $p$ and $c$ on the generalization performance, respectively.

\begin{algorithm2e}[htbp]\label{alg:frank-wolfe}
\SetKwInOut{Input}{input}
  \caption{Frank-Wolfe Algorithm}
    \BlankLine
    Let $k=0$ and $\bw^{(0)}=0\in\rbb^{d\times c}$\\
    \While{Optimality conditions are not satisfied}{
    Compute $\tilde{\bw}=\arg\min_{\bw:\|\bw\|_{2,p}\leq \Lambda}\big\langle\bw,\nabla f(\bw^{(k)})\rangle$ \\
    Calculate the direction $\bv=\tilde{\bw}-\bw^{(k)}$ and step size $\gamma\in[0,1]$\\
    Update $\bw^{(k+1)}=\bw^{(k)}+\gamma\bv$\\
    Set $k=k+1$
    }
\end{algorithm2e}

The calculation of AERC involves the constrained non-convex optimization problem \eqref{AERC}, which we solve by the classic Frank-Wolfe algorithm~\citep{frank1956algorithm,jaggi2013revisiting}.
We describe the Frank-Wolfe algorithm to solve $\min_{\bw\in\triangle_p} f(\bw)$ for a general function $f$ defined
on the feasible set $\triangle_p=\{\bw\in\rbb^{d\times c}:\|\bw\|_{2,p}\leq\Lambda\}$ with $p\geq1$ and $\Lambda>0$ in
Algorithm \ref{alg:frank-wolfe}. This is a projection-free method but involves a constrained linear optimization problem at each iteration,
which, as shown in the following proposition, has a closed-form solution.
In line 4 of Algorithm \ref{alg:frank-wolfe}, we use a backtracking line search to search the step size $\gamma$
satisfying the \emph{Armijo condition} (e.g., page 33 in \citep{nocedal2006numerical}).
Proposition \ref{prop:FW} can be proved by checking $\|\bw^*\|_{2,p}\leq 1$ and $\langle\bw^*,\bv\rangle=-\|\bv\|_{2,p^*}$,
and is deferred to Appendix \ref{sec:proof-fw}.
\begin{proposition}\label{prop:FW}
  Let $\bv=(\bv_1,\ldots,\bv_c)\in\rbb^{d\times c}$ have nonzero column vectors and $p\geq1$. Then the optimization problem
  \begin{equation}\label{linear-optimization}
    \arg\min_{\bw\in\rbb^{d\times c}} \langle \bw, \bv\rangle\quad \text{s.t. }\|\bw\|_{2,p}\leq 1
  \end{equation}
  has a closed-form solution $\bw^*=(\bw_1^*,\ldots,\bw_c^*)$ as follows
  \begin{equation}\label{close-form-linear}
    \bw_j^*=\begin{cases}
            -\bv_{\bar{j}}\|\bv_{\bar{j}}\|_2^{-1}, & \mbox{if } p=1 \mbox{ and } j=\bar{j}, \\
            0, & \mbox{if } p=1 \mbox{ and } j\neq\bar{j}, \\
            -\big(\sum_{\tilde{j}=1}^{c}\|\bv_{\tilde{j}}\|_2^{p^*}\big)^{-\frac{1}{p}}\|\bv_j\|_2^{p^*-2}\bv_j, & \mbox{if } 1<p<\infty, \\
            -\|\bv_j\|_2^{-1}\bv_j, & \mbox{if }p=\infty,
          \end{cases}
  \end{equation}
  where $\bar{j}$ is the smallest index satisfying $\|\bv_{\bar{j}}\|_2=\max_{\tilde{j}\in\nbb_c}\|\bv_{\tilde{j}}\|_2$ and $p^*=p/(p-1)$.
\end{proposition}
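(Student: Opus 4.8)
The plan is to exploit the duality between the block norms $\|\cdot\|_{2,p}$ and $\|\cdot\|_{2,p^*}$. First I would establish a lower bound on the objective: for any feasible $\bw$, applying the Cauchy--Schwarz inequality columnwise and then Hölder's inequality over the $c$ column norms $\|\bw_1\|_2,\ldots,\|\bw_c\|_2$ and $\|\bv_1\|_2,\ldots,\|\bv_c\|_2$ gives
$$
  \langle\bw,\bv\rangle=\sum_{j=1}^c\langle\bw_j,\bv_j\rangle\ge-\sum_{j=1}^c\|\bw_j\|_2\|\bv_j\|_2\ge-\|\bw\|_{2,p}\|\bv\|_{2,p^*}\ge-\|\bv\|_{2,p^*},
$$
where for $p=1$ this reads $-\|\bv\|_{2,\infty}=-\max_{j}\|\bv_j\|_2$ and for $p=\infty$ it reads $-\|\bv\|_{2,1}=-\sum_{j}\|\bv_j\|_2$. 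Since the feasible set is compact and the objective continuous, it then suffices to exhibit a feasible $\bw^*$ attaining this lower bound; any such point is automatically a minimizer.

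Second, I would check that the stated $\bw^*$ is feasible, i.e. $\|\bw^*\|_{2,p}\le 1$. For $p=1$ exactly one column of $\bw^*$ is nonzero and has unit $\ell_2$-norm, and for $p=\infty$ every column has unit $\ell_2$-norm, so both cases are immediate. For $1<p<\infty$, writing $c_0:=\big(\sum_{\tilde j=1}^c\|\bv_{\tilde j}\|_2^{p^*}\big)^{-1/p}$, one has $\|\bw_j^*\|_2=c_0\|\bv_j\|_2^{p^*-1}$, hence $\|\bw^*\|_{2,p}^p=c_0^p\sum_{j=1}^c\|\bv_j\|_2^{(p^*-1)p}$; the defining relation $1/p+1/p^*=1$ yields the identity $(p^*-1)p=p^*$, which collapses this to $c_0^p\sum_j\|\bv_j\|_2^{p^*}=1$, so in fact $\|\bw^*\|_{2,p}=1$.

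Third, I would compute the objective at $\bw^*$. In each case $\langle\bw_j^*,\bv_j\rangle$ is by construction a nonpositive multiple of $\|\bv_j\|_2^2$, and summing gives $\langle\bw^*,\bv\rangle=-\|\bv_{\bar j}\|_2=-\|\bv\|_{2,\infty}$ for $p=1$, $\langle\bw^*,\bv\rangle=-c_0\sum_j\|\bv_j\|_2^{p^*}=-\big(\sum_j\|\bv_j\|_2^{p^*}\big)^{1/p^*}=-\|\bv\|_{2,p^*}$ for $1<p<\infty$, and $\langle\bw^*,\bv\rangle=-\sum_j\|\bv_j\|_2=-\|\bv\|_{2,1}$ for $p=\infty$. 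In every case this equals the lower bound from the first step, so $\bw^*$ solves \eqref{linear-optimization}. The assumption that all columns of $\bv$ are nonzero is exactly what makes the normalizers $\|\bv_j\|_2^{-1}$ and the powers $\|\bv_j\|_2^{p^*-2}$ well defined, and the tie-breaking choice of $\bar j$ in the $p=1$ case merely pins down one element of the (possibly non-singleton) solution set. The only mildly delicate point is the exponent bookkeeping for $1<p<\infty$, but it reduces entirely to $1/p+1/p^*=1$; everything else is routine verification.
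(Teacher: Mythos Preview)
Your proof is correct and follows essentially the same approach as the paper: verify feasibility $\|\bw^*\|_{2,p}\le 1$ and check that $\langle\bw^*,\bv\rangle=-\|\bv\|_{2,p^*}$, which is the optimal value by dual-norm duality. The paper's version is terser, taking the lower bound $-\|\bv\|_{2,p^*}$ for granted via the dual-norm relation rather than spelling out Cauchy--Schwarz plus H\"older as you do, but the structure and computations are the same.
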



In our first approach, we fix the dataset $S$, the parameter $\Lambda=100$ and vary the parameter $p$ over the set
$\{1.33, 1.67, 2, 2.33, 2.67, 3, 4, 5, 8, 10, 25, 50\}$. We plot the AERAs as a function of $p$
for RCV1, SECTOR and ALOI in Figure \ref{fig:aera-p} (a), (b) and (c), respectively.
We also include a plot of the function $f_{\bar{\tau}}(p)=\bar{\tau}c^{\frac{1}{2}-\frac{1}{\max(2,p)}}$
in each panel of Figure \ref{fig:aera-p}, where the corresponding parameter $\bar{\tau}$ is computed by
fitting the AERAs with linear models $\{p\to f_\tau(p):\tau\in\rbb_+\}$.
According to Figure \ref{fig:aera-p}, it is clear that AERAs and $f_{\bar{\tau}}$ match very well, implying that Corollary \ref{cor:lp-mcsvm} (b)
captures well the effect of the parameter $p$ on the error bounds.
\begin{figure}[htbp]
  \centering
  \hspace*{-0.5cm}\subfigure[RCV1.]{\includegraphics[width=0.37\textwidth,trim=7 2 6 3, clip]{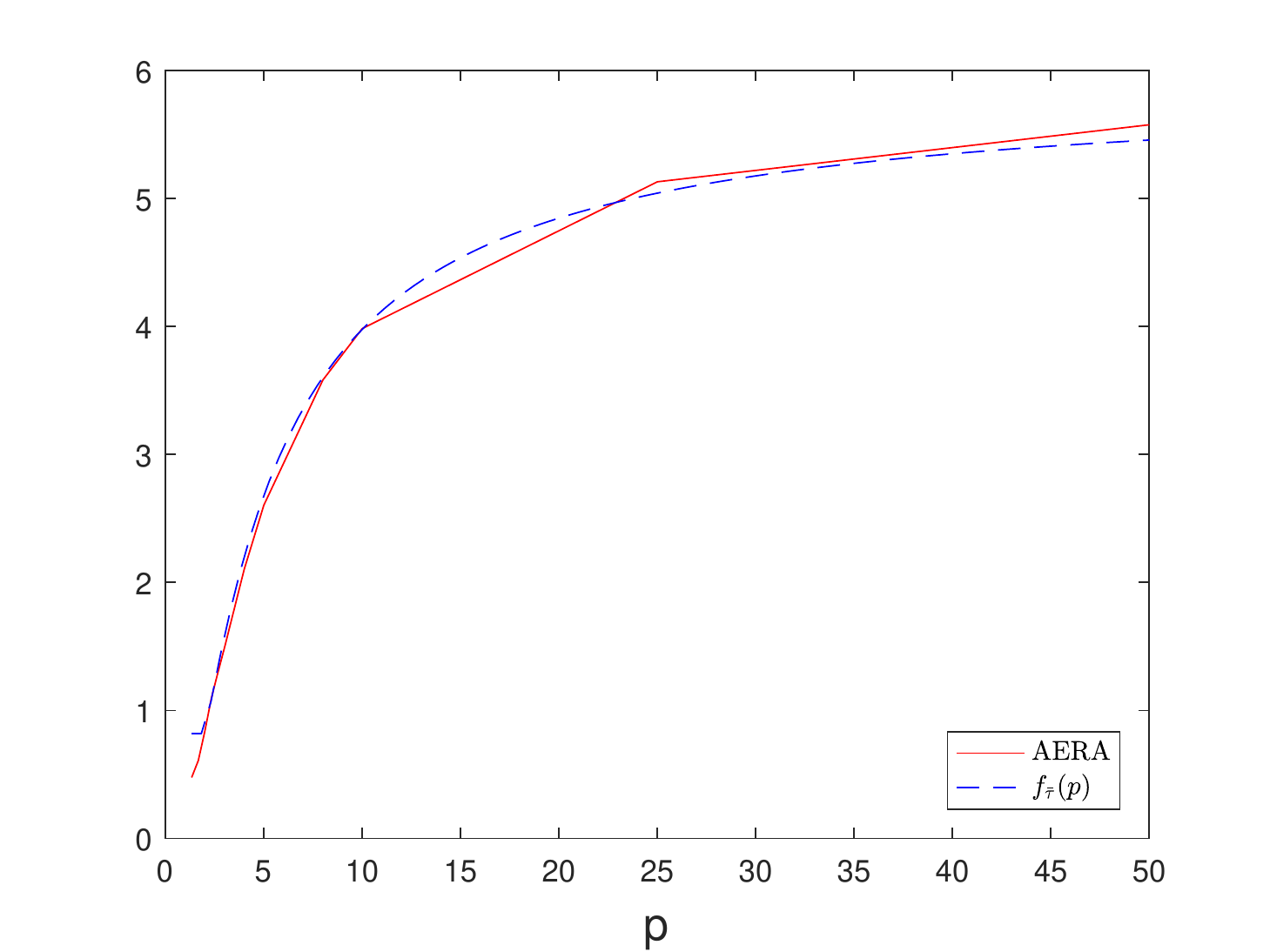}}\hspace*{-0.46cm}
  \subfigure[SECTOR.]{\includegraphics[width=0.37\textwidth,trim=7 0 6 3, clip]{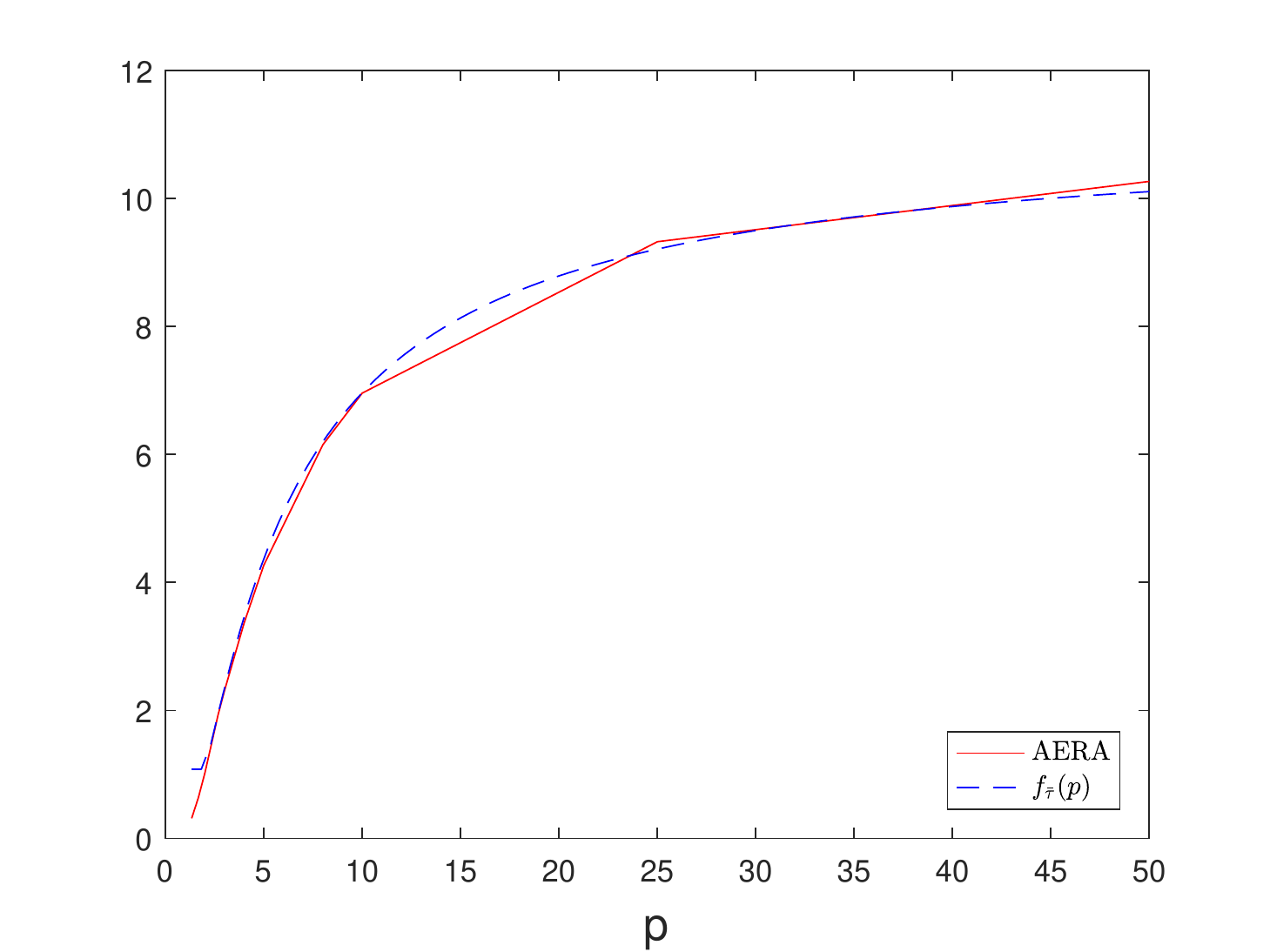}}\hspace*{-0.46cm}  
  \subfigure[ALOI.]{\includegraphics[width=0.37\textwidth,trim=7 0 6 3, clip]{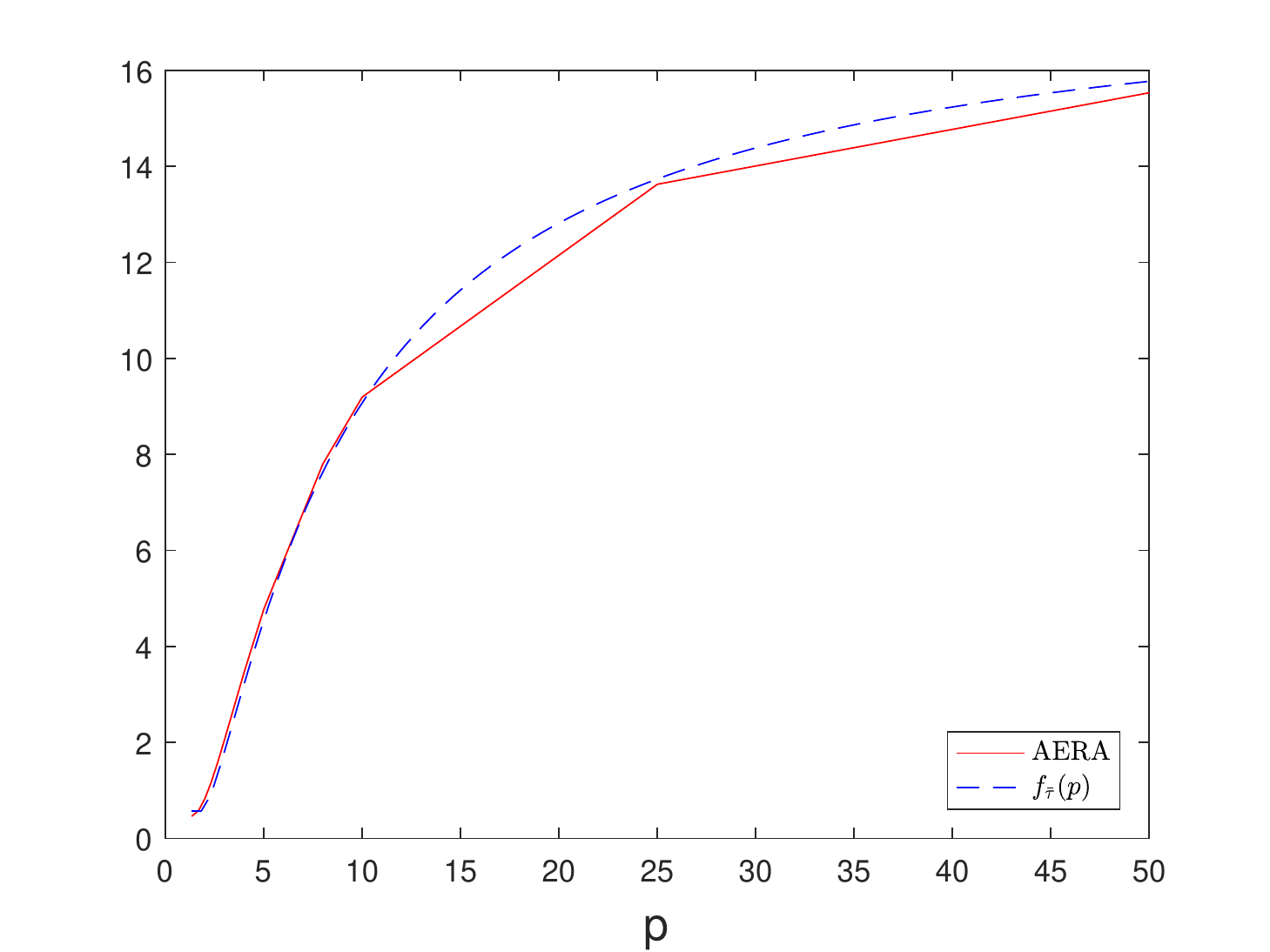}}
  \caption{AERAs as a function of $p$. We vary $p$ over $\{1.33, 1.67, 2, 2.33, 2.67, 3, 4, 5, 8, 10, 25, 50\}$ and calculate
  the AERA for each $p$. The panels (a), (b) and (c) show the results for RCV1, SECTOR and ALOI, respectively. We also include a plot of $f_{\bar{\tau}}(p)$
  in this figure, where $\bar{\tau}$ is calculated by applying the least squares method to fit these AERAs with $f_\tau(p)$.
  \label{fig:aera-p}}
\end{figure}
\begin{figure}[htbp]
  \centering
  \hspace*{-0.5cm}\subfigure[$p=2$.]{\includegraphics[width=0.37\textwidth,trim=5 2 4 3, clip]{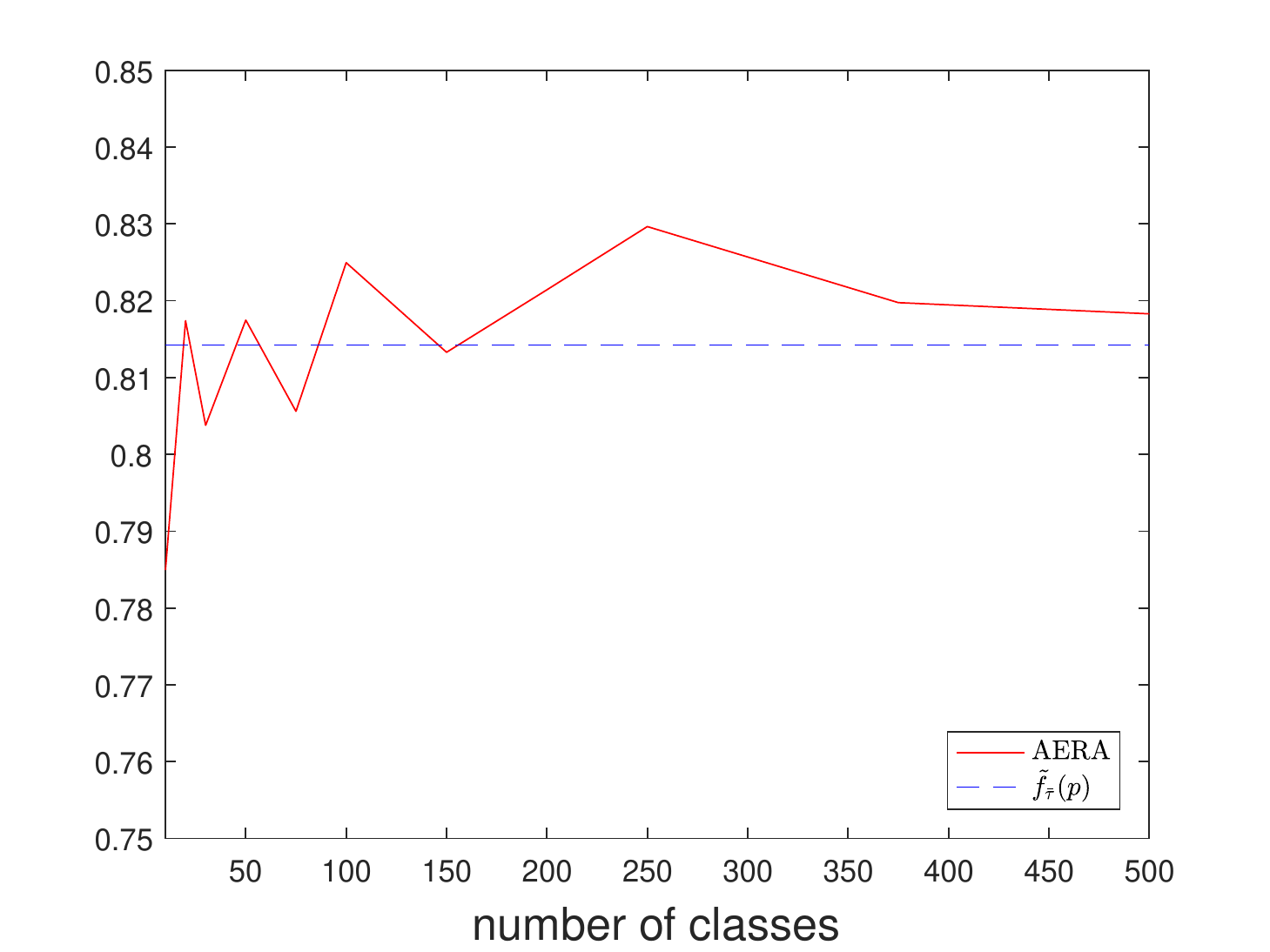}}\hspace*{-0.446cm}
  \subfigure[$p=4$.]{\includegraphics[width=0.37\textwidth,trim=7 2 4 3, clip]{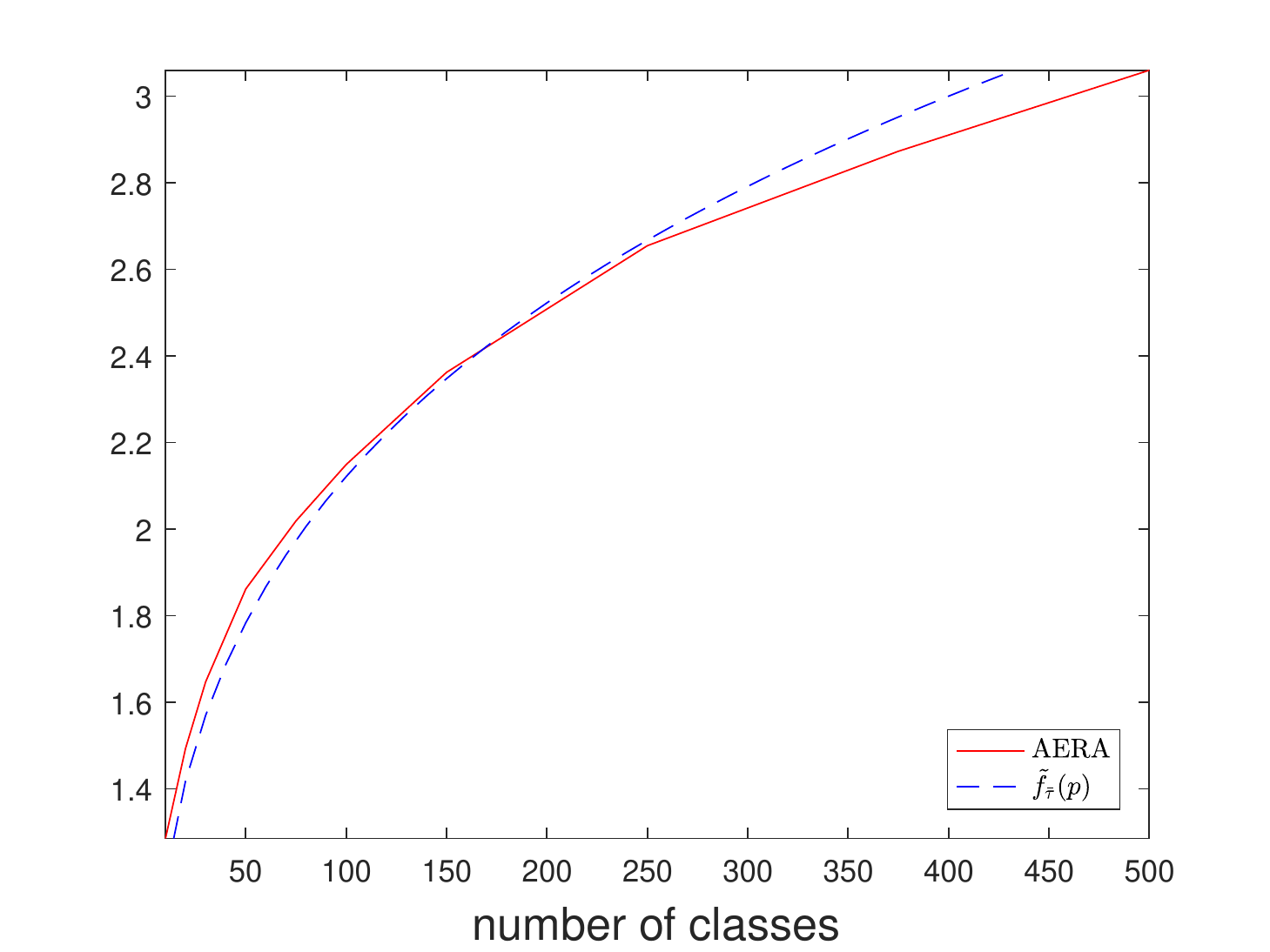}}\hspace*{-0.446cm}
  \subfigure[$p=\infty$.]{\includegraphics[width=0.37\textwidth,trim=7 2 6 3, clip]{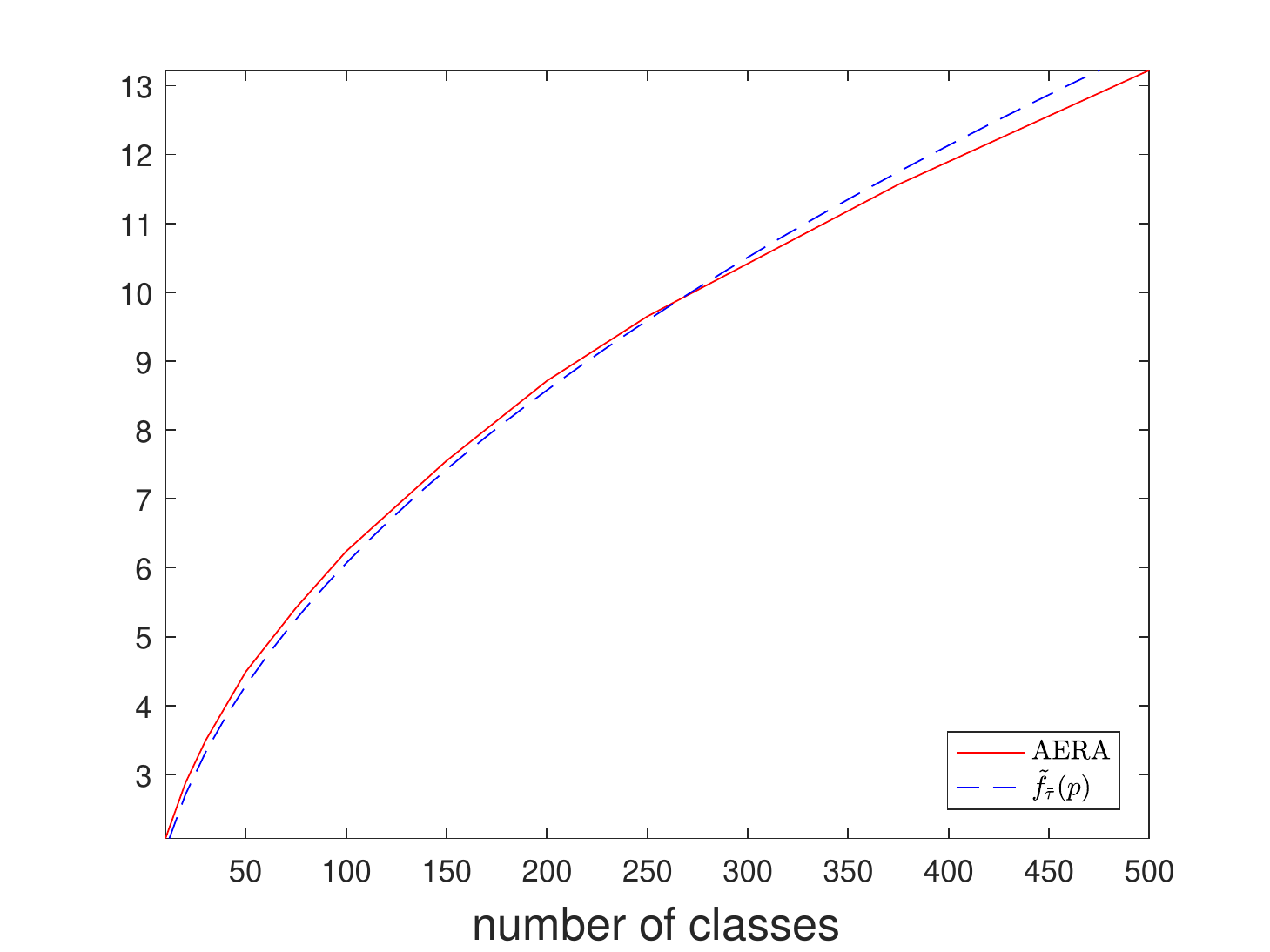}}
  \caption{AERAs as a function of the number of classes. Based on the ALOI, we construct datasets with varying
  number of classes $\tilde{c}$, for each of which we compute the associated AERA. The panels (a), (b) and (c)
  correspond to $p=2,p=4$ and $p=\infty$, respectively. We also include a plot of $\tilde{f}_{\bar{\tau}}(p)$
  in this figure, where $\bar{\tau}$ is calculated by applying the least squares method to fit these AERAs with $\tilde{f}_\tau(p)$.
  \label{fig:aera-c}}
\end{figure}

In our second approach, we fix the input training data $\{\bx_i\}_{i=1}^n$, the parameter $p$ and $\Lambda=100$,
and vary the number of classes $\tilde{c}$ over the set $\{10, 20, 30, 50, 75, 100, 150, 200, 250, 375, 500\}$.
For each considered $\tilde{c}$, we create a data set $S^{(\tilde{c})}=\{(\bx_i,y_i^{(\tilde{c})})\}_{i=1}^n$
with $y_i^{(\tilde{c})}= y_i \mod c+1$. We perform experiments on the ALOI.
We plot the AERAs as a function of $\tilde{c}$ in Figure \ref{fig:aera-c} for $p=2,4,\infty$, respectively.
In each of these panels, we include a plot of the function $\tilde{f}_{\bar{\tau}}(\tilde{c})=\bar{\tau}\tilde{c}^{\frac{1}{2}-\frac{1}{\max(2,p)}}$,
where the corresponding parameter $\bar{\tau}$ is computed by fitting the AERAs with linear models $\{\tilde{c}\to \tilde{f}_\tau(\tilde{c}):\tau\in\rbb_+\}$.
We see from Figure \ref{fig:aera-c} (b), \ref{fig:aera-c} (c) that AERAs and $\tilde{f}_{\bar{\tau}}$ match well,
implying that Corollary \ref{cor:lp-mcsvm} captures well the effect of the number of classes on the error bounds.
For the specific case $p=2$, Figure \ref{fig:aera-c} (a) shows that AERAs fluctuate within the small region $(0.785,0.83)$,
justifying the class-size independency of AERAs in this case.


\subsection{ Behavior of $\ell_p$-norm MC-SVMs and model selection\label{sec:exp-ms}}

Motivated by our GC-based bound given in Corollary \ref{cor:lp-mcsvm} (a), we propose an $\ell_p$-norm MC-SVM
in \citep{lei2015multi} for $p\in(1,2]$. We solve the involved optimization problem by introducing class
weights and alternating the update w.r.t. class weights and the update w.r.t. the model $\bw$. In this
paper, we propose to solve this optimization problem by the Frank-Wolfe algorithm (Algorithm \ref{alg:frank-wolfe}),
which avoids the introduction of additional class weights and extends the algorithm in \citep{lei2015multi}
to the case $p\geq2$.
The closed-form solution established in Proposition \ref{prop:FW} makes the implementation of this algorithm
very simple and efficient.
We traverse $p$ over the set $\{1.33, 1.67, 2, 2.5, 3, 4, 8, \infty\}$ and $\Lambda$
over the set $\{10^{0.5}, 10, 10^{1.5},\ldots,10^{3.5}\}$. For each pair $(p,\Lambda)$, we compute the following $\bw_{p,\Lambda}$
by Algorithm \ref{alg:frank-wolfe}
$$
  \bw_{p,\Lambda}:=\arg\min_{\bw\in\rbb^{d\times c}: \|\bw\|_{2,p}\leq\Lambda}
  \frac{1}{n}\sum_{i=1}^{n}\Psi_{y_i}^m\big(\langle\bw_1,\bx_i\rangle,\ldots,\langle\bw_c,\bx_i\rangle\big)
$$
and compute the accuracy (the percent of instances being labeled correctly)
on the test examples.

We now describe how to apply our error bounds in identifying an appropriate model from the candidate models constructed
above. Since $\bw_{p,\Lambda}\in H_{\tilde{p},\|w_{p,\Lambda}\|_{2,\tilde{p}}}$ for any $\tilde{p}\geq1$,
one can derive from Proposition \ref{cor:lp-mcsvm} the following inequality with probability $1-\delta$
(here we omit the randomness of $\|\bw_{p,\Lambda}\|_{2,\tilde{p}}$ for brevity)
\begin{multline*}
  \ebb_{\bx,y}\Psi_y(h^{\bw_{p,\Lambda}}(\bx))-3B_\Psi\Big[\frac{\log\frac{4}{\delta}}{2n}\Big]^{\frac{1}{2}}\leq
  \frac{1}{n}\sum_{i=1}^n\Psi_{y_i}(h^{\bw_{p,\Lambda}}(\bx_i))+\\
  \frac{54\|\bw_{p,\Lambda}\|_{2,\tilde{p}}\max\limits_{i\in\nbb_n}\|\bx_i\|_2c^{\frac{1}{2}-\frac{1}{\max(2,\tilde{p})}}\big(1+\log_2^{3\over2}\big(\sqrt{2}n^{3\over2}c\big)\big)}{\sqrt{n}}.
\end{multline*}
According to the inequality $\|\bw\|_{2,2}\leq\|\bw\|_{2,\tilde{p}}c^{\frac{1}{2}-\frac{1}{\tilde{p}}}$ for any $\tilde{p}\geq2$,
the term $\|\bw\|_{2,\tilde{p}}c^{\frac{1}{2}-\frac{1}{\max(2,\tilde{p})}}$ attains its minimum at $\tilde{p}=2$.
Hence, we construct the following structural risk (ignoring logarithmic factors here)
\begin{equation}\label{struct-risk}
  \text{Err}_{\text{str},\lambda}(\bw):=\frac{1}{n}\sum_{i=1}^n\Psi_{y_i}(h^{\bw}(\bx_i))+\frac{\lambda\|\bw\|_{2,2}\max_{i\in\nbb_n}\|\bx_i\|_2}{\sqrt{n}}
\end{equation}
and use it to select a model with the minimal structural risk among all candidates $\bw_{p,\Lambda}$.
We use $\lambda=0.5$ in this paper.

In Table \ref{tab:model-selection}, we report the best accuracy achieved by $\ell_2$-norm MC-SVM over all considered $\Lambda$
(the column termed $p=2$),
the best accuracy achieved by $\ell_p$-norm MC-SVMs over all considered $p$ and $\Lambda$ (the column termed Oracle), and the accuracy for
the model with the minimal structural risk (the column termed model selection). The parameter $p$ for both the model with the best accuracy (Oracle)
and the model selected by model selection are also indicated.
For comparison, we also include in the column ``Weston \& Watkins'' the best accuracy achieved by the MC-SVM in Corollary \ref{cor:ww} with $\ell(t)=\log(1+\exp(-t))$.
Experimental results show that the structural risk \eqref{struct-risk} works well in guiding
the selection of a model with comparable prediction accuracy to the best candidate model. Furthermore, $\ell_p$-norm MC-SVMs
can significantly outperform the specific $\ell_2$-norm MC-SVM on some datasets.
For example, the $\ell_\infty$-norm MC-SVM attains $3.5\%$ accuracy gain over the $\ell_2$-norm MC-SVM on ALOI.

Although Corollary \ref{cor:lp-mcsvm} (b) on estimation error bounds suggests that one should use $p\leq2$ when training
$\ell_p$-norm MC-SVMs, Table \ref{tab:model-selection} shows that $\ell_p$-norm MC-SVMs achieve the best accuracy at
different $p$ for different problems. The underlying reason is that the approximation power of $H_p$ increases with $p$
due to the relationship $H_p\subset H_{\tilde{p}}$ if $p\leq\tilde{p}$. A good model should balance
the approximation and estimation errors by choosing $p$ suitable to the associated problem.

Although the error bounds for models in Corollaries \ref{cor:Cramer}-\ref{cor:jenssen} may enjoy different dependencies
on the number of classes, this not necessarily implies that a model would outperform the other in terms of prediction accuracies for all problems.
The underlying reason is that these error bounds are stated for different loss functions and are therefore not comparable.
Let us consider the multinomial logistic regression in Corollary \ref{cor:logistic} and the Weston \& Watkins MC-SVM in Corollary \ref{cor:ww} for example.
Although the dependencies on the number of classes greatly differ in Corollary \ref{cor:logistic} (b) (logarithmic) and Corollary \ref{cor:ww} (a) (linear),
the column ``$p=2$'' and the column ``Weston \& Watkins'' in Table \ref{tab:model-selection} show that their difference in the prediction
accuracy is not that large.

\begin{table}[hbtp]

\caption{Performance of $\ell_p$-norm MC-SVMs on several benchmark datasets.
  \normalfont We train $\ell_p$-norm MC-SVMs by traversing $p$ over $\{1.33, 1.67, 2, 2.5, 3, 4, 8, \infty\}$ and $\Lambda$
  over $\{10^{0.5}, 10,\ldots,10^{3.5}\}$ to get candidate models.
  The column $p=2$ shows the accuracy achieved by the $\ell_2$-norm MC-SVM.
  The column ``Oracle'' corresponds to the model with the best accuracy
  over all candidate models, where we also indicate the associated $p$ and accuracy.
  The column ``Model Selection'' corresponds to the model with the smallest
  structural risk over all candidate models, where we also indicate the
  associated $p$ and accuracy. For comparison, we also include in the column ``Weston \& Watkins'' the best
  accuracy achieved by the MC-SVM in Corollary \ref{cor:ww} with $\ell(t)=\log(1+\exp(-t))$,
  where we traverse $\Lambda$ over $\{10^{0.5}, 10,\ldots,10^{3.5}\}$.
  \label{tab:model-selection}}
  \setlength{\tabcolsep}{6pt}
  \centering\def\arraystretch{1.2}
  \begin{tabular}{|c|c|c|c|c|c|c|}
    \hline
     \multirow{2}{*}{Dataset}& \multirow{2}{*}{$p=2$} & \multicolumn{2}{c|}{Oracle} & \multicolumn{2}{c|}{Model Selection} & \multirow{2}{*}{Weston \& Watkins}\\\cline{3-6}
     & & $p$ & Accuracy & $p$ & Accuracy&\\ \hline
     MNIST & $91.05$ & $2.5$ & $91.44$ & $\infty$ & $91.40$ & $91.00$\\ \hline
     NEWS20 & $84.07$ & $4$ & $84.45$ & $4$ & $84.27$ & $84.10$\\ \hline
     LETTER & $72.22$ & $\infty$ & $74.36$ & $8$ & $74.04$ & $69.28$\\ \hline
     RCV1 & $88.67$ & $1.67$ & $88.74$ & $3$ & $88.08$ & $88.67$\\ \hline
     SECTOR & $93.08$ & $3$ & $93.26$ & $3$ & $92.14$ & $92.83$\\ \hline
     ALOI & $84.12$ & $\infty$ & $87.70$ & $4$ & $87.18$ & 78.56\\ \hline
  \end{tabular}
\end{table}

\section{Proofs}

In this section, we present the proofs of the results presented in the previous sections.

\subsection{Proof of Bounds by Gaussian Complexities\label{sec:proof-dependent}}
In this subsection, we present the proofs for data-dependent bounds in subsection \ref{sec:data-dependent-bound}.
The proof of Lemma \ref{lem:GP-structural-lipschitz} requires to use a comparison result (Lemma \ref{lem:gaussian-comparison}) on Gaussian processes attributed to \citet{slepian1962one}, while the proof of Theorem \ref{thm:risk-data-dependent} is based on a concentration inequality in \citep{mcdiarmid1989method}.
\begin{lemma}\label{lem:gaussian-comparison}
  Let $\{\mathfrak{X}_\theta:\theta\in \Theta\}$ and $\{\mathfrak{Y}_\theta:\theta\in\Theta\}$ be two mean-zero separable Gaussian processes indexed by the same set $\Theta$ and suppose that
  \begin{equation}\label{increment-condition}
    \ebb[(\mathfrak{X}_\theta-\mathfrak{X}_{\bar{\theta}})^2]\leq \ebb[(\mathfrak{Y}_\theta-\mathfrak{Y}_{\bar{\theta}})^2],\quad\forall \theta,\bar{\theta}\in\Theta.
  \end{equation}
  Then $\ebb[\sup_{\theta\in\Theta} \mathfrak{X}_\theta]\leq\ebb[\sup_{\theta\in\Theta} \mathfrak{Y}_\theta]$.
\end{lemma}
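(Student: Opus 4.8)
\emph{Proof proposal.}
The plan is to prove this by the classical Gaussian interpolation technique, in the form suited to comparing \emph{expected suprema} under an increment (rather than a covariance) hypothesis, applied to a smooth surrogate for the maximum.

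First I would reduce to a \emph{finite} index set. By separability there is a countable set $\Theta_0=\{\theta_1,\theta_2,\ldots\}\subseteq\Theta$ along which both suprema are attained almost surely, so it suffices to establish the inequality for the finite truncations $\Theta_N=\{\theta_1,\ldots,\theta_N\}$ and then let $N\to\infty$; the passage to the limit is by monotone convergence applied to the nonnegative increasing sequences $\sup_{\theta\in\Theta_N}\mathfrak{X}_\theta-\mathfrak{X}_{\theta_1}$ and $\sup_{\theta\in\Theta_N}\mathfrak{Y}_\theta-\mathfrak{Y}_{\theta_1}$ (here $\mathfrak{X}_{\theta_1},\mathfrak{Y}_{\theta_1}$ are integrable Gaussians). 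Moreover, replacing the sample space by a product space if necessary, I may assume the processes $\{\mathfrak{X}_\theta\}$ and $\{\mathfrak{Y}_\theta\}$ are independent, which changes neither marginal law nor either side of the desired inequality.

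For a finite index set $\{1,\ldots,N\}$, write $\mathbf{X}=(\mathfrak{X}_1,\ldots,\mathfrak{X}_N)$, $\mathbf{Y}=(\mathfrak{Y}_1,\ldots,\mathfrak{Y}_N)$, and for $\beta>0$ introduce the soft-max $F_\beta(\mathbf{x})=\beta^{-1}\log\sum_{i=1}^Ne^{\beta x_i}$, which satisfies $\max_ix_i\le F_\beta(\mathbf{x})\le\max_ix_i+\beta^{-1}\log N$ and has $\partial_iF_\beta=p_i$, $\partial^2_{ij}F_\beta=\beta\,p_i(\delta_{ij}-p_j)$ with $p_i=e^{\beta x_i}/\sum_ke^{\beta x_k}$; in particular $\sum_j\partial^2_{ij}F_\beta\equiv0$ for every $i$, and $\partial^2_{ij}F_\beta\le0$ whenever $i\ne j$. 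Now interpolate: let $\mathbf{Z}(t)=\sqrt{t}\,\mathbf{X}+\sqrt{1-t}\,\mathbf{Y}$ and $\psi(t)=\ebb[F_\beta(\mathbf{Z}(t))]$ for $t\in[0,1]$. Differentiating under the expectation and applying Gaussian integration by parts (Stein's identity) to $\mathbf{X}$ and to $\mathbf{Y}$ separately, then using $\sum_j\partial^2_{ij}F_\beta=0$ to discard the diagonal covariance contributions, one reaches
\[
  \psi'(t)=-\frac14\sum_{i\ne j}\Big(\ebb\big[(\mathfrak{X}_i-\mathfrak{X}_j)^2\big]-\ebb\big[(\mathfrak{Y}_i-\mathfrak{Y}_j)^2\big]\Big)\,\ebb\big[\partial^2_{ij}F_\beta(\mathbf{Z}(t))\big].
\]
By hypothesis \eqref{increment-condition} the first factor in each summand is $\le0$, and the second factor is $\le0$ because $i\ne j$; hence $\psi'(t)\le0$ on $[0,1]$, so $\ebb[F_\beta(\mathbf{X})]=\psi(1)\le\psi(0)=\ebb[F_\beta(\mathbf{Y})]$.

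Finally, letting $\beta\to\infty$ and using the sandwich bound on $F_\beta$ together with dominated convergence (both $\max_i\mathfrak{X}_i$ and $\max_i\mathfrak{Y}_i$ are integrable, being maxima of finitely many centered Gaussians) gives $\ebb[\max_i\mathfrak{X}_i]\le\ebb[\max_i\mathfrak{Y}_i]$, which is the finite-index case; combined with the reduction above this proves the lemma. I expect the main obstacle to be the interpolation step itself: one must justify interchanging $d/dt$ with $\ebb$, verify the Gaussian integration-by-parts identity for the bounded-gradient test function $F_\beta$, and—most delicately—carry through the algebra by which the raw covariance difference $\ebb[\mathfrak{X}_i\mathfrak{X}_j]-\ebb[\mathfrak{Y}_i\mathfrak{Y}_j]$ is converted, via the cancellation of the variance terms against $\sum_j\partial^2_{ij}F_\beta=0$, into the increment difference appearing in \eqref{increment-condition}.
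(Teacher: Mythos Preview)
Your proof is correct and follows the standard Gaussian interpolation route to Slepian's inequality (soft-max surrogate, interpolate between the two processes, Stein's integration by parts, then exploit $\sum_j\partial^2_{ij}F_\beta=0$ to trade covariances for increments). The computation you outline checks out: differentiating $\psi(t)=\ebb[F_\beta(\sqrt{t}\,\mathbf{X}+\sqrt{1-t}\,\mathbf{Y})]$ gives $\psi'(t)=\tfrac12\sum_{i,j}(\Sigma^X_{ij}-\Sigma^Y_{ij})\ebb[\partial^2_{ij}F_\beta]$, and rewriting $\Sigma^X_{ij}-\Sigma^Y_{ij}$ in terms of increments plus variance terms that cancel against the row-sum identity yields exactly your displayed expression, whose sign is as you claim.

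However, note that the paper does \emph{not} prove this lemma at all: it is stated without proof and attributed to \citet{slepian1962one} as a classical comparison inequality, used only as an input to the proof of Lemma~\ref{lem:GP-structural-lipschitz}. So there is no ``paper's own proof'' to compare against; you have supplied a self-contained argument where the authors simply invoked the literature. Your interpolation proof is the textbook one (essentially the Sudakov--Fernique argument specialized to the increment hypothesis), and it is perfectly adequate here.
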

\begin{lemma}[McDiarmid inequality~\citep{mcdiarmid1989method}]\label{lem:mcdiarmid}
  Let $Z_1,\ldots,Z_n$ be independent random variables taking values in a set $\zcal$, and assume that $f:\zcal^n\to\rbb$ satisfies
  \begin{equation}\label{bounded-variation-assumption}
    \sup_{\bz_1,\ldots,\bz_n,\bar{\bz}_i\in\zcal}|f(\bz_1,\cdots,\bz_n)-
    f(\bz_1,\cdots,\bz_{i-1},\bar{\bz}_i,\bz_{i+1},\cdots,\bz_n)|\leq c_i
  \end{equation}
  for $1\leq i\leq n$. Then, for any $0<\delta<1$, with probability at least $1-\delta$, we have
  $$f(Z_1,\ldots,Z_n)\leq\ebb f(Z_1,\ldots,Z_n)+\sqrt{\frac{\sum_{i=1}^nc_i^2\log(1/\delta)}{2}}.$$
\end{lemma}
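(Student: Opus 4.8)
The plan is to prove this by the classical bounded-differences (Azuma--Hoeffding) martingale argument. First I would set up the Doob martingale of $f$: writing $\mathcal{F}_i=\sigma(Z_1,\ldots,Z_i)$ (with $\mathcal{F}_0$ trivial) and $V_i=\ebb[f(Z_1,\ldots,Z_n)\mid\mathcal{F}_i]-\ebb[f(Z_1,\ldots,Z_n)\mid\mathcal{F}_{i-1}]$ for $i\in\nbb_n$, one has the telescoping identity $f(Z_1,\ldots,Z_n)-\ebb f(Z_1,\ldots,Z_n)=\sum_{i=1}^n V_i$, and $(V_i,\mathcal{F}_i)$ is a martingale difference sequence, so in particular $\ebb[V_i\mid\mathcal{F}_{i-1}]=0$.

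The key step is to bound the increments. Introducing $g_i(z_1,\ldots,z_i):=\ebb[f(Z_1,\ldots,Z_n)\mid Z_1=z_1,\ldots,Z_i=z_i]$ and using the independence of $Z_1,\ldots,Z_n$ to integrate out only $Z_{i+1},\ldots,Z_n$, one gets for any two values $z,z'$ of the $i$-th coordinate that $|g_i(z_1,\ldots,z_{i-1},z)-g_i(z_1,\ldots,z_{i-1},z')|\le c_i$, because inside the expectation each realization of $f$ differs only in its $i$-th argument, which is controlled by hypothesis \eqref{bounded-variation-assumption}. Consequently, conditionally on $\mathcal{F}_{i-1}$ the variable $V_i=g_i(Z_1,\ldots,Z_i)-g_{i-1}(Z_1,\ldots,Z_{i-1})$ lies in an interval $[A_i,A_i+c_i]$ whose endpoints are $\mathcal{F}_{i-1}$-measurable.

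Given this, I would apply Hoeffding's lemma conditionally: for a mean-zero random variable supported in an interval of length $c_i$, the conditional moment generating function satisfies $\ebb[e^{\lambda V_i}\mid\mathcal{F}_{i-1}]\le\exp(\lambda^2 c_i^2/8)$ for all $\lambda>0$. Iterating this via the tower property, conditioning successively on $\mathcal{F}_{n-1},\ldots,\mathcal{F}_1$, yields $\ebb\,e^{\lambda(f(Z_1,\ldots,Z_n)-\ebb f)}\le\exp\big(\lambda^2\sum_{i=1}^n c_i^2/8\big)$. A Chernoff step then finishes the argument: by Markov's inequality applied to $e^{\lambda(f-\ebb f)}$ and optimizing over $\lambda$ (the optimum being $\lambda=4t/\sum_i c_i^2$) we get $\mathbb{P}\big(f(Z_1,\ldots,Z_n)-\ebb f(Z_1,\ldots,Z_n)\ge t\big)\le\exp\big(-2t^2/\sum_{i=1}^n c_i^2\big)$; setting the right-hand side equal to $\delta$ and solving for $t$ gives $t=\sqrt{\sum_{i=1}^n c_i^2\log(1/\delta)/2}$, which is precisely the asserted bound.

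The main obstacle, and essentially the only non-routine point, is the second paragraph: combining the coordinate-wise bounded-differences assumption \eqref{bounded-variation-assumption} with independence to show the martingale increments sit in a conditionally deterministic interval of width $c_i$. Once that is established, the remainder is the textbook Azuma--Hoeffding/Chernoff computation, and the only care needed is to invoke Hoeffding's lemma in its sharp form (the $c_i^2/8$ constant), which is what produces the factor $2$ in the exponent and hence the constant $1/2$ inside the square root of the final bound.
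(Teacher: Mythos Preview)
Your proof is correct and is precisely the classical martingale argument due to McDiarmid. Note, however, that the paper does not actually prove this lemma: it is stated with a citation to \citep{mcdiarmid1989method} and used as a black-box tool in the proof of Theorem~\ref{thm:risk-data-dependent}. So there is no ``paper's own proof'' to compare against; your write-up supplies exactly the standard proof that the cited reference contains.
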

\begin{proof}[Proof of Lemma \ref{lem:GP-structural-lipschitz}]
Define two mean-zero separable Gaussian processes indexed by the finite dimensional Euclidean space $\{(h(\bx_1),\ldots,h(\bx_n)):h\in H\}$ 
\begin{gather*}
  \mathfrak{X}_h:=\sum_{i=1}^ng_if_i(h(\bx_i)),\quad
  \mathfrak{Y}_h:=\sqrt{2}L_1\sum_{i=1}^n\sum_{j=1}^cg_{ij}h_j(\bx_i)+\sqrt{2}L_2\sum_{i=1}^{n}g_ih_{r(i)}(\bx_i).
\end{gather*}
For any $h,h'\in H$, the independence among $g_i,g_{ij}$ and $\ebb g_i^2=1, \ebb g_{ij}^2=1,\forall i\in\nbb_n,j\in\nbb_c$ imply that
\begin{align*}
  \ebb[(\mathfrak{X}_h-\mathfrak{X}_{h'})^2]&=\ebb\Big[\Big(\sum_{i=1}^ng_i\big(f_i(h(\bx_i))-f_i(h'(\bx_i))\big)\Big)^2\Big]= \sum_{i=1}^n\big[f_i(h(\bx_i))-f_i(h'(\bx_i))\big]^2\\
  &\leq\sum_{i=1}^{n}\Big[L_1\big[\sum_{j=1}^{c}|h_j(\bx_i)-h'_j(\bx_i)|^2\big]^{\frac{1}{2}}+L_2|h_{r(i)}(\bx_i)-h'_{r(i)}(\bx_i)|\Big]^2\\
  &\leq 2L_1^2\sum_{i=1}^n\sum_{j=1}^c|h_j(\bx_i)-h'_j(\bx_i)|^2+2L_2^2\sum_{i=1}^{n}|h_{r(i)}(\bx_i)-h'_{r(i)}(\bx_i)|^2\\
  &=\ebb[(\mathfrak{Y}_h-\mathfrak{Y}_{h'})^2],
\end{align*}
where we have used the Lipschitz continuity of $f_i$ w.r.t. a variant of the $\ell_2$-norm in the first inequality, and the elementary inequality $(a+b)^2\leq 2(a^2+b^2)$ in the second inequality.
Therefore, the condition \eqref{increment-condition} holds and  Lemma \ref{lem:gaussian-comparison} can be applied here to give
\begin{align*}
  \ebb_{\bm{g}}\sup_{h\in H}\sum_{i=1}^{n}g_if_i(h(\bx_i))&
  \leq \ebb_{\bm{g}}\sup_{h\in H}\Big[\sqrt{2}L_1\sum_{i=1}^n\sum_{j=1}^cg_{ij}h_j(\bx_i)+\sqrt{2}L_2\sum_{i=1}^{n}g_ih_{r(i)}(\bx_i)\Big] \\
   & \leq \sqrt{2}L_1\ebb_{\bm{g}}\sup_{h\in H}\sum_{i=1}^n\sum_{j=1}^cg_{ij}h_j(\bx_i)+\sqrt{2}L_2\ebb_{\bm{g}}\sup_{h\in H}\sum_{i=1}^{n}g_ih_{r(i)}(\bx_i).
\end{align*}
The proof of Lemma \ref{lem:GP-structural-lipschitz} is complete.
\end{proof}

\begin{proof}[Proof of Theorem \ref{thm:risk-data-dependent}]
  It can be checked that
  $f(\bz_1,\ldots,\bz_n)=\sup_{h^{\bw}\in H_\tau}\Big[\ebb_{\bz}\Psi_y(h^{\bw}(\bx))-\frac{1}{n}\sum_{i=1}^n\Psi_{y_i}(h^{\bw}(\bx_i))\Big]$
  satisfies the increment condition \eqref{bounded-variation-assumption} with $c_i=B_\Psi/n$.
  An application of McDiarmid's inequality (Lemma \ref{lem:mcdiarmid}) then shows the following inequality with probability $1-\delta/2$
  \begin{multline*}
    \sup_{h^{\bw}\in H_\tau}\Big[\ebb_{\bz}\Psi_y(h^{\bw}(\bx))-\frac{1}{n}\sum_{i=1}^n\Psi_{y_i}(h^{\bw}(\bx_i))\Big]\leq \\
    \ebb_{\bz}\sup_{h^{\bw}\in H_\tau}\Big[\ebb_{\bz}\Psi_y(h^{\bw}(\bx))-\frac{1}{n}\sum_{i=1}^n\Psi_{y_i}(h^{\bw}(\bx_i))\Big]+B_\Psi\sqrt{\frac{\log\frac{2}{\delta}}{2n}}.
  \end{multline*}
  It follows from the standard symmetrization technique (see, e.g., proof of Theorem 3.1 in \citep{mohri2012foundations}) that
  $$
    \ebb_{\bz}\sup_{h^{\bw}\in H_\tau}\Big[\ebb_{\bx,y}\Psi_y(h^{\bw}(\bx))-\frac{1}{n}\sum_{i=1}^n\Psi_{y_i}(h^{\bw}(\bx_i))\Big]
    \leq 2\ebb_{\bz}\ebb_{\bm{\epsilon}}\sup_{h^{\bw}\in H_\tau}\Big[\frac{1}{n}\sum_{i=1}^n\epsilon_i\Psi_{y_i}(h^{\bw}(\bx_i))\Big].
  $$
  It can also be checked that the function $f(\bz_1,\ldots,\bz_n)=\ebb_{\bm{\epsilon}}\sup_{h^{\bw}\in H_\tau}\big[\frac{1}{n}\sum_{i=1}^n\epsilon_i\Psi_{y_i}(h^{\bw}(\bx_i))\big]$
  satisfies the increment condition \eqref{bounded-variation-assumption} with $c_i=B_\Psi/n$. Another application of McDiarmid's inequality shows
  the inequality $\ebb_{\bz}\mathfrak{R}_S(F_{\tau,\Lambda})\leq \mathfrak{R}_S(F_{\tau,\Lambda})+B_\Psi\sqrt{\frac{\log\frac{2}{\delta}}{2n}}$ with probability $1-\delta/2$, which
   together with
  the above two inequalities then imply the following inequality with probability at least $1-\delta$
  \begin{equation}\label{risk-data-dependent-1}
    \sup_{h^{\bw}\in H_\tau}\Big[\ebb_{\bz}\Psi_y(h^{\bw}(\bx))-\frac{1}{n}\sum_{i=1}^n\Psi_{y_i}(h^{\bw}(\bx_i))\Big]\leq
    2\mathfrak{R}_S(F_{\tau,\Lambda})+3B_\Psi\sqrt{\frac{\log\frac{2}{\delta}}{2n}}.
  \end{equation}
   Furthermore, according to the following relationship between Gaussian and Rademacher processes for any function class $\tilde{H}$~\citep{bartlett2002rademacher} ($|S|$ is the cardinality of $S$)
  $$
    \mathfrak{R}_S(\tilde{H})\leq\sqrt{\frac{\pi}{2}}\mathfrak{G}_S(\tilde{H})\leq 3\sqrt{\frac{\pi\log |S|}{2}}\mathfrak{R}_S(\tilde{H}),
  $$
  we derive
  \begin{align*}
   \mathfrak{R}_S\big(\big\{\Psi_y(h^{\bw}(\bx)):&h^{\bw}\in H_\tau\big\}\big)
   \leq \sqrt{\frac{\pi}{2}}\mathfrak{G}_S\big(\big\{\Psi_y(h^{\bw}(\bx)):h^{\bw}\in H_\tau\big\}\big)\\
   &=\sqrt{\frac{\pi}{2}}\frac{1}{n}\ebb_{\bm{g}}\sup_{h^{\bw}\in H_\tau}\sum_{i=1}^{n}g_i\Psi_{y_i}(h^{\bw}(\bx_i))\\
   &\leq \frac{L_1\sqrt{\pi}}{n}\ebb_{\bm{g}}\sup_{h^{\bw}\in H_\tau}\sum_{i=1}^n\sum_{j=1}^cg_{ij}h_j^{\bw}(\bx_i)+\frac{L_2\sqrt{\pi}}{n}\ebb_{\bm{g}}\sup_{h^{\bw}\in H_\tau}\sum_{i=1}^ng_ih_{y_i}^{\bw}(\bx_i),
  \end{align*}
  where the last step follows from Lemma \ref{lem:GP-structural-lipschitz} with $f_i=\Psi_{y_i}$ and $r(i)=y_i,\forall i\in\nbb_n$.
  Plugging the above RC bound into \eqref{risk-data-dependent-1} gives the following inequality with probability at least $1-\delta$
  \begin{align}
    A_\tau & \leq  \frac{2L_1\sqrt{\pi}}{n} \ebb_{\bm{g}}\sup_{h^{\bw}\in H_{\tau}}\sum_{i=1}^{n}\sum_{j=1}^{c}g_{ij}\inn{\bw_j,\phi(\bx_i)} +\frac{2L_2\sqrt{\pi}}{n}\ebb_{\bm{g}}\sup_{h^{\bw}\in H_{\tau}}\sum_{i=1}^{n}g_{i}\inn{\bw_{y_i},\phi(\bx_i)}.\label{risk-data-dependent-2}
  \end{align}
   It remains to estimate the two terms on the right-hand side of \eqref{risk-data-dependent-2}.
  By \eqref{S-tilde-identity-inner-product}, the definition of $\widetilde{H}_\tau,\widetilde{S}$ and $\widetilde{S}'$, we know
  \begin{equation}\label{risk-data-dependent-3}
    \ebb_{\bm{g}}\sup_{h^{\bw}\in H_{\tau}}\sum_{i=1}^{n}\sum_{j=1}^{c}g_{ij}\inn{\bw_j,\phi(\bx_i)}
    = \ebb_{\bm{g}}\sup_{\bw:\tau(\bw)\leq\Lambda}\sum_{i=1}^{n}\sum_{j=1}^{c}g_{ij} \langle \bw,\bp_j(\bx_i)\rangle= nc\mathfrak{G}_{\widetilde{S}}(\widetilde{H}_\tau)
  \end{equation}
  and
  $$
    \ebb_{\bm{g}}\sup_{h^{\bw}\in H_{\tau}}\sum_{i=1}^{n}g_{i}\inn{\bw_{y_i},\phi(\bx_i)}
    = \ebb_{\bm{g}}\sup_{\bw:\tau(\bw)\leq\Lambda}\sum_{i=1}^{n}g_{i}\langle \bw, \bp_{y_i}(\bx_i)\rangle=n\mathfrak{G}_{\widetilde{S}'}(\widetilde{H}_\tau).
  $$
  Plugging the above two identities back into \eqref{risk-data-dependent-2} gives \eqref{risk-data-dependent-gaussian}.

  We now show \eqref{risk-data-dependent}.
  According to the definition of dual norm, we derive
  \begin{align}
    \ebb_{\bm{g}}\sup_{h^{\bw}\in H_{\tau}}\sum_{i=1}^{n}\sum_{j=1}^{c}g_{ij}\inn{\bw_j,\phi(\bx_i)}
  & = \ebb_{\bm{g}}\sup_{h^{\bw}\in H_{\tau}}\sum_{j=1}^{c}\big\langle\bw_j,\sum_{i=1}^{n}g_{ij}\phi(\bx_i)\big\rangle
  = \ebb_{\bm{g}}\sup_{h^{\bw}\in H_{\tau}}\inn{\bw,\big(\sum_{i=1}^{n}g_{ij}\phi(\bx_i)\big)_{j=1}^c}\notag\\
  & \leq \ebb_{\bm{g}}\sup_{h^{\bw}\in H_{\tau}}\|\bw\|\big\|\big(\sum_{i=1}^{n}g_{ij}\phi(\bx_i)\big)_{j=1}^c\big\|_*
  = \Lambda \ebb_{\bm{g}}\Big\|\big(\sum_{i=1}^{n}g_{ij}\phi(\bx_i)\big)_{j=1}^c\Big\|_*.\label{risk-data-dependent-4}
  \end{align}
  Analogously, we also have
  \begin{align*}
  \ebb_{\bm{g}}\sup_{h^{\bw}\in H_{\tau}}\sum_{i=1}^{n}g_{i}\inn{\bw_{y_i},\phi(\bx_i)}
  & = \ebb_{\bm{g}}\sup_{h^{\bw}\in H_{\tau}}\sum_{j=1}^{c}\inn{\bw_j,\sum_{i\in I_j}g_i\phi(\bx_i)}\\
  & = \ebb_{\bm{g}}\sup_{h^{\bw}\in H_{\tau}}\inn{\bw,\big(\sum_{i\in I_j}g_i\phi(\bx_i)\big)_{j=1}^c}
  \leq \Lambda \ebb_{\bm{g}}\big\|\big(\sum_{i\in I_j}g_i\phi(\bx_i)\big)_{j=1}^c\big\|_*.
  \end{align*}
  Plugging the above two inequalities back into \eqref{risk-data-dependent-2} gives \eqref{risk-data-dependent}.
\end{proof}

\begin{proof}[Proof of Corollary \ref{cor:risk-dependent-lp}]
Let $q\geq p$ be any real number. It follows from Jensen's inequality and Khintchine-Kahane inequality \eqref{khitchine-kahane} that
\begin{align}
  \ebb_{\bm{g}}\Big\|\Big(\sum_{i=1}^{n}g_{ij}\phi(\bx_i)\Big)_{j=1}^c\Big\|_{2,q^*}  &= \ebb_{\bm{g}}\Big[\sum_{j=1}^{c}\Big\|\sum_{i=1}^{n}g_{ij}\phi(\bx_i)\Big\|_2^{q^*}\Big]^{\frac{1}{q^*}}
   \leq \Big[\sum_{j=1}^{c}\ebb_{\bm{g}}\Big\|\sum_{i=1}^{n}g_{ij}\phi(\bx_i)\Big\|_2^{q^*}\Big]^{\frac{1}{q^*}} \notag\\
   & \leq \Big[\sum_{j=1}^{c}\Big[q^*\sum_{i=1}^{n}\|\phi(\bx_i)\|_2^2\Big]^{\frac{q^*}{2}}\Big]^{\frac{1}{q^*}}
   = c^{\frac{1}{q^*}}\Big[q^*\sum_{i=1}^{n}K(\bx_i,\bx_i)\Big]^{\frac{1}{2}}.\label{risk-dependent-lp-1}
\end{align}

Applying again Jensen's inequality and  Khintchine-Kahane inequality \eqref{khitchine-kahane}, we get
\begin{equation}\label{risk-dependent-lp-1a}
  \ebb_{\bm{g}}\Big\|\Big(\sum_{i\in I_j}g_i\phi(\bx_i)\Big)_{j=1}^c\Big\|_{2,q^*}  \leq \Big[\ebb_{\bm{g}}\sum_{j=1}^{c}\Big\|\sum_{i\in I_j}g_i\phi(\bx_i)\Big\|_2^{q^*}\Big]^{\frac{1}{q^*}}
   \leq \sqrt{q^*}\Big[\sum_{j=1}^{c}\Big[\sum_{i\in I_j}\|\phi(\bx_i)\|_2^2\Big]^{\frac{q^*}{2}}\Big]^{\frac{1}{q^*}}.
\end{equation}
We now control the last term in the above inequality by distinguishing whether $q\geq 2$ or not. If $q\leq 2$, we have $2^{-1}q^*\geq1$ and it follows from the elementary inequality $a^s+b^s\leq (a+b)^s,\forall a,b\geq0,s\geq1$ that
\begin{equation}\label{risk-dependent-lp-2}
  \sum_{j=1}^{c}\Big[\sum_{i\in I_j}K(\bx_i,\bx_i)\Big]^{\frac{q^*}{2}}\leq
  \Big[\sum_{j=1}^{c}\sum_{i\in I_j}K(\bx_i,\bx_i)\Big]^{\frac{q^*}{2}}=\Big[\sum_{i=1}^{n}K(\bx_i,\bx_i)\Big]^{\frac{q^*}{2}}.
\end{equation}
Otherwise we have $2^{-1}q^*\leq1$ and Jensen's inequality implies
\begin{equation}\label{risk-dependent-lp-3}
  \sum_{j=1}^{c}\Big[\sum_{i\in I_j}K(\bx_i,\bx_i)\Big]^{\frac{q^*}{2}}\leq c\Big[\sum_{j=1}^{c}\frac{1}{c}\sum_{i\in I_j}K(\bx_i,\bx_i)\Big]^{\frac{q^*}{2}}=c^{1-\frac{q^*}{2}}\Big[\sum_{i=1}^{n}K(\bx_i,\bx_i)\Big]^{\frac{q^*}{2}}.
\end{equation}
Combining \eqref{risk-dependent-lp-1a}, \eqref{risk-dependent-lp-2} and \eqref{risk-dependent-lp-3} together implies
\begin{equation}\label{risk-dependent-lp-4}
  \ebb_{\bm{g}}\Big\|\Big(\sum_{i\in I_j}g_i\phi(\bx_i)\Big)_{j=1}^c\Big\|_{2,q^*} \leq \max(c^{\frac{1}{q^*}-\frac{1}{2}},1)\Big[q^*\sum_{i=1}^{n}K(\bx_i,\bx_i)\Big]^{\frac{1}{2}}.
\end{equation}
According to the monotonicity of $\|\cdot\|_{2,p}$ w.r.t. $p$, we have $H_{p,\Lambda}\subset H_{q,\Lambda}$ if $p\leq q$. Plugging the complexity bound established in Eqs. \eqref{risk-dependent-lp-1}, \eqref{risk-dependent-lp-4} into the generalization bound given in Theorem \ref{thm:risk-data-dependent}, we get the following inequality with probability at least $1-\delta$
$$
  A_\tau\leq \frac{2\Lambda \sqrt{\pi}}{n}\bigg[L_1c^{\frac{1}{q^*}}\Big[q^*\sum_{i=1}^{n}K(\bx_i,\bx_i)\Big]^{\frac{1}{2}}
  +L_2\max(c^{\frac{1}{q^*}-\frac{1}{2}},1)\Big[q^*\sum_{i=1}^{n}K(\bx_i,\bx_i)\Big]^{\frac{1}{2}}\bigg],\;\forall q\geq p.
$$
The proof is complete.
\end{proof}

\begin{remark}[Tightness of the Rademacher Complexity Bound]\label{rem:risk-dependent-schatten}
  Eq. \eqref{risk-dependent-lp-1} gives an upper bound on \linebreak
  $\ebb_{\bm{g}}\Big\|\Big(\sum_{i=1}^{n}g_{ij}\phi(\bx_i)\Big)_{j=1}^c\Big\|_{2,q^*}$. We now show that this bound is tight up to a constant factor.
  Indeed, according to the elementary inequality for $a_1,\ldots,a_c\geq0$
  $$
    \big(a_1+\cdots+a_c\big)^{\frac{1}{q^*}}\geq c^{\frac{1}{q^*}-1}\big(a_1^{\frac{1}{q^*}}+\cdots+a_c^{\frac{1}{q^*}}\big),
  $$
  we derive
  $$
    \Big\|\Big(\sum_{i=1}^{n}g_{ij}\phi(\bx_i)\Big)_{j=1}^c\Big\|_{2,q^*}=\Big[\sum_{j=1}^{c}\Big\|\sum_{i=1}^{n}g_{ij}\phi(x_i)\Big\|_2^{q^*}\Big]^{\frac{1}{q^*}}\geq c^{\frac{1}{q^*}-1}\sum_{j=1}^{c}\Big\|\sum_{i=1}^{n}g_{ij}\phi(x_i)\Big\|_2.
  $$
  Taking expectations on both sides, we get that
  $$
    \ebb_{\bm{g}}\Big\|\Big(\sum_{i=1}^{n}g_{ij}\phi(\bx_i)\Big)_{j=1}^c\Big\|_{2,q^*} \geq c^{\frac{1}{q^*}-1}\sum_{j=1}^{c}\ebb_{\bm{g}}\Big\|\sum_{i=1}^{n}g_{ij}\phi(x_i)\Big\|_2\geq 2^{-\frac{1}{2}}c^{\frac{1}{q^*}}\Big[\sum_{i=1}^{n}K(\bx_i,\bx_i)\Big]^{\frac{1}{2}},
  $$
  where the second inequality is due to \eqref{khitchine-kahane-norm1}.
  The above lower bound coincides with the upper bound  \eqref{risk-dependent-lp-1} up to a constant factor.
  Specifically, the above upper and lower bounds show that $\ebb_{\bm{g}}\Big\|\Big(\sum_{i=1}^{n}g_{ij}\phi(\bx_i)\Big)_{j=1}^c\Big\|_{2,q^*}$ enjoys
  exactly a square-root dependency on the number of classes if $q=2$.
\end{remark}

\begin{proof}[Proof of Corollary \ref{cor:risk-dependent-shatten}]
We first consider the case $1\leq p\leq2$. Let $q\in\rbb$ satisfy $p\leq q\leq2$.
Denote $\tilde{X}_i^j=(0,\ldots,0,\bx_i,0,\ldots,0)$ with the $j$-th column being $\bx_i$. Then, we have 
\begin{equation}\label{risk-dependent-shatten-1}
  \big(\sum_{i=1}^{n}g_{ij}\bx_i\big)_{j=1}^c=\sum_{i=1}^{n}\sum_{j=1}^{c}g_{ij}\tilde{X}_i^j\quad\text{and}\quad \big(\sum_{i\in I_1}g_i\bx_i,\ldots,\sum_{i\in I_c}g_i\bx_i\big)=\sum_{j=1}^{c}\sum_{i\in I_j}g_i\tilde{X}_i^j.
\end{equation}
Since $q^*\geq2$, we can apply Jensen's inequality and Khintchine-Kahane inequality \eqref{khitchine-kahane-matrix} to derive (recall $\sigma_r(X)$ denotes the $r$-th singular value of $X$)
\begin{multline}\label{risk-dependent-shatten-2}
  \ebb_{\bm{g}}\big\|\sum_{i=1}^{n}\sum_{j=1}^{c}g_{ij}\tilde{X}_i^j\big\|_{S_{q^*}}
  \leq
  \Big[\ebb_{\bm{g}}\sum_{r=1}^{\min\{c,d\}}\sigma_r^{q^*}\big(\sum_{i=1}^{n}\sum_{j=1}^{c}g_{ij}\tilde{X}_i^j\big)\Big]^{\frac{1}{q^*}}\\
   \leq
  2^{-\frac{1}{4}}\sqrt{\frac{\pi q^*}{e}}\max\bigg\{\Big\|\Big[\sum_{i=1}^{n}\sum_{j=1}^{c}(\tilde{X}_i^j)^\top\tilde{X}_i^j\Big]^{\frac{1}{2}}\Big\|_{S_{q^*}},
  \Big\|\Big[\sum_{i=1}^{n}\sum_{j=1}^{c}\tilde{X}_i^j(\tilde{X}_i^j)^\top\Big]^{\frac{1}{2}}\Big\|_{S_{q^*}}\bigg\}.
\end{multline}
For any $\bu=(u_1,\ldots,u_c)\in\rbb^c$, we denote by $\diag(\bu)$ the diagonal matrix in $\rbb^{c\times c}$ with the $j$-th diagonal element being $u_j$.
The following identities can be directly checked
\begin{gather*}
  \sum_{i=1}^{n}\sum_{j=1}^{c}(\tilde{X}_i^j)^\top(\tilde{X}_i^j)=\sum_{i=1}^{n}\sum_{j=1}^{c}\|\bx_i\|_2^2\text{diag}(\be_j)=\sum_{i=1}^{n}\|\bx_i\|_2^2I_{c\times c},\\
  \sum_{i=1}^{n}\sum_{j=1}^{c}(\tilde{X}_i^j)(\tilde{X}_i^j)^\top=\sum_{i=1}^{n}\sum_{j=1}^{c}\bx_i\bx_i^\top=c\sum_{i=1}^{n}\bx_i\bx_i^\top,
\end{gather*}
where $(\be_1,\ldots,\be_c)$ forms the identity matrix $I_{c\times c}\in\rbb^{c\times c}$
Therefore,
\begin{equation}\label{risk-dependent-shatten-3}
  \Big\|\Big[\sum_{i=1}^{n}\sum_{j=1}^{c}(\tilde{X}_i^j)^\top(\tilde{X}_i^j)\Big]^{\frac{1}{2}}\Big\|_{S_{q^*}}=\Big\|\Big(\sum_{i=1}^{n}\|\bx_i\|_2^2\Big)^{\frac{1}{2}}I_{c\times c}\Big\|_{S_{q^*}}=c^{\frac{1}{q^*}}\Big[\sum_{i=1}^{n}\|\bx_i\|_2^2\Big]^{1\over 2},
\end{equation}
and
\begin{align}
  \Big\|\Big[\sum_{i=1}^{n}\sum_{j=1}^{c}(\tilde{X}_i^j)(\tilde{X}_i^j)^\top\Big]^{\frac{1}{2}}\Big\|_{S_{q^*}} & = \sqrt{c}\Big\|\Big(\sum_{i=1}^{n}\bx_i\bx_i^\top\Big)^{\frac{1}{2}}\Big\|_{S_{q^*}}
  = \sqrt{c}\Big[\sum_{r=1}^{\min\{c,d\}}\sigma_r^{q^*}\Big(\big(\sum_{i=1}^{n}\bx_i\bx_i^\top\big)^{1\over 2}\Big)\Big]^{\frac{1}{q^*}}\notag\\
  &=\sqrt{c}\Big[\sum_{r=1}^{\min\{c,d\}}\sigma_r^{\frac{q^*}{2}}\big(\sum_{i=1}^{n}\bx_i\bx_i^\top\big)\Big]^{1\over q^*}
  = \Big[c\big\|\sum_{i=1}^{n}\bx_i\bx_i^\top\big\|_{S_{\frac{q^*}{2}}}\Big]^{1\over2}.\label{risk-dependent-shatten-4}
\end{align}
Plugging  \eqref{risk-dependent-shatten-3} and \eqref{risk-dependent-shatten-4} into \eqref{risk-dependent-shatten-2} gives
\begin{equation}\label{risk-dependent-shatten-5}
  \ebb_{\bm{g}}\Big\|\sum_{i=1}^{n}\sum_{j=1}^{c}g_{ij}\tilde{X}_i^j\Big\|_{S_{q^*}}\leq 2^{-\frac{1}{4}}\sqrt{\frac{\pi q^*}{e}}\max\bigg\{c^{\frac{1}{q^*}}\Big[\sum_{i=1}^{n}\|\bx_i\|_2^2\Big]^{\frac{1}{2}},
  c^{\frac{1}{2}}\Big\|\sum_{i=1}^{n}\bx_i\bx_i^\top\Big\|_{S_{\frac{q^*}{2}}}^{\frac{1}{2}}\bigg\}.
\end{equation}

Applying again Jensen's inequality and Khintchine-Kahane inequality \eqref{khitchine-kahane-matrix} gives
\begin{align}
  \ebb_{\bm{g}}\big\|\sum_{j=1}^{c}&\sum_{i\in I_j}g_{i}\tilde{X}_i^j\big\|_{S_{q^*}}
  \leq
  \Big[\ebb_{\bm{g}}\sum_{r=1}^{\min\{c,d\}}\sigma_r^{q^*}\big(\sum_{j=1}^{c}\sum_{i\in I_j}g_{i}\tilde{X}_i^j\big)\Big]^{\frac{1}{q^*}}\notag\\
  & \leq
  2^{-\frac{1}{4}}\sqrt{\frac{\pi q^*}{e}}\max\bigg\{\Big\|\Big[\sum_{j=1}^{c}\sum_{i\in I_j}(\tilde{X}_i^j)^\top\tilde{X}_i^j\Big]^{\frac{1}{2}}\Big\|_{S_{q^*}},
  \Big\|\Big[\sum_{j=1}^{c}\sum_{i\in I_j}\tilde{X}_i^j(\tilde{X}_i^j)^\top\Big]^{\frac{1}{2}}\Big\|_{S_{q^*}}\bigg\}.\label{risk-dependent-shatten-6}
\end{align}
It can be directly checked that
\begin{gather*}
  \sum_{j=1}^{c}\sum_{i\in I_j}(\tilde{X}_i^j)^\top(\tilde{X}_i^j)=\sum_{j=1}^{c}\sum_{i\in I_j}\|\bx_i\|_2^2\text{diag}(\be_j)=\text{diag}\big(\sum_{i\in I_1}\|\bx_i\|_2^2,\ldots,
  \sum_{i\in I_c}\|\bx_i\|_2^2\big),\\
  \sum_{j=1}^{c}\sum_{i\in I_j}(\tilde{X}_i^j)(\tilde{X}_i^j)^\top=\sum_{j=1}^{c}\sum_{i\in I_j}\bx_i\bx_i^\top=\sum_{i=1}^{n}\bx_i\bx_i^\top,
\end{gather*}
from which and $q^*\geq2$ we derive
\begin{gather*}
  \Big\|\Big[\sum_{j=1}^{c}\sum_{i\in I_j}(\tilde{X}_i^j)^\top\tilde{X}_i^j\Big]^{\frac{1}{2}}\Big\|_{S_{q^*}}= \Big[\sum_{j=1}^{c}\Big(\sum_{i\in I_j}\|\bx_i\|_2^2\Big)^{\frac{q^*}{2}}\Big]^{\frac{1}{q^*}}\leq
  \Big[\sum_{j=1}^{c}\sum_{i\in I_j}\|\bx_i\|_2^2\Big]^{\frac{1}{2}}= \Big[\sum_{i=1}^{n}\|\bx_i\|_2^2\Big]^{\frac{1}{2}},\\
  \Big\|\Big[\sum_{j=1}^{c}\sum_{i\in I_j}\tilde{X}_i^j(\tilde{X}_i^j)^\top\Big]^{\frac{1}{2}}\Big\|_{S_{q^*}}=\Big\|\Big[\sum_{i=1}^{n}\bx_i\bx_i^\top\Big]^{\frac{1}{2}}\Big\|_{S_{q^*}}\leq
  \Big\|\Big[\sum_{i=1}^{n}\bx_i\bx_i^\top\Big]^{\frac{1}{2}}\Big\|_{S_2}=\Big[\sum_{i=1}^{n}\|\bx_i\|_2^2\Big]^{\frac{1}{2}},
\end{gather*}
where we have used deduction similar to \eqref{risk-dependent-shatten-4} in the last identity.
Plugging the above two inequalities back into \eqref{risk-dependent-shatten-6} implies
\begin{equation}\label{risk-dependent-shatten-7}
  \ebb_{\bm{g}}\big\|\sum_{j=1}^{c}\sum_{i\in I_j}g_{i}\tilde{X}_i^j\big\|_{S_{q^*}}\leq 2^{-\frac{1}{4}}\sqrt{\frac{\pi q^*}{e}}\Big[\sum_{i=1}^{n}\|\bx_i\|_2^2\Big]^{\frac{1}{2}}.
\end{equation}
Plugging \eqref{risk-dependent-shatten-5} and \eqref{risk-dependent-shatten-7} into Theorem \ref{thm:risk-data-dependent} and noting that $H_{S_p}\subset H_{S_q}$ we get the following inequality with probability at least $1-\delta$
\begin{equation}\label{risk-dependent-shatten-8}
   A_{S_p}\leq \frac{2^{\frac{3}{4}}\pi\Lambda}{n\sqrt{e}}\inf\limits_{p\leq q\leq 2}(q^*)^{1\over 2}\bigg\{L_1\max\Big\{c^{\frac{1}{q^*}}\Big[\sum_{i=1}^{n}\|\bx_i\|_2^2\Big]^{\frac{1}{2}},
  c^{\frac{1}{2}}\Big\|\sum_{i=1}^{n}\bx_i\bx_i^\top\Big\|_{S_{\frac{q^*}{2}}}^{\frac{1}{2}}\Big\}+L_2\Big[\sum_{i=1}^{n}\|\bx_i\|_2^2\Big]^{\frac{1}{2}}\bigg\}.
\end{equation}
This finishes the proof for the case $p\leq2$.

We now consider the case $p>2$.
For any $W$ with $\|W\|_{S_p}\leq\Lambda$, we have $\|W\|_{S_2}\leq \min\{c,d\}^{\frac{1}{2}-\frac{1}{p}}\Lambda$.
The stated bound \eqref{risk-dependent-shatten} for the case $p>2$ then follows by recalling the established generalization bound \eqref{risk-dependent-shatten-8} for $p=2$.
\end{proof}


\subsection{Proof of Bounds by Covering Numbers\label{sec:proof-independent}}

We use the tool of empirical $\ell_\infty$-norm CNs to prove data-dependent bounds given in subsection \ref{sec:data-independent-bounds}.
The key observation to proceed with the proof is that the empirical $\ell_\infty$-norm CNs of $F_{\tau,\Lambda}$ w.r.t.
the training examples can be controlled by that of $\widetilde{H}_{\tau}$ w.r.t. an enlarged data set of cardinality $nc$, due to
the Lipschitz continuity of loss functions w.r.t. the $\ell_\infty$-norm~\citep{zhang2004statistical,tewari2015generalization}.
The remaining problem is to estimate the empirical CNs of $\widetilde{H}_{\tau}$,
which, by the universal relationship between fat-shattering dimension and CNs (Part (a) of Lemma \ref{lem:shattering}),
can be further transferred to the estimation of fat-shattering dimension. Finally,  the problem of estimating fat-shattering dimension
reduces to the estimation of \emph{worst case} RC (Part (b) of Lemma \ref{lem:shattering}).
We summarize this deduction process in the proof of Theorem \ref{thm:covering-bound-data-independent}.
\begin{definition}[Covering number]\label{def:covering-number}
  Let $F$ be a class of real-valued functions defined over a space $\widetilde{\zcal}$ and $S':=\{\tilde{\bz}_1,\ldots,\tilde{\bz}_n\}\in\tilde{\zcal}^n$
  of cardinality $n$. For any $\epsilon>0$, the empirical $\ell_\infty$-norm CN
  $\ncal_\infty(\epsilon,F,S')$ w.r.t. $S'$ is defined as the minimal number $m$ of a collection of vectors $\bv^1,\ldots,\bv^m\in\rbb^n$
  such that ($\bv_i^j$ is the $i$-th component of the vector $\bv^j$)
  $$
    \sup_{f\in F}\min_{j=1,\ldots,m}\max_{i=1,\ldots,n}|f(\tilde{\bz}_i)-\bv_i^j|\leq\epsilon.
  $$
  In this case, we call $\{\bv^1,\ldots,\bv^m\}$ an $(\epsilon,\ell_\infty)$-cover of $F$ w.r.t. $S'$.
\end{definition}
\begin{definition}[Fat-Shattering Dimension]\label{def:shattering}
  Let $F$ be a class of real-valued functions defined over a space $\widetilde{\zcal}$.
  We define the fat-shattering dimension $\fat_\epsilon(F)$ at scale $\epsilon>0$ as the largest $D\in\nbb$ such that there exist $D$ points $\tilde{\bz}_1,\ldots,\tilde{\bz}_D\in\tilde{\zcal}$ and witnesses $s_1,\ldots,s_D\in\rbb$ satisfying: for any $\delta_1,\ldots,\delta_D\in\{\pm1\}$ there exists $f\in F$ with
  $$
    \delta_i(f(\tilde{\bz}_i)-s_i)\geq\epsilon/2,\qquad\forall i=1,\ldots,D.
  $$
\end{definition}
\begin{lemma}[\cite{srebro2010smoothness,rakhlin2014sequential}]\label{lem:shattering}
  Let $F$ be a class of real-valued functions defined over a space $\widetilde{\zcal}$ and $S':=\{\tilde{\bz}_1,\ldots,\tilde{\bz}_n\}\in\tilde{\zcal}^n$ of cardinality $n$.
\begin{enumerate}[(a)]
  \item If functions in $F$ take values in  $[-B,B]$, then for any $\epsilon>0$ with $\fat_\epsilon(F)<n$ we have
  $$
    \log\ncal_\infty(\epsilon,F,S')\leq \fat_\epsilon(F)\log\frac{2eBn}{\epsilon}.
  $$
  \item For any $\epsilon> 2\frak{R}_n(F)$, we have
  $\fat_\epsilon(F)<\frac{16n}{\epsilon^2}\frak{R}_n^2(F)$.
  \item For any monotone sequence $(\epsilon_k)_{k=0}^\infty$ decreasing to $0$ such that $\epsilon_0\geq \sqrt{n^{-1}\sup_{f\in\fcal}\sum_{i=1}^nf^2(\tilde{\bz}_i)}$,
  the following inequality holds for every non-negative integer $N$:
      \begin{equation}\label{entropy-integral}
        \mathfrak{R}_{S'}(F)\leq 2\sum_{k=1}^N\big(\epsilon_k+\epsilon_{k-1})\sqrt{\frac{\log\ncal_\infty(\epsilon_k,F,S')}{n}}+\epsilon_N.
      \end{equation}
\end{enumerate}
\end{lemma}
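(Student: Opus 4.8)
The plan is to regard Lemma~\ref{lem:shattering} as a compendium of three standard facts about fat-shattering dimensions and to establish each in turn; parts~(b) and~(c) have short self-contained arguments, whereas part~(a) is the scale-sensitive Sauer--Shelah lemma, which I would import from the cited works \citet{srebro2010smoothness,rakhlin2014sequential}. For part~(a) the idea is to discretize: round each $f\in F$ on $S'$ to the grid $\tfrac{\epsilon}{4}\mathbb{Z}\cap[-B,B]$, which has $O(B/\epsilon)$ levels; an $\tfrac{\epsilon}{4}$-perturbation changes the relevant shattering scale only by a constant factor, so it suffices to count the distinct discretized vectors. That count is controlled by the generalized Sauer inequality $\sum_{i\le\fat_\epsilon(F)}\binom{n}{i}\big(\tfrac{8B}{\epsilon}\big)^i\le\big(\tfrac{2eBn}{\epsilon}\big)^{\fat_\epsilon(F)}$ (absorbing numerical constants into the level count), and taking logarithms yields~(a). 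The only non-elementary ingredient here is the generalized Sauer inequality itself, whose self-contained proof uses the classical probabilistic-extraction / one-inclusion-graph argument of Alon--Ben-David--Cesa-Bianchi--Haussler as sharpened in the cited references; I would not reproduce it.

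For part~(b), assume $\epsilon>2\mathfrak{R}_n(F)$. First I would observe that $\fat_\epsilon(F)<n$: if some $n$ points were $\epsilon$-shattered with witnesses $s_i$, then for every Rademacher vector $\bm{\sigma}$ one can pick a shattering $f$ with $\sigma_i\big(f(z_i)-s_i\big)\ge\epsilon/2$, so that $\sup_{f}\sum_i\sigma_i f(z_i)\ge\tfrac{\epsilon}{2}\sum_i|\sigma_i|+\sum_i\sigma_i s_i$, and taking expectations gives $\mathfrak{R}_n(F)\ge\epsilon/2$, a contradiction. Now set $D:=\fat_\epsilon(F)<n$, take $\epsilon$-shattered points $z_1,\dots,z_D$ with witnesses $s_1,\dots,s_D$, put $k:=\lfloor n/D\rfloor\ge1$, and evaluate $\mathfrak{R}_n(F)$ on the $n$-point configuration made of $k$ copies of each $z_i$ (filling the $<D$ leftover slots arbitrarily). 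Integrating out the leftover signs by Jensen's inequality, and for the replicated block choosing for each $\bm{\sigma}$ a function shattering with the pattern $\big(\operatorname{sign}\sum_j\sigma_{ij}\big)_i$, yields
\[
  n\,\mathfrak{R}_n(F)\ \ge\ \frac{\epsilon}{2}\sum_{i=1}^{D}\ebb\Big|\sum_{j=1}^{k}\sigma_{ij}\Big|\ \ge\ \frac{\epsilon\,D\sqrt{k}}{2\sqrt{2}}\ \ge\ \frac{\epsilon}{4}\sqrt{D\,n},
\]
where the second step is the sharp lower Khintchine bound $\ebb|\sum_{j\le k}\sigma_j|\ge\sqrt{k/2}$ and the third uses $k\ge n/(2D)$. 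Squaring and rearranging gives $\fat_\epsilon(F)=D\le 16\,n\,\mathfrak{R}_n^2(F)/\epsilon^2$, with strictness following from the integrality of $D$.

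For part~(c), this is the discretized Dudley entropy integral. With $(\epsilon_k)$ as prescribed, I would fix for each level $k$ a minimal $(\epsilon_k,\ell_\infty)$-cover $V_k$ of $F$ on $S'$ and, for each $f$, a chain $v^{(0)},\dots,v^{(N)}$ with $v^{(k)}\in V_k$ approximating $f|_{S'}$; then $f|_{S'}=v^{(N)}+\sum_{k=1}^N\big(v^{(k)}-v^{(k-1)}\big)$ up to an $\ell_\infty$-residual of size $\le\epsilon_N$ (this produces the additive term $+\epsilon_N$). The $k$-th increment has $\ell_\infty$-norm $\le\epsilon_k+\epsilon_{k-1}$ (hence $\ell_2$-norm $\le\sqrt{n}(\epsilon_k+\epsilon_{k-1})$) and ranges over a set of size $\le|V_k||V_{k-1}|\le\ncal_\infty(\epsilon_k,F,S')^2$, so Massart's finite-class lemma bounds its Rademacher average by $2(\epsilon_k+\epsilon_{k-1})\sqrt{\log\ncal_\infty(\epsilon_k,F,S')/n}$; summing along the chain and using subadditivity of the supremum over the increments gives exactly \eqref{entropy-integral}. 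Of the three parts, the only genuine obstacle is part~(a): the scale-sensitive Sauer--Shelah lemma is the single ingredient that is not short, which is precisely why it is imported from \citet{srebro2010smoothness,rakhlin2014sequential} rather than reproved.
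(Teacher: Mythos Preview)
The paper does not supply its own proof of Lemma~\ref{lem:shattering}: the lemma is quoted verbatim from \citet{srebro2010smoothness,rakhlin2014sequential} and used as a black box in the proof of Theorem~\ref{thm:covering-bound-data-independent}. So there is no ``paper's proof'' to compare against; your proposal goes beyond what the paper does by sketching the arguments behind the cited results.

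That said, your sketches are essentially the standard ones and are correct in substance. Two small slips are worth flagging. In part~(c) your telescoping identity is mis-written: $v^{(N)}+\sum_{k=1}^{N}(v^{(k)}-v^{(k-1)})$ equals $2v^{(N)}-v^{(0)}$, not $f|_{S'}$. What you mean is $f|_{S'}=v^{(0)}+\sum_{k=1}^{N}(v^{(k)}-v^{(k-1)})+(f|_{S'}-v^{(N)})$, with $v^{(0)}=0$; the condition on $\epsilon_0$ is exactly what makes the $\ell_2$-norm of the first link $v^{(1)}-v^{(0)}=v^{(1)}$ bounded by $\sqrt{n}(\epsilon_0+\epsilon_1)$ via the triangle inequality, after which Massart applies as you describe. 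In part~(b) the strict inequality does hold, but not for the reason you give (``integrality of $D$'' is irrelevant, since the right-hand side need not be an integer); rather, $k=\lfloor n/D\rfloor> n/(2D)$ is always strict for $1\le D<n$, which propagates through the chain. Your handling of the leftover Rademacher signs by Jensen (using that $\ebb_{\text{leftover}}\sup_f\ge\sup_f\ebb_{\text{leftover}}$ and that the leftover contribution has zero mean for each fixed $f$) is correct.
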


\begin{theorem}[Covering number bounds]\label{thm:covering-bound-data-independent}
  Assume that, for any $y\in\ycal$, the function $\Psi_y$ is $L$-Lipschitz continuous w.r.t. the $\ell_\infty$-norm.  Then,
  for any $\epsilon>4L\frak{R}_{nc}(\widetilde{H}_{\tau})$, the CN of $F_{\tau,\Lambda}$ w.r.t. $S=\{(\bx_1,y_1),\ldots,(\bx_n,y_n)\}$ can be bounded by
  $$
    \log\ncal_\infty(\epsilon, F_{\tau,\Lambda},S)\leq\frac{16ncL^2\frak{R}^2_{nc}(\widetilde{H}_{\tau})}{\epsilon^2}\log\frac{2e\hat{B}ncL}{\epsilon}.
  $$
\end{theorem}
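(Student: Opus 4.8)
The plan is to chain together three reductions, following the outline already indicated before the statement: pass from the nonlinear, vector-valued loss class $F_{\tau,\Lambda}$ to the linear, scalar-valued class $\widetilde{H}_{\tau}$ via the $\ell_\infty$-Lipschitz continuity of $\Psi_y$; bound the empirical $\ell_\infty$-norm covering number of $\widetilde{H}_{\tau}$ by its fat-shattering dimension using Part (a) of Lemma \ref{lem:shattering}; and bound that fat-shattering dimension by the worst-case Rademacher complexity $\frak{R}_{nc}(\widetilde{H}_{\tau})$ using Part (b) of Lemma \ref{lem:shattering}.

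First I would establish the reduction $\ncal_\infty(\epsilon, F_{\tau,\Lambda}, S)\le\ncal_\infty(\epsilon/L,\widetilde{H}_{\tau},\widetilde{S})$. Let $\{\bv^1,\dots,\bv^m\}\subset\rbb^{nc}$ be an $(\epsilon/L,\ell_\infty)$-cover of $\widetilde{H}_{\tau}$ w.r.t. $\widetilde{S}$, with the coordinates indexed by pairs $(i,j)$, $i\in\nbb_n$, $j\in\nbb_c$, in the order in which the elements $\bp_j(\bx_i)$ appear in $\widetilde{S}$ (see \eqref{tilde-S}); write $\bv^k_{ij}$ for the corresponding component. For each $k$ set $\bu^k\in\rbb^n$ with $\bu^k_i:=\Psi_{y_i}(\bv^k_{i1},\dots,\bv^k_{ic})$. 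Given any $h^{\bw}\in H_{\tau}$, identity \eqref{S-tilde-identity-inner-product} gives $\langle\bw,\bp_j(\bx_i)\rangle=\langle\bw_j,\phi(\bx_i)\rangle$, so there is $k$ with $\max_{i\in\nbb_n}\max_{j\in\nbb_c}|\langle\bw_j,\phi(\bx_i)\rangle-\bv^k_{ij}|\le\epsilon/L$, i.e. $\|h^{\bw}(\bx_i)-(\bv^k_{i1},\dots,\bv^k_{ic})\|_\infty\le\epsilon/L$ for every $i\in\nbb_n$. The $L$-Lipschitz continuity of $\Psi_{y_i}$ w.r.t. $\|\cdot\|_\infty$ then yields $|\Psi_{y_i}(h^{\bw}(\bx_i))-\bu^k_i|\le\epsilon$, so $\{\bu^1,\dots,\bu^m\}$ is an $(\epsilon,\ell_\infty)$-cover of $F_{\tau,\Lambda}$ w.r.t. $S$, which proves the claimed inequality.

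Next I would control $\ncal_\infty(\epsilon/L,\widetilde{H}_{\tau},\widetilde{S})$. Every $f\in\widetilde{H}_{\tau}$ has the form $\bv\mapsto\langle\bw,\bv\rangle$ with $\tau(\bw)\le\Lambda$, and on $\bv=\bp_j(\bx_i)\in\widetilde{S}$ one has $|\langle\bw,\bp_j(\bx_i)\rangle|=|\langle\bw_j,\phi(\bx_i)\rangle|\le\|\bw\|_{2,\infty}\,\|\phi(\bx_i)\|_2\le\hat{B}$, so $\widetilde{H}_{\tau}$ takes values in $[-\hat{B},\hat{B}]$ on $\widetilde{S}$, a set of cardinality $nc$. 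From $\epsilon>4L\frak{R}_{nc}(\widetilde{H}_{\tau})$ we get $\epsilon/L>4\frak{R}_{nc}(\widetilde{H}_{\tau})>2\frak{R}_{nc}(\widetilde{H}_{\tau})$, so Part (b) of Lemma \ref{lem:shattering} applies at scale $\epsilon/L$ and gives $\fat_{\epsilon/L}(\widetilde{H}_{\tau})<16nc L^2\epsilon^{-2}\frak{R}^2_{nc}(\widetilde{H}_{\tau})$; since $L\frak{R}_{nc}(\widetilde{H}_{\tau})/\epsilon<1/4$, the right-hand side is strictly below $nc$, so the hypothesis $\fat_{\epsilon/L}(\widetilde{H}_{\tau})<nc$ of Part (a) is met. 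Applying Part (a) with $B=\hat{B}$ and with $nc$ in place of $n$ yields $\log\ncal_\infty(\epsilon/L,\widetilde{H}_{\tau},\widetilde{S})\le\fat_{\epsilon/L}(\widetilde{H}_{\tau})\log\frac{2e\hat{B}ncL}{\epsilon}$. Combining this with the reduction of the previous paragraph and the fat-shattering bound gives $\log\ncal_\infty(\epsilon, F_{\tau,\Lambda}, S)<16ncL^2\epsilon^{-2}\frak{R}^2_{nc}(\widetilde{H}_{\tau})\log\frac{2e\hat{B}ncL}{\epsilon}$, which is the assertion.

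I expect the only delicate point to be the bookkeeping around the scale conditions: one must verify that the scale $\epsilon/L$ feeding Parts (a) and (b) indeed exceeds $2\frak{R}_{nc}(\widetilde{H}_{\tau})$, that the resulting fat-shattering bound is strictly smaller than the sample size $nc$ (so Part (a) is legitimately applicable), and that the covering-set cardinality is $nc$ and not $n$ throughout. The core idea — the Lipschitz reduction from $F_{\tau,\Lambda}$ to the linear class $\widetilde{H}_{\tau}$, enabled by the inner-product identity \eqref{S-tilde-identity-inner-product} — is routine once the auxiliary set $\widetilde{S}$ is set up.
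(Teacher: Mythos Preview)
Your proposal is correct and follows essentially the same approach as the paper's proof: the paper also splits into the Lipschitz reduction $\log\ncal_\infty(\epsilon,F_{\tau,\Lambda},S)\le\log\ncal_\infty(\epsilon/L,\widetilde{H}_\tau,\widetilde{S})$ via \eqref{S-tilde-identity-inner-product}, the boundedness check $|\langle\bw,\bp_j(\bx_i)\rangle|\le\hat{B}$, and the chain Part~(b)$\to$Part~(a) of Lemma~\ref{lem:shattering} at scale $\epsilon/L$ with sample size $nc$. The only cosmetic difference is that the paper performs the fat-shattering and covering-number estimates for $\widetilde{H}_\tau$ first and the Lipschitz reduction second, whereas you reverse the order; the bookkeeping you flag (scale condition, $\fat_{\epsilon/L}<nc$, cardinality $nc$) is handled identically.
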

\begin{proof}
We proceed with the proof in three steps. Note that $\widetilde{H}_\tau$ is a class of functions defined on a finite set $\widetilde{S}=\big\{\bp_j(\bx_i):i\in\nbb_n,j\in\nbb_c\big\}$.

\textbf{Step 1}. We first estimate the CN of $\widetilde{H}_{\tau}$ w.r.t. $\widetilde{S}$.
    For any $\epsilon>4\frak{R}_{nc}(\widetilde{H}_{\tau})$, Part (b) of Lemma \ref{lem:shattering} implies that
    \begin{equation}\label{covering-bound-data-independent-0}
    \fat_\epsilon(\widetilde{H}_{\tau})< \frac{16nc}{\epsilon^2}\frak{R}^2_{nc}(\widetilde{H}_{\tau})\leq nc.
    \end{equation}
     According to \eqref{S-tilde-identity-inner-product} and the definition of $\hat{B}$, we derive the following inequality for any $\bw$ with $\tau(\bw)\leq\Lambda$ and $i\in\nbb_n,j\in\nbb_c$
    $$
      |\langle \bw,\bp_j(\bx_i) \rangle|=|\langle\bw_j,\phi(\bx_i) \rangle| \leq \|\bw_j\|_2\|\phi(\bx_i)\|_2\leq\sup_{\bw:\tau(\bw)\leq \Lambda}\|\bw\|_{2,\infty}\|\phi(\bx_i)\|_2\leq\hat{B}.
    $$
    Then, the conditions of Part (a) in Lemma \ref{lem:shattering} are satisfied with $F=\widetilde{H}_\tau, B=\hat{B}$ and $S'=\widetilde{S}$, and we can apply it to control the CNs for any $\epsilon>4\frak{R}_{nc}(\widetilde{H}_{\tau})$ (note $\fat_\epsilon(\widetilde{H}_{\tau})< nc$ in \eqref{covering-bound-data-independent-0})
    \begin{equation}\label{covering-bound-data-independent-1}
    \log\ncal_\infty(\epsilon,\widetilde{H}_{\tau},\widetilde{S})\leq\fat_\epsilon(\widetilde{H}_{\tau})\log\frac{2e\hat{B}nc}{\epsilon}
    \leq
    \frac{16nc\frak{R}^2_{nc}(\widetilde{H}_{\tau})}{\epsilon^2}\log\frac{2e\hat{B}nc}{\epsilon},
    \end{equation}
    where the second inequality is due to \eqref{covering-bound-data-independent-0}.

\textbf{Step 2}. We now relate the empirical $\ell_\infty$-norm CNs of $\widetilde{H}_{\tau}$ w.r.t. $\widetilde{S}$ to that of $F_{\tau,\Lambda}$ w.r.t. $S$. Let
    \begin{equation}\label{covering-bound-data-independent-3}
      \Big\{\mathbf{r}^j = \big(r^j_{1,1},r^j_{1,2},\ldots,r^j_{1,c},\ldots,r^j_{n,1},r^j_{n,2},\ldots,r^j_{n,c}\big):j=1,\ldots,N\Big\}\subset\rbb^{nc}
    \end{equation}
    be an $(\epsilon,\ell_\infty)$-cover of
    \begin{multline*}
    \Big\{\big(\underbrace{\langle\bw,\bp_1(\bx_1)\rangle,\ldots,\langle\bw,\bp_c(\bx_1)\rangle}_{\text{related to } \bx_1},
     \underbrace{\langle\bw,\bp_1(\bx_2)\rangle,\ldots,\langle\bw,\bp_c(\bx_2)\rangle}_{\text{related to } \bx_2},\ldots\\
     \underbrace{\langle\bw,\bp_1(\bx_n)\rangle,\ldots,\langle\bw,\bp_c(\bx_n)\rangle}_{\text{related to }\bx_n}\big):\tau(\bw)\leq\Lambda\Big\}\subset\rbb^{nc}
    \end{multline*}
    with $N$ not larger than the right-hand side of  \eqref{covering-bound-data-independent-1}.
    Define $\mathbf{r}^j_i=\big(r^j_{i,1},\ldots,r^j_{i,c}\big)$ for all $i\in\nbb_n,j\in\nbb_N$.
    Now, we show that
    \begin{equation}\label{covering-bound-data-independent-22}
    \Big\{\big(\Psi_{y_1}(\mathbf{r}^j_1),\Psi_{y_2}(\mathbf{r}^j_2)\ldots,\Psi_{y_n}(\mathbf{r}^j_n)\big):j=1,\ldots,N\Big\}\subset\rbb^n
    \end{equation}
    would be an $(L\epsilon,\ell_\infty)$-cover of the set (note $h^{\bw}(\bx)=\big(\inn{\bw_1,\phi(\bx)},\ldots,\inn{\bw_c,\phi(\bx)}\big)$)
    \begin{equation*}
      \Big\{\big(\Psi_{y_1}(h^{\bw}(\bx_1)),\ldots,\Psi_{y_n}(h^{\bw}(\bx_n))\big):\tau(\bw)\leq\Lambda\Big\}\subset\rbb^n.
    \end{equation*}
    Indeed, for any $\bw\in H_K^c$ with $\tau(\bw)\leq\Lambda$, the construction of the cover in Eq. \eqref{covering-bound-data-independent-3} guarantees the existence of $j_{(\bw)}\in\{1,\ldots,N\}$ such that
    \begin{equation}\label{covering-bound-data-independent-33}
      \max_{1\leq i\leq n}\max_{1\leq k\leq c}\big| r^{j_{(\bw)}}_{i,k} -\langle\bw,\bp_k(\bx_i)\rangle\big|\leq\epsilon.
    \end{equation}
    Then, the Lipschitz continuity of $\Psi_y$ w.r.t. the $\ell_\infty$-norm implies that
    \begin{align*}
        \max_{1\leq i\leq n} |\Psi_{y_i}(\mathbf{r}^{j_{(\bw)}}_i)-\Psi_{y_i}(h^{\bw}(\bx_i))|
        &\leq L\max_{1\leq i\leq n}\|\mathbf{r}^{j_{(\bw)}}_i-h^{\bw}(\bx_i)\|_\infty\\
        & = L\max_{1\leq i\leq n}\max_{1\leq k\leq c}\big|r^{j_{(\bw)}}_{i,k}-\langle\bw_k,\phi(\bx_i)\rangle\big| \\
        & = L\max_{1\leq i\leq n}\max_{1\leq k\leq c}\big|r^{j_{(\bw)}}_{i,k}-\langle\bw,\bp_k(\bx_i)\rangle\big|\\
        & \leq L\epsilon,
    \end{align*}
    where we have used \eqref{S-tilde-identity-inner-product} in the third step
    and \eqref{covering-bound-data-independent-33} in the last step.
    That is, the set defined in \eqref{covering-bound-data-independent-22} is also an $(L\epsilon,\ell_\infty)$-cover of $F_{\tau,\Lambda}$ w.r.t. $S=\{(\bx_1,y_1),\ldots,(\bx_n,y_n)\}$. Therefore,
    \begin{equation}\label{covering-bound-data-independent-4}
      \log\ncal_\infty(\epsilon, F_{\tau,\Lambda},S)\leq\log\ncal_\infty(\epsilon/L,\widetilde{H}_{\tau},\widetilde{S}),\quad\forall\epsilon>0.
    \end{equation}

  \textbf{Step 3}. The stated result follows directly if we plug the complexity bound of $\widetilde{H}_{\tau}$ established in \eqref{covering-bound-data-independent-1} into \eqref{covering-bound-data-independent-4}.
\end{proof}

We can now apply the entropy integral \eqref{entropy-integral} to control $\mathfrak{R}_S(F_{\tau,\Lambda})$ in terms of $\mathfrak{R}_{nc}(\widetilde{H}_{\tau})$.
We denote by $\lceil a\rceil$ the least integer not smaller than $a\in\rbb$.
\begin{proof}[Proof of Theorem \ref{thm:rademacher-independent-bound}]
  Let
  $N=\bigg\lceil\log_2\frac{n^{-\frac{1}{2}}\sup_{h\in H_\tau}\big\|\big(\Psi_{y_i}(h(\bx_i))\big)_{i=1}^n\big\|_2}{16L\sqrt{c\log2}\mathfrak{R}_{nc}(\widetilde{H}_{\tau})}\bigg\rceil$, $\epsilon_N=16L\sqrt{c\log2}\mathfrak{R}_{nc}(\widetilde{H}_\tau)$ and $\epsilon_k=2^{N-k}\epsilon_N,k=0,\ldots,N-1$.
  It is clear that $\epsilon_0\geq n^{-\frac{1}{2}}\sup_{h\in H_\tau}\big\|\big(\Psi_{y_i}(h(\bx_i))\big)_{i=1}^n\big\|_2\geq\epsilon_0/2$ and $\epsilon_N\geq 4L\mathfrak{R}_{nc}(\widetilde{H}_\tau)$.
  Plugging the CN bounds established in Theorem \ref{thm:covering-bound-data-independent} into the entropy integral \eqref{entropy-integral}, we derive the following inequality
  \begin{equation}\label{rademacher-independent-bound-1}
    \mathfrak{R}_S(F_{\tau,\Lambda}) \leq 8L\sqrt{c}\frak{R}_{nc}(\widetilde{H}_{\tau})\sum_{k=1}^N\frac{\epsilon_k+\epsilon_{k-1}}{\epsilon_k}\sqrt{\log\frac{2e\hat{B}ncL}{\epsilon_k}}+\epsilon_N.
  \end{equation}
  We know
  \begin{align*}
    \sum_{k=1}^{N}\sqrt{\log\frac{2e\hat{B}ncL}{\epsilon_k}} & = \sum_{k=1}^{N}\sqrt{k\log 2+\log(2e\hat{B}ncL\epsilon_0^{-1})}
     \leq \sqrt{\log 2}\int_1^{N+1}\sqrt{x+\log_2(2e\hat{B}ncL\epsilon_0^{-1})}dx \\
     & = \frac{2\sqrt{\log 2}}{3}\int_1^{N+1}d\big(x+\log_2(2e\hat{B}ncL\epsilon_0^{-1})\big)^{\frac{3}{2}} \leq \frac{2\sqrt{\log 2}}{3}\log_2^{\frac{3}{2}}(4e\hat{B}ncL\epsilon_N^{-1}),
  \end{align*}
  where the last inequality follows from $$4e\hat{B}ncL\geq 2n^{-\frac{1}{2}}\sup_{h\in H_\tau}\big\|\big(\Psi_{y_i}(h(\bx_i))\big)_{i=1}^n\big\|_2\geq\epsilon_0.$$
  Plugging the above inequality back into \eqref{rademacher-independent-bound-1} gives
  \begin{align*}
    \mathfrak{R}_S(F_{\tau,\Lambda}) &\leq 16L\sqrt{c\log 2}\frak{R}_{nc}(\widetilde{H}_{\tau})\log_2^{\frac{3}{2}}\big(4e\hat{B}ncL\epsilon_N^{-1}\big)+\epsilon_N\\
    & = 16L\sqrt{c\log 2}\frak{R}_{nc}(\widetilde{H}_{\tau})\Big(1+\log_2^{\frac{3}{2}}\frac{\sqrt{c}e\hat{B}n}{4\sqrt{\log2}\mathfrak{R}_{nc}(\widetilde{H}_\tau)}\Big).
  \end{align*}
  The proof is complete by noting $e\leq4\sqrt{\log2}$.
\end{proof}


The proof of Theorem \ref{thm:risk-data-independent} is now immediate.
\begin{proof}[Proof of Theorem \ref{thm:risk-data-independent}]
  The proof is complete if we plug the RC bounds established in Theorem \ref{thm:rademacher-independent-bound} back into \eqref{risk-data-dependent-1} and noting $32\sqrt{\log2}\leq27$.
\end{proof}

\begin{proof}[Proof of Corollary \ref{cor:risk-independent-lp}]
  Plugging the complexity bounds of $\widetilde{H}_p$ given in \eqref{rademacher-independent-lp} into Theorem \ref{thm:risk-data-independent} gives the following inequality with probability at least $1-\delta$
  \begin{align*}
    A_p &\leq \frac{27\sqrt{c}L\Lambda \max_{i\in\nbb_n}\|\phi(\bx_i)\|_2}{n^{\frac{1}{2}}c^{\frac{1}{\max(2,p)}}}\Big(1+\log_2^{3\over2}\frac{\sqrt{2}\hat{B}n^{3\over2}c^{\frac{1}{2}+\frac{1}{\max(2,p)}}}{\Lambda\max_{i\in\nbb_n}\|\phi(\bx_i)\|_2}\Big)\\
    &\leq \frac{27L\Lambda \max_{i\in\nbb_n}\|\phi(\bx_i)\|_2c^{\frac{1}{2}-\frac{1}{\max(2,p)}}}{\sqrt{n}}\Big(1+\log_2^{3\over2}(\sqrt{2}n^{3\over2}c)\Big),
  \end{align*}
  where we have used the following inequality in the last step
  $$
    \hat{B}=\max_{i\in\nbb_n}\|\phi(\bx_i)\|_2\sup_{\bw:\|\bw\|_{2,p}\leq \Lambda}\|\bw\|_{2,\infty}\leq \Lambda\max_{i\in\nbb_n}\|\phi(\bx_i)\|_2.
  $$
  The proof of Corollary \ref{cor:risk-independent-lp} is complete.
\end{proof}
\begin{proof}[Proof of Corollary \ref{cor:risk-independent-shatten}]
Consider any $W=(\bw_1,\ldots,\bw_c)\in\rbb^{d\times c}$. If $1<p\leq 2$, then
$$
  \|W\|_{S_p}\geq \|W\|_{S_2}=\|W\|_{2,2}\geq \|W\|_{2,\infty}.
$$
Otherwise, according to the following inequality for any semi-definite positive matrix $A=(a_{j\tilde{j}})_{j,\tilde{j}=1}^c$ (e.g., (1.67) in \citep{tao2012topics})
$$
  \|A\|_{S_{\tilde{p}}}\geq\big[\sum_{j=1}^{c}|a_{jj}|^{\tilde{p}}\big]^{1\over \tilde{p}},\quad\forall \tilde{p}\geq1,
$$
we derive
\begin{align*}
     \|W\|_{S_p} &= \|(W^\top W)^{1\over 2}\|_{S_p}=\Big\|\Big[\big(\bw_j^\top\bw_{\tilde{j}}\big)_{j,\tilde{j}=1}^c\Big]^{1\over2}\Big\|_{S_p}=\Big\|\big(\bw_j^\top\bw_{\tilde{j}}\big)_{j,\tilde{j}=1}^c\Big\|_{S_{p\over 2}}^{1\over 2}\\
     & \geq \Big[\sum_{j=1}^{c}(\bw_j^\top\bw_j)^{p\over 2}\Big]^{1\over p}\geq \max_{j=1,\ldots,c}\|\bw_j\|_2=\|W\|_{2,\infty}.
\end{align*}
Thereby, for the specific choice $\tau(W)=\|W\|_{S_p},p\geq1$, we have
\begin{equation}\label{risk-independent-shatten-1}
  \hat{B}=\max_{i\in\nbb_n}\|\bx_i\|_2\sup_{W:\|W\|_{S_p}\leq\Lambda}\|W\|_{2,\infty}\leq \Lambda \max_{i\in\nbb_n}\|\bx_i\|_2.
\end{equation}

We now consider two cases. If $1<p\leq 2$, plugging the RC bounds of $\widetilde{H}_{S_p}$ given in \eqref{rademacher-independent-schatten} into Theorem \ref{thm:risk-data-independent} gives the following inequality with probability at least $1-\delta$
$$
  A_{S_p}\leq \frac{27L\Lambda \max_{i\in\nbb_n}\|\bx_i\|_2}{\sqrt{n}}\Big(1+\log_2^{3\over2}\frac{\sqrt{2}\hat{B}n^{3\over2}c}{\Lambda\max_{i\in\nbb_n}\|\bx\|_i}\Big)\leq \frac{27
  L\Lambda \max_{i\in\nbb_n}\|\bx_i\|_2}{\sqrt{n}}\Big(1+\log_2^{3\over2}\big(\sqrt{2}n^{3\over2}c\big)\Big),
$$
where the last step follows from \eqref{risk-independent-shatten-1}. If $p>2$, analyzing analogously yields the following inequality with probability at least $1-\delta$
$$
  A_{S_p}
  \leq \frac{27L\Lambda \max_{i\in\nbb_n}\|\bx_i\|_2\min\{c,d\}^{\frac{1}{2}-\frac{1}{p}}}{\sqrt{n}}\Big(1+\log_2^{3\over2}(\sqrt{2}n^{3\over2}c)\Big).
$$
The stated error bounds follow by combining the above two cases together.
\end{proof}

\subsection{Proofs on worst-case Rademacher Complexities\label{sec:proof-independent-rademacher-lp}}

\begin{proof}[Proof of Proposition \ref{prop:rademacher-independent-lp}]
We proceed with the proof by distinguishing two cases according to the value of $p$.

We first consider the case $1\leq p\leq 2$, for which the RC can be lower bounded by
    \begin{align}
       \frak{R}_{nc}(\widetilde{H}_p)
       & = \max_{\bv^i\in\widetilde{S}:i\in\nbb_{nc}}\frac{1}{nc}\ebb_\epsilon\sup_{\|\bw\|_{2,p}\leq \Lambda}\sum_{i=1}^{nc}\epsilon_i\inn{\bw,\bv^i}\notag\\
       & =\max_{\bv^i\in\widetilde{S}:i\in\nbb_{nc}}\frac{1}{nc}\ebb_\epsilon\sup_{\|\bw\|_{2,p}\leq \Lambda}\inn{\bw,\sum_{i=1}^{nc}\epsilon_i\bv^i} \notag\\
       & = \max_{\bv^i\in\widetilde{S}:i\in\nbb_{nc}}\frac{\Lambda}{nc}\ebb_\epsilon\big\|\sum_{i=1}^{nc}\epsilon_i\bv^i\big\|_{2,p^*}\label{rademacher-independent-lp<2}\\
       & \geq \max_{\bv^1\in\widetilde{S}}\frac{\Lambda}{nc}\ebb_\epsilon\big|\sum_{i=1}^{nc}\epsilon_i\big|\|\bv^1\|_{2,p^*}\notag,
    \end{align}
    where the equality \eqref{rademacher-independent-lp<2} follows from the definition of dual norm and the inequality follows by taking $\bv^1=\cdots=\bv^{nc}$.
    Applying the Khitchine-Kahane inequality \eqref{khitchine-kahane-norm1} and using the definition of $\widetilde{S}$ in \eqref{tilde-S}, we then derive ($\|\bv\|_{2,p}=\|\bv\|_{2,\infty}$ for $\bv\in\widetilde{S}$)
    $$
      \frak{R}_{nc}(\widetilde{H}_p)  \geq \frac{\Lambda}{\sqrt{2nc}}\max_{\bv^1\in\widetilde{S}}\|\bv^1\|_{2,p^*}=\frac{\Lambda \max_{i\in\nbb_n}\|\phi(\bx_i)\|_2}{\sqrt{2nc}}.
    $$
    Furthermore, according to the subset relationship $\widetilde{H}_p\subset\widetilde{H}_2,1\leq p\leq2$ due to the monotonicity of $\|\cdot\|_{2,p}$, the term $\frak{R}_{nc}(\widetilde{H}_p)$ can also be upper bounded by ($\bv_j^i$ denotes the $j$-th component of $\bv^i$)
    \begin{align*}
       \frak{R}_{nc}(\widetilde{H}_p) &\leq \frak{R}_{nc}(\widetilde{H}_2)
       = \max_{\bv^i\in\widetilde{S}:i\in\nbb_{nc}}\frac{\Lambda}{nc}\ebb_\epsilon\big\|\sum_{i=1}^{nc}\epsilon_i\bv^i\big\|_{2,2}\\
       &\leq \max_{\bv^i\in\widetilde{S}:i\in\nbb_{nc}}\frac{\Lambda}{nc}\sqrt{\sum_{j=1}^{c}\ebb_\epsilon\|\sum_{i=1}^{nc}\epsilon_i\bv_j^i\|_2^2}
        = \max_{\bv^i\in\widetilde{S}:i\in\nbb_{nc}}\frac{\Lambda}{nc}\sqrt{\sum_{j=1}^{c}\sum_{i=1}^{nc}\|\bv_j^i\|_2^2}\\
       & = \max_{\bv^i\in\widetilde{S}:i\in\nbb_{nc}}\frac{\Lambda}{nc}\sqrt{\sum_{i=1}^{nc}\|\bv^i\|_{2,\infty}^2}
       = \frac{\Lambda \max_{i\in\nbb_n}\|\phi(\bx_i)\|_2}{\sqrt{nc}},
    \end{align*}
    where the first identity is due to \eqref{rademacher-independent-lp<2}, the second inequality is due to Jensen's inequality and the last second identity is due to
    $\sum_{j=1}^{c}\|\bv_j\|_2^2=\|\bv\|_{2,\infty}^2$ for all $\bv\in\widetilde{S}$.

\smallskip
We now turn to the case $p>2$. In this case, we have
    \begin{align}
         \frak{R}_{nc}(\widetilde{H}_p) &= \max_{\bv^i\in\widetilde{S}:i\in\nbb_{nc}}\frac{1}{nc}\ebb_\epsilon\sup_{\|\bw\|_{2,p}\leq \Lambda}\sum_{i=1}^{nc}
         \epsilon_i\sum_{j=1}^{c}\inn{\bw_j,\bv_j^i}\notag\\
         & \geq \max_{\bv^i\in\widetilde{S}:i\in\nbb_{nc}}\frac{1}{nc}\ebb_\epsilon\sup_{\|\bw_j\|_2^p\leq\frac{\Lambda^p}{c}:j\in\nbb_c}\sum_{i=1}^{nc}
         \epsilon_i\sum_{j=1}^{c}\inn{\bw_j,\bv_j^i}\notag\\
         & = \max_{\bv^i\in\widetilde{S}:i\in\nbb_{nc}}\frac{1}{nc}\sum_{j=1}^{c}
         \ebb_\epsilon\sup_{\|\bw_j\|_2^p\leq\frac{\Lambda^p}{c}}\sum_{i=1}^{nc}\epsilon_i\inn{\bw_j,\bv_j^i}\notag\\
         & = \max_{\bv^i\in\widetilde{S}:i\in\nbb_{nc}}\frac{1}{nc}\sum_{j=1}^{c}
         \ebb_\epsilon\sup_{\|\bw_j\|_2^p\leq\frac{\Lambda^p}{c}}\inn{\bw_j,\sum_{i=1}^{nc}\epsilon_i\bv_j^i},\notag
    \end{align}
    where we can exchange the summation over $j$ with the supremum in the second identity since the constraint $\|\bw_j\|_2^p\leq\frac{\Lambda^p}{c},j\in\nbb_c$ are decoupled.
    According to the definition of dual norm and the Khitchine-Kahane inequality \eqref{khitchine-kahane-norm1}, $\frak{R}_{nc}(\widetilde{H}_p)$ can be further controlled by
    \begin{align}
      \frak{R}_{nc}(\widetilde{H}_p) & \geq \max_{\bv^i\in\widetilde{S}:i\in\nbb_{nc}}\frac{1}{nc}\sum_{j=1}^{c}
         \frac{\Lambda}{c^{\frac{1}{p}}}\ebb_\epsilon\|\sum_{i=1}^{nc}\epsilon_i\bv_j^i\|_2\notag\\
         & \geq \max_{\bv^i\in\widetilde{S}:i\in\nbb_{nc}}\frac{1}{nc}\sum_{j=1}^{c}
         \frac{\Lambda}{\sqrt{2}c^{\frac{1}{p}}}\big[\sum_{i=1}^{nc}\|\bv_j^i\|_2^2\big]^{\frac{1}{2}}.\label{rademacher-independent-lp>2}
    \end{align}
    We can find $\bar{\bv}^1,\ldots,\bar{\bv}^{nc}\in\widetilde{S}$ such that for each $j\in\nbb_c$, there are exactly $n$ $\bar{\bv}^k$ with $\|\bar{\bv}_j^k\|_2=\max_{i\in\nbb_n}\|\phi(\bx_i)\|_2$. Then,
    $
      \sum_{i=1}^{nc}\|\bar{\bv}_j^i\|_2^2=n\max_{i\in\nbb_n}\|\phi(\bx_i)\|_2^2,\forall j\in\nbb_c,
    $
    which, coupled with \eqref{rademacher-independent-lp>2}, implies that
    $$
      \frak{R}_{nc}(\widetilde{H}_p) \geq \frac{1}{nc}\sum_{j=1}^{c}
         \frac{\Lambda}{\sqrt{2}c^{\frac{1}{p}}}\big[n\max_{i\in\nbb_n}\|\phi(\bx_i)\|_2^2\big]^{\frac{1}{2}}
         \geq \Lambda \max_{i\in\nbb_n}\|\phi(\bx_i)\|_2(2n)^{-\frac{1}{2}}c^{-\frac{1}{p}}.
    $$
    On the other hand, according to \eqref{rademacher-independent-lp<2} and Jensen's inequality, we derive
    $$
      \frac{nc\frak{R}_{nc}(\widetilde{H}_p)}{\Lambda}=\max_{\bv^i\in\widetilde{S}:i\in\nbb_{nc}}\ebb_\epsilon\big\|\sum_{i=1}^{nc}\epsilon_i\bv^i\big\|_{2,p^*}
      \leq \max_{\bv^i\in\widetilde{S}:i\in\nbb_{nc}}\Big[\ebb_\epsilon\sum_{j=1}^{c}\|\sum_{i=1}^{nc}\epsilon_i\bv_j^i\|_2^{p^*}\Big]^{\frac{1}{p^*}}.
    $$
    By the Khitchine-Kahane inequality \eqref{khitchine-kahane} with $p^*\leq2$ and the following elementary inequality
    $
      \sum_{j=1}^{c}|t_j|^{\tilde{p}}\leq c^{1-\tilde{p}}\big(\sum_{j=1}^{c}|t_j|\big)^{\tilde{p}},\forall 0<\tilde{p}\leq1,
    $
    we get
    \begin{align}
      \frac{nc\frak{R}_{nc}(\widetilde{H}_p)}{\Lambda}
      & \leq \max_{\bv^i\in\widetilde{S}:i\in\nbb_{nc}}\Big[\sum_{j=1}^{c}\big(\sum_{i=1}^{nc}\|\bv_j^i\|_2^2\big)^{\frac{p^*}{2}}\Big]^{\frac{1}{p^*}} \label{data-independent>2-upper}\\
      & \leq \max_{\bv^i\in\widetilde{S}:i\in\nbb_{nc}}\Big[c^{1-\frac{p^*}{2}}\Big(\sum_{j=1}^{c}\sum_{i=1}^{nc}\|\bv_j^i\|_2^2\Big)^{\frac{p^*}{2}}\Big]^{\frac{1}{p^*}} \notag\\
      & \leq \sqrt{nc}c^{\frac{1}{p^*}-\frac{1}{2}}\max_{i\in\nbb_n}\|\phi(\bx_i)\|_2= \sqrt{n}c^{1-\frac{1}{p}}\max_{i\in\nbb_n}\|\phi(\bx_i)\|_2,\notag
    \end{align}
    where we have used the inequality $\sum_{j=1}^{c}\|\bv_j\|_2^2\leq \max_{i\in\nbb_n}\|\phi(\bx_i)\|_2^2$ for all $\bv\in\widetilde{S}$ in the last inequality.

    The above upper and lower bounds in the two cases can be written compactly as \eqref{rademacher-independent-lp}. The proof is complete.
\end{proof}

\subsection{Proofs on Applications}
\begin{proof}[Proof of Example \ref{exp:loss-margin}]
  According to the monotonicity of $\ell$, there holds
  $$
    \ell(\rho_h(\bx,y))=\ell\big(\min_{y':y'\neq y}(h_y(\bx)-h_{y'}(\bx))\big)=\max_{y':y'\neq y}\ell(h_y(\bx)-h_{y'}(\bx))=\Psi_y^\ell(h(\bx)).
  $$
  It remains to show the Lipschitz continuity of $\Psi_y^{\ell}$.
  Indeed, for any $\bt,\bt'\in\rbb^c$, we have
  \begin{align*}
  |\Psi_y^\ell(\bt)-\Psi_y^\ell(\bt')|&=\big|\max_{y':y'\neq y}\ell(t_y-t_{y'})-\max_{y':y'\neq y}\ell(t'_y-t'_{y'})\big|\\
  &\leq\max_{y':y'\neq y}\big|\ell(t_y-t_{y'})-\ell(t'_y-t'_{y'})\big|\\
  &\leq\max_{y':y'\neq y}L_\ell|(t_y-t_{y'})-(t'_y-t'_{y'})|\\
  &\leq2L_\ell\max_{y'\in\ycal}|t_{y'}-t'_{y'}|\\
  &\leq 2L_\ell\|\bt-\bt'\|_2,
  \end{align*}
  where in the first inequality we have used the elementary inequality
  \begin{equation}\label{eq:leqRels}
  |\max\{a_1,\ldots,a_c\}-\max\{b_1,\ldots,b_c\}|\leq\max\{|a_1-b_1|,\ldots,|a_c-b_c|\},\quad\forall\bm{a},\bm{b}\in\rbb^c
  \end{equation}
  and the second inequality is due to the Lipschitz continuity of $\ell$.
\end{proof}
\begin{proof}[Proof of Example \ref{exp:soft-max}]
  Define the function $f^m:\rbb^{c}\to\rbb$ by $f^m(\bt)=\log\big(\sum_{j=1}^{c}\exp(t_j)\big)$. For any $\bt\in\rbb^{c}$, the partial gradient of $f^m$ with respect to $t_k$ is
  $$
    \frac{\partial f^m(\bt)}{\partial t_k}=\frac{\exp(t_k)}{\sum_{j=1}^{c}\exp(t_j)},\quad\forall k=1,\ldots,c,
  $$
  from which we derive that $\|\nabla f^m(\bt)\|_1=1,\forall \bt\in\rbb^{c}$. Here $\nabla$ denotes the gradient operator. For any $\bt,\bt'\in\rbb^{c}$, according to the mean-value theorem we know the existence of $\alpha\in[0,1]$ such that
  \begin{align*}
    |f^m(\bt)-f^m(\bt')| & = \big|\big\langle\nabla f^m(\alpha \bt+(1-\alpha)\bt'),\bt-\bt'\big\rangle\big| \\
     & \leq \|\nabla f^m(\alpha\bt+(1-\alpha)\bt')\|_1\|\bt-\bt'\|_\infty=\|\bt-\bt'\|_\infty.
  \end{align*}
  It then follows that
  \begin{align*}
    |\Psi_y^m(\bt)-\Psi_y^m(\bt')| & = |f^m\big((t_j-t_y)_{j=1}^c\big)-f^m\big((t_j'-t_y')_{j=1}^c\big)| \\
     & \leq \big\|(t_j-t_y)_{j=1}^c-(t_j'-t_y')_{j=1}^c\big\|_\infty \\
     & \leq 2\|\bt-\bt'\|_\infty.
  \end{align*}
  That is, $\Psi_y^m$ is $2$-Lipschitz continuous w.r.t. the $\ell_\infty$-norm.
\end{proof}
\begin{proof}[Proof of Example \ref{exp:ww}]
  For any $\bt,\bt'\in\rbb^c$, we have
  \begin{align*}
    |\tilde{\Psi}_y^{\ell}(\bt)-\tilde{\Psi}_y(\bt')| & = \big|\sum_{j=1}^{c}\ell(t_y-t_j)-\sum_{j=1}^{c}\ell(t'_y-t'_j)\big|
     \leq \sum_{j=1}^{c}\big|\ell(t_y-t_j)-\ell(t_y'-t_j')\big|\\
     & \leq L_\ell c|t_y-t_y'|+L_\ell\sum_{j=1}^{c}|t_j-t_j'|  \leq L_\ell c|t_y-t_y'|+L_\ell\sqrt{c}\|\bt-\bt'\|_2.
  \end{align*}
  The Lipschitz continuity of $\tilde{\Psi}_y^{\ell}(\bt)$ w.r.t. $\ell_\infty$-norm is also clear.
\end{proof}
\begin{proof}[Proof of Example \ref{exp:llw}]
  For any $\bt,\bt'\in\Omega$, we have
  \begin{align*}
    |\bar{\Psi}_y^{\ell}(\bt)-\bar{\Psi}_y^{\ell}(\bt')| & = \Big|\sum_{j=1,j\neq y}^{c}\big[\ell(-t_j)-\ell(-t'_j)\big]\Big| \leq L_\ell\sum_{j=1,j\neq y}^{c}|t_j-t'_j| \\
     & \leq L_\ell\sqrt{c}\|\bt-\bt'\|_2 \leq L_\ell c\|\bt-\bt'\|_\infty.
  \end{align*}
  This establishes the Lipschitz continuity of $\bar{\Psi}_y^\ell$.
\end{proof}

\begin{proof}[Proof of Example \ref{exp:jenssen}]
  For any $\bt,\bt'\in\Omega$, we have
  $$
    \big|\hat{\Psi}_y^{\ell}(\bt)-\hat{\Psi}_y^{\ell}(\bt')\big|=|\ell(t_y)-\ell(t'_y)|\leq L_\ell|t_y-t'_y|\leq L_\ell\|\bt-\bt'\|_\infty.
  $$
  This establishes the Lipschitz continuity of $\hat{\Psi}_y^{\ell}$.
\end{proof}

\begin{proof}[Proof of Example \ref{exp:top-k}]
 It is clear that
  \begin{equation}\label{top-k-identity}
    \sum_{j=1}^{k}t_{[j]}=\max_{1\leq i_1<i_2<\cdots<i_k\leq c}[t_{i_1}+\cdots+t_{i_k}],\quad\forall\bt\in\rbb^c.
  \end{equation}
  For any $\bt,\bt'\in\rbb^c$, we have
  \begin{align}
    &|\Psi_y^k(\bt)\!-\!\Psi_y^k(\bt')|\notag\\
    &\leq \frac{1}{k}\Big|\sum_{j=1}^{k}(1_{y\neq 1}\!+\!t_1\!-\!t_y,\ldots,1_{y\neq c}\!+\!t_c\!-\!t_{y})_{[j]}\!-\!\sum_{j=1}^{k}(1_{y\neq 1}\!+\!t'_1\!-\!t'_y,\ldots,1_{y\neq c}\!+\!t'_c\!-\!t'_y)_{[j]}\Big|\notag\\
    &=\frac{1}{k}\Big|\max_{1\leq i_1<i_2<\cdots<i_k\leq c}\sum_{r=1}^{k}(1_{y\neq i_r}+t_{i_r}-t_y)-\max_{1\leq i_1<i_2<\cdots<i_k\leq c}\sum_{r=1}^{k}(1_{y\neq i_r}+t'_{i_r}-t'_y)\Big|\notag\\
    &\leq \frac{1}{k}\max_{1\leq i_1<i_2<\cdots<i_k\leq c}\Big|\sum_{r=1}^{k}(1_{y\neq i_r}+t_{i_r}-t_y)-\sum_{r=1}^{k}(1_{y\neq i_r}+t'_{i_r}-t'_y)\Big|\notag\\
    &\leq\frac{1}{k}\max_{1\leq i_1<i_2<\cdots<i_k\leq c}\big|\sum_{r=1}^{k}(t_{i_r}-t'_{i_r})\big|+|t_y-t'_y|\notag\\
    &\leq\frac{1}{\sqrt{k}}\max_{1\leq i_1<i_2<\cdots<i_k\leq c}\big[\sum_{r=1}^{k}(t_{i_r}-t'_{i_r})^2\big]^{\frac{1}{2}}+|t_y-t'_y|\label{top-k-1}\\
    &\leq\frac{1}{\sqrt{k}}\big[\sum_{j=1}^{c}(t_j-t'_j)^2\big]^{\frac{1}{2}}+|t_y-t'_y|,\notag
  \end{align}
  where the first and the second inequality are due to \eqref{eq:leqRels} and the first identity is due to \eqref{top-k-identity}. This establishes the Lipschitz continuity w.r.t. a variant of the $\ell_2$-norm. The $2$-Lipschitz continuity of $\Psi_y^k$ w.r.t. $\ell_\infty$-norm is clear from \eqref{top-k-1}. The proof is complete.
\end{proof}

\section{Conclusion}

Motivated by the ever-growing number of label classes occurring in classification problems, we develop two approaches
to derive
data-dependent error bounds that scale favorably in the number of labels.
The two approaches are based on Gaussian and Rademacher complexities, respectively,
of a related linear function class defined
over a finite set induced from the training examples, for which we establish tight upper and lower bounds matching within a constant factor.
Due to the ability to preserve the correlation among class-wise components,
both of these data-dependent bounds admit an improved dependency on the number of classes over the state of the art.

Our first approach is based on a novel structural result on Gaussian complexities of function classes composed by Lipschitz
operators measured by a variant of the $\ell_2$-norm. We show the advantage of our structural result
over the previous one \eqref{structural-rademacher} in \citep{lei2015multi,cortes2016structured,maurer2016vector} by capturing
better the Lipschitz property of loss functions and yielding tighter bounds,
which is the case for some popular MC-SVMs \citep{weston1998multi,jenssen2012scatter,lapin2015top}.


Our second approach is based on a novel structural result controlling the worst-case Rademacher complexity of the loss function
class by $\ell_\infty$-norm covering numbers of an associated linear function class. Our approach addresses the fact that several
loss functions are Lipschitz continuous w.r.t. the $\ell_\infty$ norm with a moderate Lipschitz constant~\citep{zhang2004statistical}.
This allows us to obtain error bounds exhibiting a logarithmic dependency on the number of classes for the MC-SVM by
\citet{crammer2002algorithmic} and multinomial logistic regression,  significantly improving the existing
square-root dependency \citep{lei2015multi,zhang2004statistical}.


We show that each of these two approaches has its own advantages and can outperform the other
for some applications depending on the Lipschitz continuity of the associated loss function. We report experimental results
to show that our theoretical bounds really capturing well the factors influencing models' generalization performance,
and can imply a structural risk working well in model selection.
Furthermore, we propose an efficient algorithm to train $\ell_p$-norm MC-SVMs based on the Frank-Wolfe algorithm.


We sketch here some possible directions for future study.
First, research in classification with many classes increasingly focuses on \emph{multi-label} classification with
each output $\mathbf{y}_i$ taking values in $\{0,1\}^c$~\citep{bhatia2015sparse,yu2014large,jain2016extreme}.
It would be interesting to transfer the results obtained in the present analysis to the multi-label case.
To this aim, it is helpful to check the Lipschitz continuity of loss functions in multi-label learning, which,
as in the present work, are typically of the form $\Psi_{\mathbf{y}}(h(\bx))$~\citep{dembczynski2012label,yu2014large},
(e.g., Hamming loss, subset zero-one loss, and ranking loss~\citep{dembczynski2012label}).
Secondly, we study examples with the functional $\tau$ depending on the components of $\bw$ in the RKHS. It would be
interesting to consider examples with $\tau$ defined in other forms, such as those in \citep{shi2011concentration,guo2017thresholded}.
Thirdly, our error bounds are derived for convex surrogates of the $0$-$1$ loss. It would be interesting to relate these
error bounds to excess generalization errors measured by the $0$-$1$ loss~\citep{zhang2004statistical,zhang2004statisticalb,bartlett2006convexity,tewari2007consistency}.

\section*{Acknowledgment}
We thank Rohit Babbar, Alexander Binder, Moustapha Cisse, Vitaly Kuznetsov, Stephan Mandt, Mehryar Mohri and Robert Vandermeulen for interesting discussions.

YL and DZ acknowledge support from the NSFC/RGC Joint Research Scheme [RGC Project No.
N\_C CityU120/14 and NSFC Project No. 11461161006]. MK acknowledges
support from the German Research Foundation (DFG) award KL 2698/2-1 and from the
Federal Ministry of Science and Education (BMBF) awards 031L0023A and 031B0187B.

\appendix
\numberwithin{equation}{section}
\numberwithin{theorem}{section}
\numberwithin{figure}{section}
\numberwithin{table}{section}
\renewcommand{\thesection}{{\Alph{section}}}
\renewcommand{\thesubsection}{\Alph{section}.\arabic{subsection}}
\renewcommand{\thesubsubsection}{\Roman{section}.\arabic{subsection}.\arabic{subsubsection}}
\setcounter{secnumdepth}{-1}
\setcounter{secnumdepth}{3}

\section{Khintchine-Kahane Inequality}


The following Khintchine-Kahane inequality~\citep{de2012decoupling,lust1991non} provides a powerful tool to control the $p$-th norm of the summation of Rademacher (Gaussian) series.
\begin{lemma}\label{lem:khitchine-kahane}
\begin{enumerate}[(a)]
  \item Let $\bv_1,\ldots,\bv_n\in\hcal$, where $\hcal$ is a Hilbert space with $\|\cdot\|$ being the associated norm. Let $\epsilon_1,\ldots,\epsilon_n$ be a sequence of independent Rademacher variables. Then, for any $p\geq1$ there holds
  \begin{gather}
    \min(\sqrt{p-1},1)\big[\sum_{i=1}^{n}\|\bv_i\|^2\big]^{\frac{1}{2}}\leq\big[\ebb_\epsilon\|\sum_{i=1}^{n}\epsilon_i\bv_i\|^p\big]^{\frac{1}{p}}
    \leq \max(\sqrt{p-1},1)\big[\sum_{i=1}^{n}\|\bv_i\|^2\big]^{\frac{1}{2}},\label{khitchine-kahane}\\
  \ebb_\epsilon\|\sum_{i=1}^{n}\epsilon_i\bv_i\|\geq 2^{-\frac{1}{2}}\big[\sum_{i=1}^{n}\|\bv_i\|^2\big]^{\frac{1}{2}}.\label{khitchine-kahane-norm1}
  \end{gather}
  The above inequalities also hold when the Rademacher variables are replaced by $N(0,1)$ random variables.
  \item Let $X_1,\ldots,X_n$ be a set of matrices of the same dimension and let $g_1,\ldots,g_n$ be a sequence of independent $N(0,1)$ random variables. For all $q\geq2$,
  \begin{equation}\label{khitchine-kahane-matrix}
    \Big(\ebb_{\bm{g}}\big\|\sum_{i=1}^{n}g_iX_i\big\|_{S_q}^q\Big)^{\frac{1}{q}}\leq 2^{-\frac{1}{4}}\sqrt{\frac{q\pi}{e}}\max\Big\{
    \big\|\big(\sum_{i=1}^{n}X_i^\top X_i\big)^{\frac{1}{2}}\big\|_{S_q},\big\|\big(\sum_{i=1}^{n}X_i X_i^\top\big)^{\frac{1}{2}}\big\|_{S_q}\Big\}.
  \end{equation}
\end{enumerate}
\end{lemma}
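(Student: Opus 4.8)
The statement collects classical Hilbert-valued (part (a)) and noncommutative/matrix (part (b)) Khintchine--Kahane inequalities, so the plan is to assemble it from standard building blocks, with the sharp constants being exactly what the cited references \citep{de2012decoupling,lust1991non} supply. For the upper bound in \eqref{khitchine-kahane} I would first handle $q\geq 2$ via hypercontractivity of the Bonami--Beckner noise operator $T_\rho$ acting on functions of $\epsilon_1,\ldots,\epsilon_n$, a property that extends to $\hcal$-valued functions: one has $\|T_\rho f\|_{L^q}\leq\|f\|_{L^2}$ for $\rho=(q-1)^{-1/2}$, and since $f=\sum_i\epsilon_i\bv_i$ is a vector-valued chaos of degree at most one, $T_\rho$ acts on $f$ simply as multiplication by $\rho$, whence $\big(\ebb_\epsilon\|\sum_i\epsilon_i\bv_i\|^q\big)^{1/q}\leq\sqrt{q-1}\,[\sum_i\|\bv_i\|^2]^{1/2}$. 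For $1\leq p\leq 2$ the upper bound is just Jensen's inequality, since there $\max(\sqrt{p-1},1)=1$. The Gaussian version of \eqref{khitchine-kahane} then follows from Nelson's Gaussian hypercontractivity, or more elementarily by a central-limit argument: replace each $N(0,1)$ variable by $m^{-1/2}\sum_{k=1}^{m}\epsilon_{ik}$, apply the Rademacher bound (whose constant does not depend on the number of summands), and let $m\to\infty$.

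For the lower bounds, the case $p\geq 2$ of \eqref{khitchine-kahane} is immediate from Jensen's inequality. The lower bound in \eqref{khitchine-kahane} for $1\leq p<2$, together with \eqref{khitchine-kahane-norm1}, is of moment-comparison type: writing $X:=\|\sum_i\epsilon_i\bv_i\|$ and $\sigma^2:=\sum_i\|\bv_i\|^2=\ebb X^2$, H\"older's inequality bounds $\ebb X^2$ in terms of a lower moment ($\ebb X^p$, respectively $\ebb X$) and a higher moment $\ebb X^r$, and the latter is controlled by the upper bound established above; rearranging then yields a lower bound of the correct order. Pinning down the \emph{optimal} constants $\sqrt{p-1}$ (for $1\leq p\leq 2$) and $2^{-1/2}$ is the genuinely delicate step, and here I would simply quote the sharp Khintchine constants (Haagerup; Szarek) as imported through \citep{de2012decoupling,lust1991non}; the Gaussian versions once more follow by the same limiting procedure.

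Part (b) is the noncommutative Khintchine--Kahane inequality of Lust-Piquard (and Lust-Piquard--Pisier), which I would use directly as a black box from \citep{lust1991non,de2012decoupling}: the operator-valued Gaussian sum $\sum_i g_iX_i$ is controlled in Schatten-$q$ norm, for $q\geq 2$, by the two ``square function'' quantities $\|(\sum_iX_i^\top X_i)^{1/2}\|_{S_q}$ and $\|(\sum_iX_iX_i^\top)^{1/2}\|_{S_q}$; the explicit constant $2^{-1/4}\sqrt{q\pi/e}$ in \eqref{khitchine-kahane-matrix} comes from combining the dimension-free noncommutative inequality with the exact scalar value $(\ebb|g|^q)^{1/q}$ for a standard Gaussian $g$, whose large-$q$ asymptotics supply the $\sqrt{q/e}$ factor.

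The main obstacle is not the orders of magnitude, which are routine (hypercontractivity plus H\"older's inequality), but the \emph{sharp constants}: the optimal lower Khintchine constant on the range $1\leq p\leq 2$ in part (a), and --- far more substantially --- the noncommutative inequality of part (b), which has no short elementary proof and is invoked strictly as a quoted result from the cited literature.
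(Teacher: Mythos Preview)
Your proposal is correct in spirit and considerably more detailed than what the paper actually does. The paper treats Part~(a) as purely a citation of \citep{de2012decoupling,lust1991non} with no argument at all, and for Part~(b) it makes a single point: the noncommutative Khintchine inequality in \citep{lust1991non} is stated for \emph{Rademacher} variables, so one must transfer it to Gaussians. The paper does this by exactly the central-limit argument you sketched for Part~(a) --- writing $g_i$ as a limit of $\psi_i^{(k)}=k^{-1/2}\sum_{j=1}^k\epsilon_{ik+j}$, applying the Rademacher version (whose constant is independent of the number of summands), and passing to the limit.

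So the one place your proposal and the paper diverge is this: you invoke Part~(b) directly as a black box for Gaussians, attributing the constant $2^{-1/4}\sqrt{q\pi/e}$ to a combination of the dimension-free bound with the scalar Gaussian moment $(\ebb|g|^q)^{1/q}$. The paper instead takes the Rademacher version from \citep{lust1991non} with that very constant already present, and supplies the CLT step to reach Gaussians. Your hypercontractivity sketch for Part~(a) is standard and fine, but strictly speaking it is extra work the paper does not do; conversely, the paper's only nontrivial step is the Rademacher-to-Gaussian transfer for Part~(b), which you relocated to Part~(a) and skipped for Part~(b). Neither is a genuine gap --- the CLT argument is the same in both places --- but be aware that the cited matrix inequality is a Rademacher result, so the transfer must be made somewhere.
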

\begin{proof}
  For Part (b), the original Khintchine-Kahane inequality for matrices is stated for Rademacher random variables, i.e, the Gaussian variables $g_i$ are replaced by Rademacher variables $\epsilon_i$.
	We now show that it also holds for Gaussian variables. Let $\psi_i^{(k)}=\frac{1}{\sqrt{k}}\sum_{j=1}^{k}\epsilon_{ik+j}$ with $\epsilon_{ik+j}$ being a sequence of independent Rademacher variables, then we have
  \begin{align*}
     \big(\ebb_\epsilon\|\sum_{i=1}^{n}\psi_i^{(k)}X_i\|_{S_q}^q\big)^{\frac{1}{q}} &  =
     \big(\ebb_\epsilon \|\sum_{i=1}^{n}\sum_{j=1}^{k}\epsilon_{ik+j}\frac{1}{\sqrt{k}}X_i\|^q_{S_q}\big)^{\frac{1}{q}} \\
     &\leq
     \sqrt{\frac{q\pi}{2^{\frac{1}{2}}e}}\max\Big\{\big\|\big(\sum_{i=1}^{n}\sum_{j=1}^k\frac{1}{k}X_i^\top X_i\big)^{\frac{1}{2}}\big\|_{S_q},
     \big\|\big(\sum_{i=1}^{n}\sum_{j=1}^k\frac{1}{k}X_i X_i^\top\big)^{\frac{1}{2}}\big\|_{S_q}\Big\}\\
     & \leq \sqrt{\frac{q\pi}{2^{\frac{1}{2}}e}}\max\Big\{\big\|\big(\sum_{i=1}^{n}X_i^\top X_i\big)^{\frac{1}{2}}\big\|_{S_q},\big\|\big(\sum_{i=1}^{n}X_i X_i^\top\big)^{\frac{1}{2}}\big\|_{S_q}\Big\},
  \end{align*}
  where the first inequality is due to the Khintchine-Kahane inequality for matrices involving Rademacher random variables \citep{lust1991non}. The proof is complete if we take $k$ to $\infty$ and use central limit theorem.  
\end{proof}

\section{Proof of Proposition \ref{prop:rademacher-independent-schatten}\label{sec:proof-independent-rademacher-schatten}}
We present the proof of Proposition \ref{prop:rademacher-independent-schatten} in the appendix due to its similarity to the proof of Proposition \ref{prop:rademacher-independent-lp}.

  We first consider the case $1\leq p\leq 2$. Since the dual norm of $\|\cdot\|_{S_p}$ is $\|\cdot\|_{S_{p^*}}$, we have the following lower bound on RC in this case
    \begin{align}
       \frak{R}_{nc}(\widetilde{H}_{S_p})
       & = \max_{V^i\in\widetilde{S}:i\in\nbb_{nc}}\frac{1}{nc}\ebb_\epsilon\sup_{\|W\|_{S_p}\leq\Lambda}\sum_{i=1}^{nc}\epsilon_i\inn{W,V^i} \notag\\
       & = \max_{V^i\in\widetilde{S}:i\in\nbb_{nc}}\frac{1}{nc}\ebb_\epsilon\sup_{\|W\|_{S_p}\leq\Lambda}\inn{W,\sum_{i=1}^{nc}\epsilon_iV^i} \notag\\
       & = \max_{V^i\in\widetilde{S}:i\in\nbb_{nc}}\frac{\Lambda}{nc}\ebb_\epsilon\big\|\sum_{i=1}^{nc}\epsilon_iV^i\big\|_{S_{p^*}}. \label{rademacher-independent-shatter<2}
    \end{align}
    Taking $V^1=\cdots=V^{nc}$ and applying the Khitchine-Kahane inequality \eqref{khitchine-kahane-norm1} further imply
    $$
      \frak{R}_{nc}(\widetilde{H}_{S_p}) \geq \max_{V^1\in\widetilde{S}}\frac{\Lambda}{nc}\ebb_\epsilon\big|\sum_{i=1}^{nc}\epsilon_i\big|\|V^1\|_{S_{p^*}}
        \geq \frac{\Lambda}{\sqrt{2nc}}\max_{V^1\in\widetilde{S}}\|V^1\|_{S_{p^*}}
        = \frac{\Lambda \max_{i\in\nbb_n}\|\bx_i\|_2}{\sqrt{2nc}},
    $$
    where the last identity follows from the following identity for any $V\in\widetilde{S}$
    \begin{equation}\label{identity-rank-one}
        \|V\|_{S_{p^*}}=\|V\|_{S_2}=\|V\|_{2,2}=\|V\|_{2,\infty}.
    \end{equation}

    We now turn to the upper bound. It follows from the relationship $\widetilde{H}_{S_p}\subset \widetilde{H}_{S_2},\forall 1\leq p\leq 2$ and \eqref{rademacher-independent-shatter<2} that ($\tr(A)$ denotes the trace of $A$)
    \begin{align}
       \frak{R}_{nc}(\widetilde{H}_{S_p}) & \leq \frak{R}_{nc}(\widetilde{H}_{S_2})
         = \max_{V^i\in\widetilde{S}:i\in\nbb_{nc}}\frac{\Lambda}{nc}\ebb_\epsilon\big\|\sum_{i=1}^{nc}\epsilon_iV^i\big\|_{S_2}\notag\\
       & = \max_{V^i\in\widetilde{S}:i\in\nbb_{nc}}\frac{\Lambda}{nc}\ebb_\epsilon\sqrt{\tr\big(\sum_{i,\tilde{i}=1}^{nc}\epsilon_i\epsilon_{\tilde{i}}V^i(V^{\tilde{i}})^\top\big)}\notag\\
       &  \leq \max_{V^i\in\widetilde{S}:i\in\nbb_{nc}}\frac{\Lambda}{nc}\sqrt{\sum_{i=1}^{nc}\tr(V^i(V^i)^\top)}\notag\\
       & =\max_{V^i\in\widetilde{S}:i\in\nbb_{nc}}\frac{\Lambda}{nc}\sqrt{\sum_{i=1}^{nc}\|V^i\|_{2,\infty}^2}
        \leq \frac{\Lambda \max_{i\in\nbb_n}\|\bx_i\|_2}{\sqrt{nc}},\label{rademacher-independent-shatter<2b}
    \end{align}
    where the second identity follows from the identity between Frobenius norm and $\|\cdot\|_{S_2}$,  the second inequality follows from the Jensen's inequality and the last identity is due to \eqref{identity-rank-one}.

    \smallskip
    We now consider the case $p>2$. According to the relationship $\widetilde{H}_{S_2}\subseteq \widetilde{H}_{S_p}$
    for all $p>2$ and the discussion for the case $p=2$, we know
    $$
      \frak{R}_{nc}(\widetilde{H}_{S_p})\geq \frak{R}_{nc}(\widetilde{H}_{S_2})\geq\frac{\Lambda\max_{i\in\nbb_n}\|\bx_i\|_2}{\sqrt{2nc}}.
    $$
    Furthermore, for any $W$ with $\|W\|_{S_p}\leq\Lambda$ we have
    $
      \|W\|_{S_2}\leq \min\{c,d\}^{\frac{1}{2}-\frac{1}{p}}\Lambda,
    $
    which, combined with \eqref{rademacher-independent-shatter<2b}, implies that
    $$
      \frak{R}_{nc}(\widetilde{H}_{S_p}) \leq  \max_{V^i\in\widetilde{S}:i\in\nbb_{nc}}
      \frac{1}{nc}\ebb_\epsilon\sup_{\|W\|_{S_2}\leq \Lambda \min\{c,d\}^{\frac{1}{2}-\frac{1}{p}}}\sum_{i=1}^{nc}\epsilon_i\inn{W,V^i}
       \leq \frac{\Lambda \max_{i\in\nbb_n}\|\bx_i\|_2\min\{c,d\}^{\frac{1}{2}-\frac{1}{p}}}{\sqrt{nc}}.
    $$
  The proof is complete.

\section{Proof of Proposition \ref{prop:FW}\label{sec:proof-fw}}
  It suffices to check $\|\bw^*\|_{2,p}\leq 1$ and $\langle\bw^*,\bv\rangle=-\|\bv\|_{2,p^*}$. We consider three cases.

  If $p=1$, it is clear that $\|\bw^*\|_{2,1}\leq1$ and $\langle\bw^*,\bv\rangle=-\|\bv\|_{2,\infty}$.

  If $p=\infty$, it is clear that $\|\bw^*\|_{2,\infty}\leq1$ and $\langle\bw^*,\bv\rangle=-\sum_{j=1}^{c}\|\bv_j\|_2=-\|\bv\|_{2,1}$.

  If $1<p<\infty$, it is clear that
  $$
    \|\bw^*\|_{2,p}=\big(\sum_{\tilde{j}=1}^{c}\|\bv_{\tilde{j}}\|_2^{(p^*-1)p}\big)^{\frac{1}{p}}/\big(\sum_{\tilde{j}=1}^{c}\|\bv_{\tilde{j}}\|_2^{p^*}\big)^{\frac{1}{p}}=1
  $$
  and
  $$
    \langle\bw^*,\bv\rangle=-\big(\sum_{\tilde{j}=1}^{c}\|\bv_{\tilde{j}}\|_2^{p^*}\big)^{-\frac{1}{p}}\sum_{\tilde{j}=1}^{c}\|\bv_{\tilde{j}}\|_2^{p^*}=-\|\bv\|_{2,p^*}.
  $$
  The proof is complete.

\small
\bibliographystyle{abbrvnat}
\setlength{\bibsep}{0.08cm}
\vskip 0.2in

\end{document}